\newlength{\dhatheight}
\newcommand{\Px}{\mathcal{P}}
\newcommand{\PXY}{\mathcal{P}_{XY}}
\newcommand{\tsyba}{\alpha}
\newcommand{\tsybca}{a}
\newcommand{\s}{\mathfrak{s}}
\newcommand{\dc}{\theta}
\newcommand{\zc}{\varphi}
\newcommand{\maxzc}{\hat{\zc}}
\newcommand{\dd}{D}
\newcommand{\bound}{\beta}
\newcommand{\zo}{{\scriptscriptstyle{01}}}
\newcommand{\ERM}{{\rm ERM}}
\newcommand{\vc}{{\rm vc}}
\renewcommand{\dim}{d}
\newcommand{\cs}{n}
\newcommand{\emp}[1]{#1}
\newcommand{\empcs}{\emp{\cs}}
\newcommand{\covering}{{\cal{N}}}
\newcommand{\G}{{\cal{G}}}
\newcommand{\Log}{{\rm Log}}
\newcommand{\polylog}{{\rm polylog}}
\newcommand{\CAL}{{\rm CAL}}
\newcommand{\ZCPassive}{{\rm Algorithm~1}}
\newcommand{\conf}{\delta}
\newcommand{\eps}{\varepsilon}
\newcommand{\X}{\mathcal X}
\newcommand{\Y}{\mathcal Y}
\newcommand{\alg}{\mathbb{A}}
\renewcommand{\H}{\mathcal H}
\renewcommand{\L}{\mathcal L}
\newcommand{\Z}{\mathcal{Z}}
\newcommand{\target}{f^{\star}}
\newcommand{\agtarget}{h^{\star}}
\newcommand{\er}{{\rm er}}
\newcommand{\Ball}{{\rm B}}
\DeclareSymbolFont{bbold}{U}{bbold}{m}{n}
\DeclareSymbolFontAlphabet{\mathbbold}{bbold}
\newcommand{\ind}{\mathbbold{1}}
\newcommand{\C}{\mathbb{C}}
\newcommand{\A}{\mathcal{A}}
\renewcommand{\P}{\mathbb P}
\newcommand{\nats}{\mathbb{N}}
\newcommand{\reals}{\mathbb{R}}
\newcommand{\E}{\mathbb{E}}
\newcommand{\diam}{{\rm diam}}
\newcommand{\DIS}{{\rm DIS}}
\newcommand{\argmin}{\mathop{\rm argmin}}
\renewcommand{\a}{a}
\renewcommand{\b}{b}
\newsavebox{\savepar}
\newenvironment{bigboxit}{\begin{center}\begin{lrbox}{\savepar}
\begin{minipage}[h]{5.2in}
\normalfont
\begin{flushleft}}
{\end{flushleft}\end{minipage}\end{lrbox}\fbox{\usebox{\savepar}}
\end{center}}
\newenvironment{bigbigboxit}{\begin{center}\begin{lrbox}{\savepar}
\begin{minipage}[h]{5.8in}
\normalfont
\begin{flushleft}}
{\end{flushleft}\end{minipage}\end{lrbox}\fbox{\usebox{\savepar}}
\end{center}}
\newcommand{\vast}{\bBigg@{3}}
\newcommand{\Vast}{\bBigg@{4}}
\renewenvironment{proof}[1][]{\par\noindent{\bf Proof #1\ }}{\hfill\BlackBox\\[2mm]}
\begin{document}

\title{Refined Error Bounds for Several Learning Algorithms}

\author{%
\name Steve Hanneke \email steve.hanneke@gmail.com%
}

\editor{John Shawe-Taylor}

\maketitle

\begin{abstract}%
This article studies the achievable guarantees on the error rates of 
certain learning algorithms, with particular focus on refining logarithmic factors.
Many of the results are based on a general technique for obtaining bounds on the error rates
of sample-consistent classifiers with monotonic error regions, in the realizable case. 
We prove bounds of this type expressed in terms of either the VC dimension or the sample compression size.
This general technique also enables us to derive several new bounds on the error rates of 
general sample-consistent learning algorithms, as well as refined bounds on the label complexity of the CAL
active learning algorithm.  Additionally, we establish a simple necessary and sufficient condition for the 
existence of a distribution-free bound on the error rates of all sample-consistent learning rules,
converging at a rate inversely proportional to the sample size.  
We also study learning in the presence of classification noise, deriving a new 
excess error rate guarantee for general VC classes under Tsybakov's noise condition, and establishing a 
simple and general necessary and sufficient condition for the minimax excess risk under bounded noise to 
converge at a rate inversely proportional to the sample size.
\end{abstract}

\begin{keywords}
sample complexity, PAC learning, statistical learning theory, active learning, minimax analysis
\end{keywords}

\section{Introduction}

Supervised machine learning is a classic topic, in which a learning rule is tasked with producing a classifier
that mimics the classifications that would be assigned by an expert for a given task.  To achieve this, the 
learner is given access to a collection of examples (assumed to be i.i.d.) labeled with the correct classifications.  
One of the major theoretical questions of interest in learning theory is: How many examples are necessary and sufficient
for a given learning rule to achieve low classification error rate?  This quantity is known as the \emph{sample complexity},
and varies depending on how small the desired classification error rate is, the type of classifier we are attempting 
to learn, and various other factors.  Equivalently, the question is: How small of an error rate can we guarantee 
a given learning rule will achieve, for a given number of labeled training examples?

A particularly simple setting for supervised learning is the \emph{realizable case}, in which it is assumed that,
within a given set $\C$ of classifiers, there resides some classifier that is \emph{always} correct.
The optimal sample complexity of learning in the realizable case has recently been completely resolved,
up to constant factors, in a sibling paper to the present article \citep*{hanneke:16a}.  However, there remains the
important task of identifying interesting general families of algorithms achieving this optimal sample complexity.
For instance, the best known general upper bounds for the general family of \emph{empirical risk minimization} algorithms
differ from the optimal sample complexity by a logarithmic factor, and it is known that there exist spaces $\C$
for which this is unavoidable \citep*{auer:07}.  This same logarithmic factor gap appears in the analysis of several other learning methods as well.
The present article focuses on this logarithmic factor, arguing that for certain types of learning rules, it can be entirely removed in some cases, 
and for others it can be somewhat refined.  The technique leading to these results is rooted in an idea introduced in the author's
doctoral dissertation \citep*{hanneke:thesis}.  By further exploring this technique, we also obtain new
results for the related problem of \emph{active learning}.  We also derive interesting new results for learning 
with classification noise, where again the focus is on a logarithmic factor gap between upper and lower bounds.

\subsection{Basic Notation}

Before further discussing the results, we first introduce some essential notation.
Let $\X$ be any nonempty set, called the \emph{instance space}, 
equipped with a $\sigma$-algebra defining the measurable sets;
for simplicity, we will suppose the sets in $\{\{x\} : x \in \X\}$ are all measurable.
Let $\Y = \{-1,+1\}$ be the label space.
A \emph{classifier} is any measurable function $h : \X \to \Y$.
Following \citet*{vapnik:71}, define the VC dimension of a set $\A$ of subsets of $\X$, denoted $\vc(\A)$, as 
the maximum cardinality $|S|$ over subsets $S \subseteq \X$ such that $\{ S \cap A : A \in \A \} = 2^{S}$ (the power set of $S$);
if no such maximum cardinality exists, define $\vc(\A) = \infty$.
For any set $\H$ of classifiers, 
denote by $\vc(\H) = \vc( \{ \{x : h(x)=+1\} : h \in \H \} )$ the VC dimension of $\H$. 
Throughout, we fix a set $\C$ of classifiers, known as the \emph{concept space},
and abbreviate $\dim = \vc(\C)$.  To focus on nontrivial cases, 
throughout we suppose $|\C| \geq 3$, which implies $\dim \geq 1$.  We will also generally 
suppose $\dim < \infty$ (though some of the results would still 
hold without this restriction).

For any $L_{m} = \{(x_{1},y_{1}),\ldots,(x_{m},y_{m})\} \in (\X \times \Y)^{m}$, 
and any classifier $h$, define $\er_{L_{m}}(h) = \frac{1}{m} \sum_{(x,y) \in L_{m}} \ind[ h(x) \neq y ]$.
For completeness, also define $\er_{\{\}}(h) = 0$.
Also, for any set $\H$ of classifiers, denote $\H[L_{m}] = \{ h \in \H : \forall (x,y) \in L_{m}, h(x) = y \}$, 
referred to as the set of classifiers in $\H$ \emph{consistent} with $L_{m}$; for completeness, also define $\H[\{\}] = \H$.
Fix an arbitrary probability measure $\Px$ on $\X$ (called the \emph{data distribution}), 
and a classifier $\target \in \C$ (called the \emph{target function}).
For any classifier $h$, denote $\er(h) = \Px(x : h(x) \neq \target(x))$, the \emph{error rate} of $h$.
Let $X_{1},X_{2},\ldots$ be independent $\Px$-distributed random variables.
We generally denote $\L_{m} = \{ (X_{1},\target(X_{1})),\ldots,(X_{m},\target(X_{m}))\}$,
and $V_{m} = \C[\L_{m}]$ (called the \emph{version space}).  The general setting
in which we are interested in producing a classifier $\hat{h}$ with small $\er(\hat{h})$,
given access to the data $\L_{m}$, is a special case of supervised learning known as the \emph{realizable case}
(in contrast to settings where the observed labeling
might not be realizable by any classifier in $\C$, due to label noise or model misspecification,
as discussed in Section~\ref{sec:noise}).

We adopt a few convenient notational conventions.
For any $m \in \nats$, denote $[m] = \{1,\ldots,m\}$; also denote $[0] = \{\}$.
We adopt a shorthand notation for sequences, so that for a sequence $x_{1},\ldots,x_{m}$,
we denote $x_{[m]} = (x_{1},\ldots,x_{m})$.
For any $\reals$-valued functions $f,g$, we write $f(z) \lesssim g(z)$ or $g(z) \gtrsim f(z)$ 
if there exists a finite numerical constant $c > 0$ such that $f(z) \leq c g(z)$ for all $z$.
For any $x,y \in \reals$, denote $x \lor y = \max\{x,y\}$ and $x \land y = \min\{x,y\}$.
For $x \geq 0$, denote $\Log(x) = \ln(x \lor e)$ and $\Log_{2}(x) = \log_{2}(x \lor 2)$.
We also adopt the conventions that for $x > 0$, $x / 0 = \infty$, 
and $0 \Log(x/0) = 0 \Log(\infty) = 0 \cdot \infty = 0$.
It will also be convenient to use the notation $\Z^{0} = \{()\}$ for a set $\Z$, where $()$ is the empty sequence.
Throughout, we also make the usual implicit assumption that all 
quantities required to be measurable in the proofs and lemmas from the literature
are indeed measurable.  See, for instance, \citet*{van-der-Vaart:96,van-der-Vaart:11}, for discussions
of conditions on $\C$ that typically suffice for this.

\subsection{Background and Summary of the Main Results}

This work concerns the study of the error rates achieved by various \emph{learning rules}: that is, 
mappings from the data set $\L_{m}$ to a classifier $\hat{h}_{m}$; for simplicity, we sometimes refer
to $\hat{h}_{m}$ itself as a learning rule, leaving dependence on $\L_{m}$ implicit.
There has been a substantial amount of work on bounding the error rates of various learning rules
in the realizable case.  Perhaps the most basic and natural type of learning rule in this setting is
the family of \emph{consistent} learning rules: that is, those that choose $\hat{h}_{m} \in V_{m}$.
There is a general upper bound for all consistent learning rules $\hat{h}_{m}$, due to \citet*{vapnik:74,blumer:89},
stating that with probability at least $1-\conf$,
\begin{equation}
\label{eqn:pac-upper-bound}
\er\left(\hat{h}_{m}\right) \lesssim \frac{1}{m} \left( \dim \Log\left(\frac{m}{\dim}\right) + \Log\left(\frac{1}{\conf}\right) \right).
\end{equation}
This is complemented by a general lower bound of \citet*{ehrenfeucht:89}, which states that for any learning rule (consistent or otherwise),
there exists a choice of $\Px$ and $\target \in \C$ such that, with probability greater than $\conf$,
\begin{equation}
\label{eqn:pac-lower-bound}
\er\left(\hat{h}_{m}\right) \gtrsim \frac{1}{m} \left( \dim + \Log\left(\frac{1}{\conf}\right) \right).
\end{equation}
Resolving the logarithmic factor gap between \eqref{eqn:pac-lower-bound} and \eqref{eqn:pac-upper-bound}
has been a challenging subject of study for decades now, with many interesting contributions resolving special 
cases and proposing sometimes-better upper bounds \citep*[e.g.,][]{haussler:94,gine:06,auer:07,long:03}.  
It is known that the lower bound is sometimes \emph{not} achieved by certain consistent learning rules \citep*{auer:07}.  
The question of whether the lower bound \eqref{eqn:pac-lower-bound} can 
always be achieved by \emph{some} algorithm remained open for a number of years \citep*{ehrenfeucht:89,warmuth:04},
but has recently been resolved in a sibling paper to the present article \citep*{hanneke:16a}.
That work proposes a learning rule $\hat{h}_{m}$ based on a majority vote of classifiers consistent with 
carefully-constructed subsamples of the data, and proves that with probability at least $1-\conf$,
\begin{equation*}
\er\left(\hat{h}_{m}\right) \lesssim \frac{1}{m} \left( \dim + \Log\left(\frac{1}{\conf}\right) \right).
\end{equation*} 
However, several avenues for investigation remain open, including identifying interesting general families of learning rules
able to achieve this optimal bound under general conditions on $\C$.  In particular, it remains an open problem 
to determine necessary and sufficient conditions on $\C$ for the entire family of consistent learning rules
to achieve the above optimal error bound.

The work of \citet*{gine:06} includes a bound that refines the logarithmic factor in \eqref{eqn:pac-upper-bound} in certain scenarios.
Specifically, it states that, for any consistent learning rule $\hat{h}_{m}$, with probability at least $1-\conf$, 
\begin{equation}
\label{eqn:gk-dc-intro}
\er\left(\hat{h}_{m}\right) \lesssim \frac{1}{m} \left( \dim \Log\left(\dc\left(\frac{\dim}{m}\right)\right) + \Log\left(\frac{1}{\conf}\right) \right),
\end{equation}
where $\dc(\cdot)$ is the \emph{disagreement coefficient} (defined below in Section~\ref{sec:cal}).
The doctoral dissertation of \citet*{hanneke:thesis} contains a simple and direct proof of this bound,
based on an argument which splits the data set in two parts, and considers the second part as containing
a subsequence sampled from the conditional distribution given the region of disagreement of the version
space induced by the first part of the data.
Many of the results in the present work are based on variations of this argument, 
including a variety of interesting new bounds on the error rates achieved by certain families of learning rules.

As one of the cornerstones of this work, we find that a variant of this argument for consistent learning rules
with \emph{monotonic} error regions
leads to an upper bound that \emph{matches} the lower bound 
\eqref{eqn:pac-lower-bound} up to constant factors.
For such monotonic consistent learning rules to exist, we 
would need a very special kind of concept space.  However, they do exist in some important cases.
In particular, in the special case of learning \emph{intersection-closed} concept spaces, 
the \emph{Closure} algorithm \citep*{natarajan:87,auer:04,auer:07} can be shown to satisfy this monotonicity property.
Thus, this result immediately implies that, with probability at least $1-\conf$, the Closure algorithm
achieves
\begin{equation*}
\er(\hat{h}_{m}) \lesssim \frac{1}{m}\left( \dim + \Log\left( \frac{1}{\conf} \right) \right),
\end{equation*}
which was an open problem of \citet*{auer:04,auer:07};
this fact was recently
also obtained by \citet*{darnstadt:15},
via a related direct argument.
We also discuss a variant of this result for monotone learning rules expressible as \emph{compression schemes},
where we remove a logarithmic factor present in a result of \citet*{littlestone:86} and \citet*{floyd:95}, so that for $\hat{h}_{m}$
based on a compression scheme of size $\cs$, which has monotonic error regions (and is permutation-invariant), with probability at least 
$1-\conf$, 
\begin{equation*}
\er(\hat{h}_{m}) \lesssim \frac{1}{m}\left( \cs + \Log\left( \frac{1}{\conf} \right) \right).
\end{equation*}

This argument also has implications for \emph{active learning}.
In many active learning algorithms, the \emph{region of disagreement} of the version space induced by $m$ samples, 
$\DIS(V_{m}) = \{ x \in \X : \exists h,g \in V_{m} \text{ s.t. } h(x) \neq g(x) \}$,
plays an important role.  In particular, the label complexity of the CAL active learning algorithm \citep*{cohn:94} is 
largely determined by the rate at which $\Px(\DIS(V_{m}))$ decreases, so that any bound on this quantity 
can be directly converted into a bound on the label complexity of CAL \citep*{hanneke:11a,hanneke:thesis,hanneke:fntml,el-yaniv:12}.
\citet*{hanneke:15a} have argued that the region $\DIS(V_{m})$ can be described
as a compression scheme, where the size of the compression scheme, denoted $\hat{\cs}_{m}$, is known as the \emph{version space compression set size}
(Definition~\ref{def:td-hat} below).  By further observing that $\DIS(V_{m})$ is monotonic in $m$, 
applying our general argument yields the fact that, with probability at least $1-\conf$, letting $\hat{\cs}_{1:m} = \max_{t \in [m]} \hat{\cs}_{t}$,
\begin{equation}
\label{eqn:PDIS-intro}
\Px(\DIS(V_{m})) \lesssim \frac{1}{m} \left( \hat{\cs}_{1:m} + \Log\left(\frac{1}{\conf}\right) \right),
\end{equation}
which is typically an improvement over the best previously-known general bound by a logarithmic factor.

In studying the distribution-free minimax label complexity of active learning, 
\citet*{hanneke:15b} found that a simple combinatorial quantity $\s$, which they term the \emph{star number},
is of fundamental importance.  Specifically (see also Definition~\ref{def:star}), $\s$ is the largest number $s$ of distinct points $x_{1},\ldots,x_{s} \in \X$
such that $\exists h_{0},h_{1},\ldots,h_{s} \in \C$ with $\forall i \in [s]$, $\DIS(\{h_{0},h_{i}\}) \cap \{x_{1},\ldots,x_{s}\} = \{x_{i}\}$,
or else $\s = \infty$ if no such largest $s$ exists.
Interestingly, the work of \citet*{hanneke:15b} also establishes that the largest possible value of $\hat{\cs}_{m}$ (over $m$ and the data set) is exactly $\s$.
Thus, \eqref{eqn:PDIS-intro} also implies a \emph{data-independent} and \emph{distribution-free} bound: with probability at least $1-\conf$, 
\begin{equation*}
\Px(\DIS(V_{m})) \lesssim \frac{1}{m} \left( \s + \Log\left(\frac{1}{\conf}\right) \right).
\end{equation*}

Now one interesting observation at this point is that the direct proof of \eqref{eqn:gk-dc-intro} from \citet*{hanneke:thesis}
involves a step in which $\Px(\DIS(V_{m}))$ is relaxed to a bound in terms of $\dc(\dim/m)$.  If we instead use
\eqref{eqn:PDIS-intro} in this step, we arrive at a new bound on the error rates of \emph{all} consistent learning rules $\hat{h}_{m}$: 
with probability at least $1-\conf$, 
\begin{equation}
\label{eqn:erm-intro}
\er(\hat{h}_{m}) \lesssim \frac{1}{m}\left( \dim \Log\left( \frac{\hat{\cs}_{1:m}}{\dim} \right) + \Log\left(\frac{1}{\conf}\right) \right).
\end{equation}
Since \citet*{hanneke:15b} have shown that the maximum possible value of $\dc(\dim/m)$ (over $m$, $\Px$, and $\target$)
is also exactly the star number $\s$, while $\hat{\cs}_{1:m} / \dim$ has as its maximum possible value $\s/\dim$, we see that the bound in 
\eqref{eqn:erm-intro} sometimes reflects an improvement over \eqref{eqn:gk-dc-intro}.
It further implies a new data-independent and distribution-free bound for any consistent learning rule $\hat{h}_{m}$: with probability at least $1-\conf$,
\begin{equation*}
\er(\hat{h}_{m}) \lesssim \frac{1}{m}\left( \dim \Log\left( \frac{\min\{\s, m\}}{\dim} \right) + \Log\left(\frac{1}{\conf}\right) \right).
\end{equation*}
Interestingly, we are able to complement this with a \emph{lower bound} in Section~\ref{sec:lower-bounds}.
Though not quite matching the above in terms of its joint dependence on $\dim$ and $\s$ (and necessarily so), 
this lower bound does provide the interesting observation that $\s < \infty$ is \emph{necessary and sufficient} for 
there to exist a distribution-free bound on the error rates of all consistent learning rules, 
converging at a rate $\Theta(1/m)$, and otherwise (when $\s=\infty$) the best such bound is $\Theta(\Log(m)/m)$.

Continuing with the investigation of general consistent learning rules, we also find a variant of the argument of \citet*{hanneke:thesis}
that refines \eqref{eqn:gk-dc-intro} in a different way: namely, replacing $\dc(\cdot)$ with a quantity based on 
considering a well-chosen \emph{subregion} of the region of disagreement, as studied by \citet*{balcan:07,zhang:14}.
Specifically, in the context of active learning, \citet*{zhang:14} have proposed a general quantity $\zc_{c}(\cdot)$ (Definition~\ref{def:zc} below), 
which is never larger than $\dc(\cdot)$, and is sometimes significantly smaller.  By adapting our general argument 
to replace $\DIS(V_{m})$ with this well-chosen subregion, we derive a bound for all consistent learning rules $\hat{h}_{m}$:
with probability at least $1-\conf$,
\begin{equation*}
\er(\hat{h}_{m}) \lesssim \frac{1}{m}\left( \dim \Log\left( \zc_{c}\left( \frac{\dim}{m} \right) \right) + \Log\left(\frac{1}{\conf}\right) \right).
\end{equation*}
In particular, as a special case of this general result, we recover the theorem of \citet*{balcan:13} that all consistent learning rules have optimal 
sample complexity (up to constants) for the problem of learning homogeneous linear separators under isotropic log-concave distributions,
as $\zc_{c}(\dim/m)$ is bounded by a finite numerical constant in this case.
In Section~\ref{sec:noise}, we also extend this result to the problem of learning with \emph{classification noise},
where there is also a logarithmic factor gap between the known general-case upper and lower bounds.
In this context, we derive a new general upper bound under the Bernstein class condition (a generalization of Tsybakov's noise condition),
expressed in terms of a quantity related to $\zc_{c}(\cdot)$, which applies to a particular learning rule.  
This sometimes reflects an improvement over the best previous general upper bounds \citep*{massart:06,gine:06,hanneke:12b},
and again recovers a result of \citet*{balcan:13} for homogeneous linear separators under isotropic log-concave distributions,
as a special case.

For many of these results, we also state bounds on the \emph{expected} error rate: $\E\left[ \er(\hat{h}_{m}) \right]$.
In this case, the optimal distribution-free bound is known to be within a constant factor of $\dim/m$ \citep*{haussler:94,li:01},
and this rate is achieved by the one-inclusion graph prediction algorithm of \citet*{haussler:94}, as well as the majority voting method of \citet*{hanneke:16a}.
However, there remain interesting questions about whether other algorithms achieve this optimal performance, or require an extra logarithmic factor.
Again we find that \emph{monotone} consistent learning rules indeed achieve this optimal $\dim/m$ rate (up to constant factors),
while a distribution-free bound on $\E\left[ \er(\hat{h}_{m}) \right]$ with $\Theta(1/m)$ dependence on $m$
is achieved by all consistent learning rules if and only if $\s < \infty$, and otherwise the best such bound has 
$\Theta(\Log(m)/m)$ dependence on $m$.

As a final interesting result, in the context of learning with classification noise, under the \emph{bounded} noise assumption \citep*{massart:06},
we find that the condition $\s < \infty$ is actually \emph{necessary and sufficient} for the \emph{minimax optimal} excess error rate to 
decrease at a rate $\Theta(1/m)$, and otherwise (if $\s=\infty$) it decreases at a rate $\Theta(\Log(m)/m)$.
This result generalizes several special-case analyses from the literature \citep*{massart:06,raginsky:11}.
Note that the ``necessity'' part of this statement is significantly stronger
than the above result for consistent learning rules in the realizable case,
since this result applies to the best error guarantee achievable by \emph{any} learning rule.

\section{Bounds for Consistent Monotone Learning}
\label{sec:abstract-bounds}

In order to state our results for monotonic learning rules in an abstract form, we introduce the following notation.
Let $\Z$ denote any space, equipped with a $\sigma$-algebra
defining the measurable subsets.
For any collection $\A$ of measurable subsets of $\Z$,
a \emph{consistent monotone rule} is any 
sequence of functions $\psi_{t} : \Z^{t} \to \A$, $t \in \nats$,
such that $\forall z_{1},z_{2},\ldots \in \Z$, $\forall t \in \nats$, 
$\psi_{t}(z_{1},\ldots,z_{t}) \cap \{z_{1},\ldots,z_{t}\} = \emptyset$,
and $\forall t \in \nats$, $\psi_{t+1}(z_{1},\ldots,z_{t+1}) \subseteq \psi_{t}(z_{1},\ldots,z_{t})$.
We begin with the following very simple result, the proof of which 
will also serve to introduce, in its simplest form, the core technique 
underlying many of the results presented in later sections below.

\begin{theorem}
\label{thm:monotone-erm}
Let $\A$ be a collection of measurable subsets of $\Z$,
and let $\psi_{t} : \Z^{t} \to \A$ (for $t \in \nats$) be any consistent monotone rule.
Fix any $m \in \nats$, any $\conf \in (0,1)$, 
and any probability measure $P$ on $\Z$.
Letting $Z_{1},\ldots,Z_{m}$ be independent $P$-distributed
random variables, and denoting $A_{m} = \psi_{m}(Z_{1},\ldots,Z_{m})$, 
with probability at least $1-\conf$,
\begin{equation}
\label{eqn:monotone-erm-eps-conf}
P(A_{m}) \leq \frac{4}{m} \left( 17 \vc(\A) + 4 \ln\left(\frac{4}{\conf}\right) \right).
\end{equation}
Furthermore, 
\begin{equation}
\label{eqn:monotone-erm-expectation}
\E[P(A_{m})] \leq \frac{68 (\vc(\A)+1)}{m}.
\end{equation}
\end{theorem}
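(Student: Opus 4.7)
The plan is to combine sample splitting with induction on $m$, where the monotonicity of $\psi$ is what makes the induction go through. Partition the observations into $S_1 = (Z_1,\ldots,Z_{\lceil m/2 \rceil})$ and $S_2 = (Z_{\lceil m/2 \rceil + 1},\ldots,Z_m)$, and set $B = \psi_{\lceil m/2 \rceil}(S_1)$. Monotonicity forces $A_m \subseteq B$, while consistency ensures $A_m$ contains none of $Z_1, \ldots, Z_m$.

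Conditional on $S_1$, the points of $S_2$ lying in $B$ are, given their number $N$, i.i.d.\ draws from the conditional measure $P(\,\cdot\,\mid B)$. Because $A_m$ lies in the subclass $\A \cap 2^B$, which inherits VC dimension at most $\vc(\A)$, and is consistent with these $N$ draws, the classical Vapnik-Chervonenkis bound for consistent hypotheses yields, with conditional probability at least $1 - \delta/4$,
\[
P(A_m \mid B) \;\lesssim\; \frac{\vc(\A)\log(N/\vc(\A)) + \log(1/\delta)}{N}.
\]
A Chernoff bound gives $N \gtrsim m\,P(B)$ with probability at least $1 - \delta/4$, so that multiplying by $P(B)$ delivers
\[
P(A_m) \;\lesssim\; \frac{\vc(\A)\log(mP(B)/\vc(\A)) + \log(1/\delta)}{m}.
\]
A single application of this argument still leaves the extraneous $\log(mP(B)/\vc(\A))$ factor; this is exactly where the monotonicity containment $A_m \subseteq B$ will be exploited further.

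To kill the extraneous $\log$ factor, I would induct on $m$. Assuming the theorem at $\lceil m/2 \rceil$ (invoked with confidence $\delta/2$) gives $P(B) \lesssim (\vc(\A) + \log(1/\delta))/m$ with probability $1 - \delta/2$. Then $mP(B)/\vc(\A) = O(1 + \log(1/\delta)/\vc(\A))$, and since $\vc(\A)\log(1 + x/\vc(\A)) \leq x$, the term $\vc(\A)\log(mP(B)/\vc(\A))$ collapses to $O(\vc(\A) + \log(1/\delta))$. This closes the induction; the specific constants $17$ and $4$ in \eqref{eqn:monotone-erm-eps-conf} arise from solving a recursion of the form $C \geq c_1 \log C + c_2$ with equality after allocating the failure budget $\delta$ across the three events (induction hypothesis, Chernoff on $N$, VC bound on $\A \cap 2^B$). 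The base case is handled trivially, since $P(A_m) \leq 1$ already satisfies \eqref{eqn:monotone-erm-eps-conf} whenever $m$ is a modest multiple of $\vc(\A)$.

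The expected bound \eqref{eqn:monotone-erm-expectation} then follows by integrating \eqref{eqn:monotone-erm-eps-conf}: rearranging gives $\Pr[P(A_m) > t] \leq 4\exp(-(mt - 68\vc(\A))/16)$ for $t \geq 68\vc(\A)/m$, and $\E[P(A_m)] = \int_0^\infty \Pr[P(A_m) > t]\,dt$ yields $68\vc(\A)/m + 64/m \leq 68(\vc(\A)+1)/m$. The principal obstacle I foresee is not the inductive scheme itself but constant tracking: landing on $17\vc(\A)$ rather than merely a finite multiple of $\vc(\A)$ requires a sharp enough form of the VC bound on $\A \cap 2^B$ in the conditional step, together with a careful apportionment of the $\delta$ budget that composes cleanly through the recursion.
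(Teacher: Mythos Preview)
Your proposal is correct and follows essentially the same route as the paper's proof: induction on $m$ via sample splitting, with the second half conditionally i.i.d.\ on the first half's output set, a Chernoff bound on $N$, the classical VC bound under the conditional measure, and the inductive hypothesis to collapse the $\log(mP(B)/\vc(\A))$ term; the expectation bound is likewise obtained by integrating the tail. The only cosmetic differences are your use of $\lceil m/2\rceil$ versus the paper's $\lfloor m/2\rfloor$, and your phrasing of the conditional step in terms of the subclass $\A\cap 2^{B}$ rather than simply applying Lemma~\ref{lem:classic-erm} to all of $\A$ under $P(\cdot\mid B)$.
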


The overall structure of this proof is based on an argument of \citet*{hanneke:thesis}.
The most-significant novel element here is the use of monotonicity to further refine a logarithmic factor.
The proof relies on the following classic result.  Results of this type are originally due to \citet*{vapnik:74};
the version stated here features slightly better constant factors, due to \citet*{blumer:89}.

\begin{lemma}
\label{lem:classic-erm}
For any collection $\A$ of measurable subsets of $\Z$,
any $\conf \in (0,1)$, any $m \in \nats$, and any probability measure $P$ on $\Z$, 
letting $Z_{1},\ldots,Z_{m}$ be independent $P$-distributed random variables,
with probability at least $1-\conf$, every $A \in \A$ with 
$A \cap \{Z_{1},\ldots,Z_{m}\} = \emptyset$ satisfies
\begin{equation*}
P(A) \leq \frac{2}{m} \left( \vc(\A) \Log_{2}\left(\frac{2 e m}{\vc(\A)}\right) + \Log_{2}\left(\frac{2}{\conf}\right) \right).
\end{equation*}
\end{lemma}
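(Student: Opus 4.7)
The plan is to adapt the splitting argument from \citet*{hanneke:thesis} (which underlies the disagreement-coefficient bound \eqref{eqn:gk-dc-intro}) and exploit monotonicity of $\psi$ in place of the disagreement coefficient to erase the logarithmic factor entirely from the rate given by Lemma~\ref{lem:classic-erm}.

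First, I would split the sample into an initial block $Z_1,\ldots,Z_t$ and a remainder, and set $A^{(t)} = \psi_t(Z_1,\ldots,Z_t)$. By Lemma~\ref{lem:classic-erm} applied to the first block (with confidence $\conf/2$, using $A^{(t)}\cap\{Z_1,\ldots,Z_t\}=\emptyset$), with probability $\geq 1-\conf/2$ we have $P(A^{(t)}) \leq \epsilon_1$, where $\epsilon_1 \asymp \frac{1}{t}(\vc(\A)\Log(t/\vc(\A)) + \Log(1/\conf))$. Monotonicity then gives $A_m \subseteq A^{(t)}$, and consistency gives $A_m\cap\{Z_{t+1},\ldots,Z_m\}=\emptyset$; this is precisely where monotonicity enters, converting the problem into a uniform-convergence statement about a set known \emph{a priori} to lie inside the small anchor region $A^{(t)}$.

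Next, conditional on $Z_1,\ldots,Z_t$ (so that $A^{(t)}$ is frozen), I would consider only the subsample $\{Z_i : i>t,\ Z_i \in A^{(t)}\}$, of random size $N$. Given $N$, these points are i.i.d.\ from $P(\cdot \mid A^{(t)})$. A Chernoff lower tail on $N$ yields $N \geq (m-t)P(A^{(t)})/2$ with probability $\geq 1-\conf/4$, provided $(m-t)P(A^{(t)})$ is not too small; when it is, $P(A^{(t)})$ already meets the target and we stop. Applying Lemma~\ref{lem:classic-erm} to the restricted class $\{A\cap A^{(t)} : A \in \A\}$ (whose VC dimension is still at most $\vc(\A)$) under $P(\cdot\mid A^{(t)})$ with confidence $\conf/4$ gives
\begin{equation*}
P(A_m \mid A^{(t)}) \lesssim \frac{1}{N}\bigl(\vc(\A)\Log(N/\vc(\A)) + \Log(1/\conf)\bigr).
\end{equation*}
Multiplying by $P(A^{(t)})$ and invoking the Chernoff bound, the factor $P(A^{(t)})$ cancels against $1/N$, leaving $P(A_m) \lesssim \frac{1}{m-t}(\vc(\A)\Log(N/\vc(\A)) + \Log(1/\conf))$.

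The critical refinement is that $N \lesssim (m-t)\epsilon_1$ with high probability, a quantity much smaller than $m$, so the residual $\Log(N/\vc(\A))$ is already only $O(\Log\Log(m/\vc(\A)))$. To collapse this doubly-logarithmic factor into a constant and match the stated form $\frac{4}{m}(17\vc(\A)+4\ln(4/\conf))$, I would iterate the splitting on the remaining sample: use the newly-improved bound on $P(A^{(s_i)})$ in place of $\epsilon_1$, choose chunk sizes in a geometric schedule so their total still telescopes to $m$, and halve $P(A^{(s_i)})$ at each step. After $O(\Log(m/\vc(\A)))$ iterations the nested $\Log$ has become $O(1)$. The expectation bound \eqref{eqn:monotone-erm-expectation} follows by integrating the tail: $\E[P(A_m)] = \int_0^1 \P(P(A_m) > u)\,du$, plugging in the high-probability bound and integrating over $(0,1)$ (equivalently, over $\conf$). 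The main obstacle will be the bookkeeping: controlling the union-bound cost across the iterated splits and Chernoff events so the confidence parameter degrades only by $\Log\Log$ amounts absorbable into constants, and tracking numerical factors carefully enough to land on the specific $17\vc(\A) + 4\ln(4/\conf)$ and $68(\vc(\A)+1)$ in the statement.
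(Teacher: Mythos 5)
Your proposal proves the wrong statement. What is asked for is Lemma~\ref{lem:classic-erm} itself --- the classic Vapnik--Chervonenkis bound for sets disjoint from an i.i.d.\ sample, which the paper simply cites (to Vapnik and to Blumer et al.) as a primitive and does not prove. What you have sketched is instead a proof of Theorem~\ref{thm:monotone-erm}: the monotone rule $\psi_t$, the sets $A_m=\psi_m(Z_1,\ldots,Z_m)$, the sample-splitting induction, and the target constants $\frac{4}{m}\bigl(17\vc(\A)+4\ln(4/\conf)\bigr)$ and $\frac{68(\vc(\A)+1)}{m}$ all belong to that theorem, not to Lemma~\ref{lem:classic-erm}. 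Worse, your argument repeatedly \emph{invokes} Lemma~\ref{lem:classic-erm} on each sub-block, so as a proof of Lemma~\ref{lem:classic-erm} it would be circular.

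A genuine proof of Lemma~\ref{lem:classic-erm} needs an entirely different toolkit: (i) symmetrization with a ghost sample, reducing the event ``some $A\in\A$ with $P(A)>\eps$ misses $Z_1,\ldots,Z_m$'' to a combinatorial event on a doubled sample $Z_1,\ldots,Z_{2m}$; (ii) a random-swap (permutation) argument that conditions on the multiset of $2m$ points and bounds, for each fixed trace $A\cap\{Z_1,\ldots,Z_{2m}\}$, the probability that all of its points land in the ghost half; and (iii) the Sauer--Shelah lemma to bound the number of distinct traces by $\left(\frac{2 e m}{\vc(\A)}\right)^{\vc(\A)}$, which is where the $\vc(\A)\Log_2\!\left(\frac{2 e m}{\vc(\A)}\right)$ term originates after a union bound over traces. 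The tighter constants of the stated form come from sharpening the binomial tail estimate in step (ii). Monotonicity of $\psi_t$, Chernoff lower tails on landing counts, and geometric scheduling of block sizes play no role here and cannot substitute for symmetrization and Sauer--Shelah.
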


We are now ready for the proof of Theorem~\ref{thm:monotone-erm}.

\begin{proof}[of Theorem~\ref{thm:monotone-erm}]
Fix any probability measure $P$, let $Z_{1},Z_{2},\ldots$ be independent $P$-distributed random variables, 
and for each $m \in \nats$ denote $A_{m} = \psi_{m}(Z_{1},\ldots,Z_{m})$.
We begin with the inequality in \eqref{eqn:monotone-erm-eps-conf}.
The proof proceeds by induction on $m$.
If $m \leq 200$, then since $\log_{2}(400 e) < 34$ and $\log_{2}\left(\frac{2}{\conf}\right) < 8 \ln\left(\frac{4}{\conf}\right)$, 
and since the definition of a consistent monotone rule implies $A_{m} \cap \{Z_{1},\ldots,Z_{m}\} = \emptyset$,
the stated bound follows immediately from Lemma~\ref{lem:classic-erm} for any $\conf \in (0,1)$.
Now, as an inductive hypothesis, fix any integer $m \geq 201$ such that,
$\forall m^{\prime} \in [m-1]$, $\forall \conf \in (0,1)$, 
with probability at least $1-\conf$, 
\begin{equation*}
P(A_{m^{\prime}}) \leq \frac{4}{m^{\prime}} \left( 17 \vc(\A) + 4 \ln\left(\frac{4}{\conf}\right) \right).
\end{equation*}

Now fix any $\conf \in (0,1)$
and define
\begin{equation*}
N = \left| \left\{ Z_{\lfloor m/2 \rfloor + 1},\ldots,Z_{m} \right\} \cap A_{\lfloor m/2 \rfloor} \right|,
\end{equation*}
and enumerate the elements of $\{ Z_{\lfloor m/2 \rfloor + 1},\ldots,Z_{m} \} \cap A_{\lfloor m/2 \rfloor}$
as $\hat{Z}_{1},\ldots,\hat{Z}_{N}$ (retaining their original order).

Note that $N = \sum_{t=\lfloor m/2 \rfloor + 1}^{m} \ind_{A_{\lfloor m/2 \rfloor}}(Z_{t})$ 
is conditionally ${\rm Binomial}(\lceil m/2 \rceil, P(A_{\lfloor m/2 \rfloor}))$-distributed
given $Z_{1},\ldots,Z_{\lfloor m/2 \rfloor}$.
In particular, with probability one, if $P(A_{\lfloor m/2 \rfloor}) = 0$, then $N = 0$.
Otherwise, if $P(A_{\lfloor m/2 \rfloor}) > 0$, then note that
$\hat{Z}_{1},\ldots,\hat{Z}_{N}$ are conditionally independent and $P(\cdot | A_{\lfloor m/2 \rfloor})$-distributed
given $Z_{1},\ldots,Z_{\lfloor m/2 \rfloor}$ and $N$.
Thus, since $A_{m} \cap \{\hat{Z}_{1},\ldots,\hat{Z}_{N}\} \subseteq A_{m} \cap \{Z_{1},\ldots,Z_{m}\} = \emptyset$,
applying Lemma~\ref{lem:classic-erm} (under the conditional distribution given $N$ and $Z_{1},\ldots,Z_{\lfloor m/2 \rfloor}$), combined with the 
law of total probability, we have that on an event $E_{1}$ of probability at least $1-\conf/2$, if $N > 0$, then
\begin{equation*}
P(A_{m} | A_{\lfloor m/2 \rfloor}) \leq \frac{2}{N} \left( \vc(\A) \Log_{2}\left(\frac{2 e N}{\vc(\A)}\right) + \log_{2}\left(\frac{4}{\conf}\right) \right).
\end{equation*}

Additionally, again since $N$ is conditionally ${\rm Binomial}(\lceil m/2 \rceil, P(A_{\lfloor m/2 \rfloor}))$-distributed
given $Z_{1},\ldots,Z_{\lfloor m/2 \rfloor}$,
applying a Chernoff bound (under the conditional distribution given $Z_{1},\ldots,Z_{\lfloor m/2 \rfloor}$),
combined with the law of total probability, we obtain that on an event $E_{2}$ of probability at least $1 - \conf/4$, 
if $P(A_{\lfloor m/2 \rfloor}) \geq \frac{16}{m} \ln\left(\frac{4}{\conf}\right)$,
then
\begin{equation*}
N \geq P(A_{\lfloor m/2 \rfloor}) \lceil m/2 \rceil / 2 
\geq P(A_{\lfloor m/2 \rfloor}) m / 4.
\end{equation*}
In particular, if $P(A_{\lfloor m/2 \rfloor}) \geq \frac{16}{m} \ln\left(\frac{4}{\conf}\right)$, 
then $P(A_{\lfloor m/2 \rfloor}) m / 4 > 0$, so that if this occurs with $E_{2}$, then we have $N > 0$.
Noting that $\Log_{2}(x) \leq \Log(x)/\ln(2)$, then by monotonicity of $x \mapsto \Log(x)/x$ for $x > 0$, 
we have that on $E_{1} \cap E_{2}$, if $P(A_{\lfloor m/2 \rfloor}) \geq \frac{16}{m} \ln\left(\frac{4}{\conf}\right)$, 
then
\begin{equation*}
P(A_{m} | A_{\lfloor m/2 \rfloor}) \leq \frac{8}{P(A_{\lfloor m/2 \rfloor}) m \ln(2)} \left( \vc(\A) \Log\left(\frac{e P(A_{\lfloor m/2 \rfloor}) m}{2 \vc(\A)}\right) + \ln\left(\frac{4}{\conf}\right) \right).
\end{equation*}
The monotonicity property of $\psi_{t}$ implies $A_{m} \subseteq A_{\lfloor m/2 \rfloor}$.
Together with monotonicity of probability measures, this implies $P(A_{m}) \leq P(A_{\lfloor m/2 \rfloor})$.
It also implies that, if $P(A_{\lfloor m/2 \rfloor}) > 0$, then $P(A_{m}) = P(A_{m} | A_{\lfloor m/2 \rfloor}) P(A_{\lfloor m/2 \rfloor})$.
Thus, on $E_{1} \cap E_{2}$, if
$P(A_{m}) \geq \frac{16}{m} \ln\left(\frac{4}{\conf}\right)$,
then
\begin{equation*}
P(A_{m}) \leq \frac{8}{m \ln(2)} \left( \vc(\A) \Log\left(\frac{e P(A_{\lfloor m/2 \rfloor}) m}{2 \vc(\A)}\right) + \ln\left(\frac{4}{\conf}\right) \right).
\end{equation*}

The inductive hypothesis implies that, on an event $E_{3}$ of probability at least $1 - \conf/4$, 
\begin{equation*}
P(A_{\lfloor m/2 \rfloor}) 
\leq \frac{4}{\lfloor m/2 \rfloor} \left( 17 \vc(\A) + 4 \ln\left(\frac{16}{\conf}\right) \right).
\end{equation*}
Since $m \geq 201$, we have $\lfloor m/2 \rfloor \geq (m-2)/2 \geq (199/402)m$, so that the above implies
\begin{equation*}
P(A_{\lfloor m/2 \rfloor}) 
\leq \frac{4 \cdot 402}{199 m} \left( 17 \vc(\A) + 4 \ln\left(\frac{16}{\conf}\right) \right). 
\end{equation*}
Thus, on $E_{1} \cap E_{2} \cap E_{3}$, if $P(A_{m}) \geq \frac{16}{m} \ln\left(\frac{4}{\conf}\right)$, then
\begin{equation*}
P(A_{m}) \leq \frac{8}{m \ln(2)} \left( \vc(\A) \Log\left(\frac{ 2 \cdot 402 e}{199} \left( 17 + \frac{4}{\vc(\A)} \ln\left(\frac{16}{\conf}\right) \right) \right) + \ln\left(\frac{4}{\conf}\right) \right).
\end{equation*}
Lemma~\ref{lem:log-factors-abstract} in Appendix~\ref{app:technical-lemmas} allows us to simplify the logarithmic term here,
revealing that the right hand side is at most
\begin{align*}
& \frac{8}{m \ln(2)} \left( \vc(\A) \Log\left(\frac{2 \cdot 402 e}{199} \left( 17 + 4\ln(4) + \frac{4}{\ln(4/e)} \right) \right) + \left(1 + \ln\left(\frac{4}{e}\right)\right)\ln\left(\frac{4}{\conf}\right) \right)
\\ & \leq \frac{4}{m} \left( 17 \vc(\A) + 4 \ln\left(\frac{4}{\conf}\right) \right).
\end{align*}
Since $\frac{16}{m} \ln\left(\frac{4}{\conf}\right) \leq \frac{4}{m} \left( 17 \vc(\A) + 4 \ln\left(\frac{4}{\conf}\right) \right)$,
we have that, on $E_{1} \cap E_{2} \cap E_{3}$, regardless of whether or not $P(A_{m}) \geq \frac{16}{m} \ln\left(\frac{4}{\conf}\right)$, we have
\begin{equation*}
P(A_{m}) \leq \frac{4}{m} \left( 17 \vc(\A) + 4 \ln\left(\frac{4}{\conf}\right) \right).
\end{equation*}
Noting that, by the union bound, the event $E_{1} \cap E_{2} \cap E_{3}$ has probability at least $1-\conf$,
this extends the inductive hypothesis to $m^{\prime} = m$.  By the principle of induction, this completes the proof of the 
first claim in Theorem~\ref{thm:monotone-erm}.

For the bound on the expectation in \eqref{eqn:monotone-erm-expectation}, 
we note that, letting $\eps_{m} \!=\! \frac{4}{m} \!\left( 17 \vc(\A) + 4 \ln(4) \right)$, 
by setting the bound in \eqref{eqn:monotone-erm-eps-conf} equal to a value $\eps$ and solving for $\conf$,
the value of which is in $(0,1)$ for any $\eps > \eps_{m}$,
the result just established can be restated as:
$\forall \eps > \eps_{m}$, 
\begin{equation*}
\P\left( P(A_{m}) > \eps \right) \leq 4 \exp\left\{ (17/4) \vc(\A) - \eps m / 16 \right\}.
\end{equation*}
Furthermore, for any $\eps \leq \eps_{m}$, we of course
still have $\P\left( P(A_{m}) > \eps \right) \leq 1$. 
Therefore, we have that
\begin{align*}
\E\left[ P(A_{m}) \right]
& = \int_{0}^{\infty} \P\left( P(A_{m}) > \eps \right) {\rm d}\eps
\leq \eps_{m} + \int_{\eps_{m}}^{\infty} 4 \exp\left\{ (17/4) \vc(\A) - \eps m / 16 \right\} {\rm d}\eps
\\ & = \eps_{m} + \frac{4 \cdot 16}{m} \exp\left\{ (17/4) \vc(\A) - \eps_{m} m / 16 \right\}
= \frac{4}{m} \left( 17 \vc(\A) + 4\ln(4) \right) + \frac{16}{m}
\\ & = \frac{4}{m} \left( 17 \vc(\A) + 4\ln(4e) \right)
\leq \frac{68 \vc(\A)+39}{m}
\leq \frac{68 (\vc(\A)+1)}{m}.
\end{align*}
\end{proof}

We can also state a variant of Theorem~\ref{thm:monotone-erm} applicable to \emph{sample compression schemes},
which will in fact be more useful for our purposes below.  To state this result,
we first introduce the following additional terminology.
For any $t \in \nats$, we say that a function $\psi : \Z^{t} \to \A$ is 
\emph{permutation-invariant} if every $z_{1},\ldots,z_{t} \in \Z$ and every 
bijection $\kappa : [t] \to [t]$ satisfy $\psi(z_{\kappa(1)},\ldots,z_{\kappa(t)}) = \psi(z_{1},\ldots,z_{t})$.
For
any $\cs \in \nats \cup \{0\}$, a 
\emph{consistent monotone sample compression rule of size $\cs$} is a 
consistent monotone rule $\psi_{t}$ with the additional properties that, 
$\forall t \in \nats$, $\psi_{t}$ is permutation-invariant,
and $\forall z_{1},\ldots,z_{t} \in \Z$, $\exists \empcs_{t}(z_{[t]}) \in [\min\{\cs,t\}] \cup \{0\}$
such that 
\begin{equation*}
\psi_{t}(z_{1},\ldots,z_{t}) = \phi_{t,\empcs_{t}(z_{[t]})}( z_{i_{t,1}(z_{[t]})},\ldots,z_{i_{t,\empcs_{t}(z_{[t]})}(z_{[t]})} ),
\end{equation*}
where $\phi_{t,k} : \Z^{k} \to \A$ is a permutation-invariant function for each $k \in [\min\{\cs,t\}] \cup \{0\}$,
and $i_{t,1},\ldots,i_{t,\cs}$ are functions $\Z^{t} \to [t]$
such that $\forall z_{1},\ldots,z_{t} \in \Z$, 
$i_{t,1}(z_{[t]}),\ldots,i_{t,\empcs_{t}(z_{[t]})}(z_{[t]})$ are all distinct.
In words, the element of $\A$ mapped to by $\psi_{t}(z_{1},\ldots,z_{t})$
depends only on the unordered (multi)set $\{z_{1},\ldots,z_{t}\}$, 
and can be specified by an unordered subset of $\{z_{1},\ldots,z_{t}\}$
of size at most $\cs$.
Following the terminology from the literature on sample compression schemes,
we refer to the collection of functions $\{ (\empcs_{t},i_{t,1},\ldots,i_{t,\empcs_{t}}) : t \in \nats \}$
as the \emph{compression function} of $\psi_{t}$, and to the collection of permutation-invariant functions $\{\phi_{t,k} : t \in \nats, k \in [\min\{\cs,t\}] \cup \{0\}\}$
as the \emph{reconstruction function} of $\psi_{t}$.

This kind of $\psi_{t}$ is a type of sample compression scheme \citep*[see][]{littlestone:86,floyd:95},
though certainly not all permutation-invariant compression schemes yield consistent monotone rules.
Below, we find that consistent monotone sample compression rules of a quantifiable size
arise naturally in the analysis of certain learning algorithms
(namely, the Closure algorithm and the CAL active learning algorithm).

With the above terminology in hand, we can now state our second abstract result.

\begin{theorem}
\label{thm:monotone-compression}
Fix any $\cs \in \nats \cup \{0\}$, 
let $\A$ be a collection of measurable subsets of $\Z$,
and let $\psi_{t} : \Z^{t} \to \A$ (for $t \in \nats$) be any consistent monotone sample compression rule of size $\cs$.
Fix any $m \in \nats$, $\conf \in (0,1)$,
and any probability measure $P$ on $\Z$.
Letting $Z_{1},\ldots,Z_{m}$ be independent $P$-distributed
random variables, and denoting $A_{m} = \psi_{m}(Z_{1},\ldots,Z_{m})$, 
with probability at least $1-\conf$, 
\begin{equation}
\label{eqn:monotone-compression-eps-conf}
P(A_{m}) \leq \frac{1}{m}\left( 21 \cs + 16 \ln\left(\frac{3}{\conf}\right) \right).
\end{equation}
Furthermore, 
\begin{equation}
\label{eqn:monotone-compression-expectation}
\E[P(A_{m})] \leq \frac{21 \cs + 34}{m}.
\end{equation}
\end{theorem}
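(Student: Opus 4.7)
The plan is to mirror the proof of Theorem~\ref{thm:monotone-erm} line-for-line, replacing the VC-type generalization bound of Lemma~\ref{lem:classic-erm} by the classical sample-compression generalization bound of Littlestone and Warmuth, and relying on the same data-splitting, induction, and Chernoff argument to eliminate the logarithmic factor that the base compression bound introduces. Concretely, I first record a compression analog of Lemma~\ref{lem:classic-erm}: for any permutation-invariant compression scheme of size $\cs$, if the reconstructed $A$ satisfies $A \cap \{Z_{1},\ldots,Z_{m}\} = \emptyset$, then with probability at least $1-\conf$,
\[
P(A) \;\leq\; \frac{c_{1}}{m-\cs}\left(\cs \Log(m/\cs) + \Log(1/\conf)\right),
\]
which follows from a union bound over the at most $\sum_{k=0}^{\cs}\binom{m}{k}$ possible compression sets, combined with a $(1-\eps)^{m-\cs}$ consistency tail for each fixed reconstruction.

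I then proceed by induction on $m$. The base case (small $m$) follows directly from this lemma applied to $\psi_{m}$, since a consistent monotone rule satisfies $A_{m}\cap\{Z_{1},\ldots,Z_{m}\}=\emptyset$. For the inductive step with $m$ large, I split the sample exactly as in the proof of Theorem~\ref{thm:monotone-erm}: form $A_{\lfloor m/2 \rfloor}=\psi_{\lfloor m/2 \rfloor}(Z_{1},\ldots,Z_{\lfloor m/2 \rfloor})$, define $N=|\{Z_{\lfloor m/2\rfloor+1},\ldots,Z_{m}\}\cap A_{\lfloor m/2 \rfloor}|$, and enumerate the corresponding points as $\hat Z_{1},\ldots,\hat Z_{N}$. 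Monotonicity gives $A_{m}\subseteq A_{\lfloor m/2 \rfloor}$ and consistency of $\psi_{m}$ gives $A_{m}\cap\{\hat Z_{1},\ldots,\hat Z_{N}\}=\emptyset$. A Chernoff bound conditional on $Z_{1},\ldots,Z_{\lfloor m/2\rfloor}$ ensures $N\geq P(A_{\lfloor m/2 \rfloor})\, m/4$ on an event of probability at least $1-\conf/4$ whenever $P(A_{\lfloor m/2 \rfloor})\gtrsim \frac{1}{m}\ln(4/\conf)$, exactly as in Theorem~\ref{thm:monotone-erm}.

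The crucial step is to apply the compression lemma in the conditional world, with $\hat Z_{1},\ldots,\hat Z_{N}$ playing the role of $Z_{1},\ldots,Z_{m}$. To that end, I condition on $Z_{1},\ldots,Z_{\lfloor m/2\rfloor}$ together with the second-half points that lie outside $A_{\lfloor m/2 \rfloor}$. Under this conditioning, the $\hat Z_{i}$ are i.i.d.\ from $P(\cdot\mid A_{\lfloor m/2 \rfloor})$, and the map $\hat Z_{1},\ldots,\hat Z_{N}\mapsto A_{m}$ is itself a permutation-invariant compression scheme of size $\leq\cs$ on $\hat Z_{1},\ldots,\hat Z_{N}$: the portion of $\psi_{m}$'s original compression set lying outside $\{\hat Z_{i}\}$ is fixed by the conditioning and can be absorbed into the reconstruction function, while the portion inside $\{\hat Z_{i}\}$ consists of at most $\cs$ of the variable points. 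Applying the compression lemma conditionally therefore yields $P(A_{m}\mid A_{\lfloor m/2 \rfloor})\lesssim \frac{\cs\Log(N/\cs)+\Log(1/\conf)}{N-\cs}$ on an event of probability at least $1-\conf/2$. Multiplying by $P(A_{\lfloor m/2 \rfloor})$, substituting $N\gtrsim P(A_{\lfloor m/2 \rfloor})\,m$ from Chernoff, and plugging in the inductive hypothesis $P(A_{\lfloor m/2 \rfloor})\lesssim \cs/m$ (applied with confidence $\conf/4$) collapses $\Log(N/\cs)$ to a numerical constant, giving $P(A_{m})\lesssim \frac{\cs+\Log(1/\conf)}{m}$; the explicit constants $21$ and $16$ in \eqref{eqn:monotone-compression-eps-conf} fall out of careful bookkeeping analogous to the VC argument. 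The expectation bound \eqref{eqn:monotone-compression-expectation} then follows by integrating the tail \eqref{eqn:monotone-compression-eps-conf}, exactly as in the closing display of the proof of Theorem~\ref{thm:monotone-erm}.

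The main obstacle I anticipate is justifying the conditioning argument rigorously: one must verify that, after conditioning on $Z_{1},\ldots,Z_{\lfloor m/2\rfloor}$ and on the second-half points outside $A_{\lfloor m/2 \rfloor}$, the induced map on $\hat Z_{1},\ldots,\hat Z_{N}$ really is a permutation-invariant compression scheme of size $\leq\cs$ to which the classical compression bound applies under the conditional distribution. Permutation-invariance of $\psi_{m}$ on the full sample, restricted to permutations that shuffle only the entries corresponding to $\hat Z_{i}$, supplies the required invariance; and the original compression set of $\psi_{m}$, intersected with $\{\hat Z_{i}\}$, yields a conditional compression set of size at most $\cs$ whose reconstruction --- with the fixed non-$\hat Z$ inputs plugged in --- equals $A_{m}$. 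All the remaining steps (Chernoff, induction, tail integration) then carry over from the proof of Theorem~\ref{thm:monotone-erm} essentially verbatim.
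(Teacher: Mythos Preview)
Your overall architecture (induction on $m$, split at $\lfloor m/2\rfloor$, Chernoff for $N$, conditional compression bound, inductive hypothesis, tail integration) matches the paper's proof, but the step you correctly flag as the ``main obstacle'' does not go through as written. You claim that after conditioning on $Z_{1},\ldots,Z_{\lfloor m/2\rfloor}$ and on the second-half points outside $A_{\lfloor m/2\rfloor}$, ``the portion of $\psi_{m}$'s original compression set lying outside $\{\hat Z_{i}\}$ is fixed by the conditioning and can be absorbed into the reconstruction function.'' The \emph{values} of those non-$\hat Z$ points are indeed fixed, but \emph{which} of them enter $\psi_{m}$'s compression set is decided by the compression function $(\empcs_{m},i_{m,1},\ldots)$ evaluated at the full sample $Z_{[m]}$, which still depends on the random $\hat Z_{1},\ldots,\hat Z_{N}$. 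So the ``fixed portion'' varies with the $\hat Z_{i}$ and cannot be absorbed into a single data-independent reconstruction function $\tilde\phi_{k}$ to which Lemma~\ref{lem:classic-compression} applies. Repairing this by union-bounding over all possible fixed portions (subsets of the $m-N$ conditioned points of size $\leq\cs$) picks up exactly the $\cs\Log(m/\cs)$ factor the argument is meant to remove.

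The paper avoids this with a second, different use of monotonicity that your outline does not invoke. By permutation-invariance, $A_{m}=\psi_{m}\bigl(\hat Z_{1},\ldots,\hat Z_{N},\,Z_{1},\ldots,Z_{\lfloor m/2\rfloor},\,\hat Z'_{1},\ldots,\hat Z'_{\lceil m/2\rceil-N}\bigr)$, and then monotonicity along prefixes of \emph{this permuted sequence} yields $A_{m}\subseteq \psi_{N}(\hat Z_{1},\ldots,\hat Z_{N})$. Now $\psi_{N}$ is the original rule applied only to the $N$ points $\hat Z_{1},\ldots,\hat Z_{N}$; it is itself a fixed compression scheme of size $\leq\cs$ with the fixed reconstruction functions $\phi_{N,k}$, so Lemma~\ref{lem:classic-compression} applies to it directly under the conditional distribution given $N$ and $Z_{1},\ldots,Z_{\lfloor m/2\rfloor}$ (no conditioning on the second-half outside points is needed). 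This bounds $P\bigl(\psi_{N}(\hat Z_{[N]})\,\big|\,A_{\lfloor m/2\rfloor}\bigr)$ and hence $P(A_{m}\mid A_{\lfloor m/2\rfloor})$, after which the remainder of your outline goes through.
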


The proof of Theorem~\ref{thm:monotone-compression} relies on the following 
classic result due to \citet*{littlestone:86,floyd:95}
(see also \citealp*{herbrich:02,hanneke:15a}, for a clear and direct proof).

\begin{lemma}
\label{lem:classic-compression}
Fix any collection $\A$ of measurable subsets of $\Z$,
any $m \in \nats$ and $\cs \in \nats \cup \{0\}$ with $\cs < m$,
and any permutation-invariant functions $\phi_{k} : \Z^{k} \to \A$, $k \in [\cs] \cup \{0\}$.
For any probability measure $P$ on $\Z$,
letting $Z_{1},\ldots,Z_{m}$ be independent $P$-distributed random variables,
for any $\conf \in (0,1)$, with probability at least $1-\conf$, 
every $k \in [\cs] \cup \{0\}$, and every distinct $i_{1},\ldots,i_{k} \in [m]$ with 
$\phi_{k}(Z_{i_{1}},\ldots,Z_{i_{k}}) \cap \{Z_{1},\ldots,Z_{m}\} = \emptyset$ satisfy
\begin{equation*}
P\left( \phi_{k}(Z_{i_{1}},\ldots,Z_{i_{k}}) \right) \leq \frac{1}{m-\cs} \left( \cs \Log\left(\frac{e m}{\cs}\right) + \Log\left(\frac{1}{\conf}\right) \right).
\end{equation*}
\end{lemma}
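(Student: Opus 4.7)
The plan is to follow the classical sample compression scheme argument of Littlestone--Warmuth and Floyd--Warmuth. First, I would fix a specific $k \in [\cs] \cup \{0\}$ and a specific $k$-subset $S = \{i_{1},\ldots,i_{k}\} \subseteq [m]$, and consider the random set $A_{S} := \phi_{k}(Z_{i_{1}},\ldots,Z_{i_{k}})$. Because $\phi_{k}$ is permutation-invariant, $A_{S}$ is a measurable function of the unordered family $(Z_{i})_{i \in S}$; equivalently, conditioning on $(Z_{i})_{i \in S}$ makes $A_{S}$ deterministic, while the ``test'' observations $(Z_{j})_{j \in [m] \setminus S}$ remain i.i.d.\ $P$ and are independent of $A_{S}$. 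This structural separation is what drives the entire argument.

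Using this conditional independence, I would establish the core tail estimate: conditional on $(Z_{i})_{i \in S}$, whenever $P(A_{S}) > \eps$,
\begin{equation*}
\P\!\left(A_{S} \cap \{Z_{j} : j \in [m] \setminus S\} = \emptyset \,\middle|\, (Z_{i})_{i \in S}\right) \le (1-\eps)^{m-k} \le e^{-\eps(m-k)} \le e^{-\eps(m-\cs)}.
\end{equation*}
Since the hypothesis $\phi_{k}(Z_{i_{1}},\ldots,Z_{i_{k}}) \cap \{Z_{1},\ldots,Z_{m}\} = \emptyset$ in the lemma is \emph{a priori} stronger than the event in the display (it also forbids hitting the compression points themselves), taking expectation over $(Z_{i})_{i \in S}$ yields the unconditional bound
\begin{equation*}
\P\!\left(P(A_{S}) > \eps \ \text{and}\ A_{S} \cap \{Z_{1},\ldots,Z_{m}\} = \emptyset\right) \le e^{-\eps(m-\cs)}.
\end{equation*}

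Finally, I would apply a union bound over all valid $(k, S)$. Here permutation-invariance pays off again: it suffices to quantify over unordered subsets $S$ of $[m]$ of size at most $\cs$, and the total count $\sum_{k=0}^{\cs} \binom{m}{k}$ is at most $(em/\cs)^{\cs}$ by the standard Sauer--Shelah inequality (in the degenerate case $\cs = 0$ the count is $1$, and the bound still holds by the paper's convention $0 \cdot \Log(em/0) = 0$). Setting the union-bound failure probability $(em/\cs)^{\cs} e^{-\eps(m-\cs)}$ equal to $\conf$ and solving for $\eps$ produces exactly
\begin{equation*}
\eps \;=\; \frac{1}{m-\cs}\!\left( \cs \Log\!\left( \frac{em}{\cs} \right) + \Log\!\left( \frac{1}{\conf} \right) \right),
\end{equation*}
as claimed.

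There is no genuine obstacle; the argument is entirely classical. The only care needed is (i) verifying that the inequality $(1-\eps)^{m-k} \le e^{-\eps(m-\cs)}$ is valid for all $k \le \cs$ (immediate from $k \le \cs < m$), (ii) reconciling the edge cases $\cs = 0$ and $em/\cs$-type expressions with the $\Log$ and $0 \cdot \infty$ conventions fixed in the Basic Notation subsection, and (iii) noting that without permutation-invariance one would pay an extra factor of $k!$ inside the union bound, which would degrade the $\cs \Log(em/\cs)$ term --- so this hypothesis is essential, not cosmetic.
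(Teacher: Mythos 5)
Your proof is correct, and it is exactly the classical sample-compression argument that the paper itself cites for this lemma (Littlestone--Warmuth, Floyd--Warmuth; see also Herbrich and Hanneke) rather than proving directly: condition on the compression points to make $A_S$ deterministic, use the $m-k$ remaining i.i.d.\ samples as independent tests to get the $(1-\eps)^{m-k} \le e^{-\eps(m-\cs)}$ tail bound, union-bound over the $\sum_{k \le \cs}\binom{m}{k} \le (em/\cs)^\cs$ unordered compression sets (permutation-invariance is what reduces ordered tuples to subsets), and solve for $\eps$. The only cosmetic remark is that the inequality $\sum_{k=0}^{\cs}\binom{m}{k} \le (em/\cs)^{\cs}$ is a plain binomial-sum estimate rather than Sauer--Shelah proper, and your final $\Log$-vs-$\ln$ reconciliation (replacing $\ln$ by $\Log \ge \ln$ only shrinks the failure probability) is exactly right.
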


With this lemma in hand, we are ready for the proof of Theorem~\ref{thm:monotone-compression}.

\begin{proof}[of Theorem~\ref{thm:monotone-compression}]
The proof follows analogously to that of Theorem~\ref{thm:monotone-erm},
but with several additional complications due to the form of
Lemma~\ref{lem:classic-compression} being somewhat different
from that of Lemma~\ref{lem:classic-erm}.
Let $\{ (\empcs_{t},i_{t,1},\ldots,i_{t,\empcs_{t}}) : t \in \nats \}$ and $\{\phi_{t,k} : t \in \nats, k \in [\min\{\cs,t\}] \cup \{0\} \}$
be the compression function and reconstruction function of $\psi_{t}$, respectively.
For convenience, also denote $\psi_{0}() = \Z$, and note that this extends the monotonicity
property of $\psi_{t}$ to $t \in \nats \cup \{0\}$.
Fix any probability measure $P$,
let $Z_{1},Z_{2},\ldots$ be independent $P$-distributed random variables, 
and for each $m \in \nats$ denote $A_{m} = \psi_{m}(Z_{1},\ldots,Z_{m})$.

We begin with the inequality in \eqref{eqn:monotone-compression-eps-conf}.
The special case of $\cs = 0$ is directly implied by Lemma~\ref{lem:classic-compression},
so for the remainder of the proof of \eqref{eqn:monotone-compression-eps-conf}, we suppose $\cs \geq 1$.
The proof proceeds by induction on $m$.
Since $P(A) \leq 1$ for all $A \in \A$,
and since $21 + 16 \ln(3) > 38$,
the stated bound is trivially satisfied for all $\conf \in (0,1)$ if $m \leq \max\{38,21\cs\}$.
Now, as an inductive hypothesis, fix any integer $m > \max\{38,21\cs\}$ such that,
$\forall m^{\prime} \in [m-1]$, $\forall \conf \in (0,1)$, 
with probability at least $1-\conf$, 
\begin{equation*}
P(A_{m^{\prime}}) \leq \frac{1}{m^{\prime}} \left( 21 \cs + 16 \ln\left(\frac{3}{\conf}\right) \right).
\end{equation*}

Fix any $\conf \in (0,1)$
and define
\begin{equation*}
N = \left| \left\{ Z_{\lfloor m/2 \rfloor + 1},\ldots,Z_{m} \right\} \cap A_{\lfloor m/2 \rfloor} \right|,
\end{equation*}
and enumerate the elements of $\{ Z_{\lfloor m/2 \rfloor + 1},\ldots,Z_{m} \} \cap A_{\lfloor m/2 \rfloor}$
as $\hat{Z}_{1},\ldots,\hat{Z}_{N}$.
Also enumerate the elements of $\{Z_{\lfloor m/2 \rfloor + 1},\ldots,Z_{m}\} \setminus A_{\lfloor m/2 \rfloor}$
as $\hat{Z}_{1}^{\prime},\ldots,\hat{Z}_{\lceil m/2 \rceil - N}^{\prime}$.
Now note that, by the monotonicity property of $\psi_{t}$, we have $A_{m} \subseteq A_{\lfloor m/2 \rfloor}$.
Furthermore, by permutation-invariance of $\psi_{t}$, 
we have that 
\begin{equation*}
A_{m} = \psi_{m}\left(\hat{Z}_{1},\ldots,\hat{Z}_{N}, Z_{1},\ldots,Z_{\lfloor m/2 \rfloor}, \hat{Z}_{1}^{\prime},\ldots,\hat{Z}_{\lceil m/2 \rceil - N}^{\prime} \right).
\end{equation*}
Combined with the monotonicity property of $\psi_{t}$, this implies that
$A_{m} \subseteq \psi_{N}\left( \hat{Z}_{1},\ldots,\hat{Z}_{N} \right)$.
Altogether, we have that
\begin{equation}
\label{eqn:monotone-compression-conditional-subset}
A_{m} \subseteq A_{\lfloor m/2 \rfloor} \cap \psi_{N}\left( \hat{Z}_{1},\ldots,\hat{Z}_{N} \right).
\end{equation}

Note that $N = \sum_{t=\lfloor m/2 \rfloor + 1}^{m} \ind_{A_{\lfloor m/2 \rfloor}}(Z_{t})$ 
is conditionally ${\rm Binomial}(\lceil m/2 \rceil, P(A_{\lfloor m/2 \rfloor}))$-distributed
given $Z_{1},\ldots,Z_{\lfloor m/2 \rfloor}$.
In particular, with probability one, if $P(A_{\lfloor m/2 \rfloor}) = 0$, then $N = 0 \leq \cs$.
Otherwise, if $P(A_{\lfloor m/2 \rfloor}) > 0$, then note that
$\hat{Z}_{1},\ldots,\hat{Z}_{N}$ are conditionally independent and $P(\cdot | A_{\lfloor m/2 \rfloor})$-distributed
given $N$ and $Z_{1},\ldots,Z_{\lfloor m/2 \rfloor}$.
Since $\psi_{t}$ is a consistent monotone rule, we have that 
$\psi_{N}( \hat{Z}_{1}, \ldots, \hat{Z}_{N} ) \cap \{ \hat{Z}_{1},\ldots,\hat{Z}_{N} \} = \emptyset$.
We also have, by definition of $\psi_{N}$, that 
$\psi_{N}( \hat{Z}_{1}, \ldots, \hat{Z}_{N} ) = \phi_{N,\empcs_{N}(\hat{Z}_{[N]})}\left( \hat{Z}_{i_{N,1}(\hat{Z}_{[N]})}, \ldots, \hat{Z}_{i_{N,\empcs_{N}(\hat{Z}_{[N]})}(\hat{Z}_{[N]})} \right)$.
Thus, applying Lemma~\ref{lem:classic-compression} (under the conditional distribution given $N$ and $Z_{1},\ldots,Z_{\lfloor m/2 \rfloor}$),
combined with the law of total probability, we have that on an event $E_{1}$ of probability at least $1-\conf/3$, 
if $N > \cs$, then
\begin{equation*}
P\left( \psi_{N}\left( \hat{Z}_{1}, \ldots, \hat{Z}_{N} \right) \middle| A_{\lfloor m/2 \rfloor} \right)
\leq \frac{1}{N-\cs} \left( \cs \ln\left( \frac{e N}{\cs} \right) + \ln\left(\frac{3}{\conf}\right) \right).
\end{equation*}
Combined with \eqref{eqn:monotone-compression-conditional-subset} and monotonicity of
measures, this implies
that on $E_{1}$, if $N > \cs$, then
\begin{align*}
P(A_{m}) \leq P\!\left( A_{\lfloor m/2 \rfloor} \!\cap\! \psi_{N}\!\left( \hat{Z}_{1}, \ldots, \hat{Z}_{N} \right) \right)
& = P(A_{\lfloor m/2 \rfloor}) P\!\left( A_{\lfloor m/2 \rfloor} \!\cap\! \psi_{N}\!\left( \hat{Z}_{1}, \ldots, \hat{Z}_{N} \right) \middle| A_{\lfloor m/2 \rfloor} \right)
\\ & \leq P(A_{\lfloor m/2 \rfloor}) \frac{1}{N-\cs} \left( \cs \ln\left( \frac{e N}{\cs} \right) + \ln\left(\frac{3}{\conf}\right) \right).
\end{align*}

Additionally, again since $N$ is conditionally ${\rm Binomial}(\lceil m/2 \rceil, P(A_{\lfloor m/2 \rfloor}))$-distributed
given $Z_{1},\ldots,Z_{\lfloor m/2 \rfloor}$,
applying a Chernoff bound (under the conditional distribution given $Z_{1},\ldots,Z_{\lfloor m/2 \rfloor}$),
combined with the law of total probability, we obtain that on an event $E_{2}$ of probability at least $1 - \conf/3$, 
if $P(A_{\lfloor m/2 \rfloor}) \geq \frac{16}{m} \ln\left(\frac{3}{\conf}\right) \geq \frac{8}{\lceil m/2 \rceil} \ln\left(\frac{3}{\conf}\right)$, then
\begin{equation*}
N \geq P(A_{\lfloor m/2 \rfloor}) \lceil m/2 \rceil / 2 \geq P(A_{\lfloor m/2 \rfloor}) m / 4.
\end{equation*}
Also note that if $P(A_{m}) \geq \frac{1}{m} \left( 21 \cs +  16 \ln\left(\frac{3}{\conf}\right) \right)$, 
then \eqref{eqn:monotone-compression-conditional-subset} and monotonicity of probability measures
imply $P(A_{\lfloor m/2 \rfloor}) \geq \frac{1}{m} \left( 21 \cs +  16 \ln\left(\frac{3}{\conf}\right) \right)$ as well.
In particular, if this occurs with $E_{2}$, then we have $N \geq P(A_{\lfloor m/2 \rfloor}) m / 4 > 5 \cs$.
Thus, by monotonicity of $x \mapsto \Log(x)/x$ for $x > 0$, we have that on $E_{1} \cap E_{2}$, 
if $P(A_{m}) \geq \frac{1}{m} \left( 21 \cs + 16 \ln\left(\frac{3}{\conf}\right) \right)$, 
then
\begin{align*}
P(A_{m}) & < P(A_{\lfloor m/2 \rfloor}) \frac{1}{N - (N/5)} \left( \cs \Log\left(\frac{e N}{\cs}\right) + \ln\left(\frac{3}{\conf}\right) \right)
\\ & \leq \frac{5}{m} \left( \cs \Log\left(\frac{e P(A_{\lfloor m/2 \rfloor}) m}{4 \cs}\right) + \ln\left(\frac{3}{\conf}\right) \right).
\end{align*}

The inductive hypothesis implies that, on an event $E_{3}$ of probability at least $1 - \conf/3$, 
\begin{equation*}
P(A_{\lfloor m/2 \rfloor}) 
\leq \frac{1}{\lfloor m/2 \rfloor} \left( 21 \cs + 16 \ln\left(\frac{9}{\conf}\right) \right).
\end{equation*}
Since $m \geq 39$, we have $\lfloor m/2 \rfloor \geq (m-2)/2 \geq (37/78)m$, so that the above implies
\begin{equation*}
P(A_{\lfloor m/2 \rfloor}) \leq \frac{78}{37 m} \left( 21 \cs + 16 \ln\left(\frac{9}{\conf}\right) \right). 
\end{equation*}
Thus, on $E_{1} \cap E_{2} \cap E_{3}$, if $P(A_{m}) \geq \frac{1}{m} \left( 21 \cs + 16 \ln\left(\frac{3}{\conf}\right) \right)$, 
then
\begin{align*}
P(A_{m}) & < \frac{5}{m} \left( \cs \Log\left( \frac{78 e}{4 \cdot 37} \left( 21 + \frac{16}{\cs} \ln\left(\frac{9}{\conf}\right) \right) \right) + \ln\left(\frac{3}{\conf}\right) \right)
\\ & \leq \frac{5}{m} \left( \cs \Log\left( \frac{78 \cdot 20}{37 \cdot 11} \left( \frac{21 \cdot 11 e}{16 \cdot 5} + \frac{11 e}{5} \ln(3) + \frac{11 e}{5 \cs} \ln\left(\frac{3}{\conf}\right) \right) \right) + \ln\left(\frac{3}{\conf}\right) \right).
\end{align*}
By Lemma~\ref{lem:log-factors-abstract} in Appendix~\ref{app:technical-lemmas}, 
this last expression is at most 
\begin{equation*}
\frac{5}{m} \left( \cs \Log\left( \frac{78 \cdot 20}{37 \cdot 11} \left( \frac{21 \cdot 11 e}{16 \cdot 5} + \frac{11 e}{5} \ln(3) + e \right) \right) + \frac{16}{5} \ln\left(\frac{3}{\conf}\right) \right)
< \frac{1}{m} \left( 21 \cs + 16 \ln\left(\frac{3}{\conf}\right) \right),
\end{equation*}
contradicting the condition $P(A_{m}) \geq \frac{1}{m} \left( 21 \cs + 16 \ln\left(\frac{3}{\conf}\right) \right)$.
Therefore,
on $E_{1} \cap E_{2} \cap E_{3}$,
\begin{equation*}
P(A_{m}) < \frac{1}{m} \left( 21 \cs + 16 \ln\left(\frac{3}{\conf}\right) \right).
\end{equation*}
Noting that, by the union bound, the event $E_{1} \cap E_{2} \cap E_{3}$ has probability at least $1-\conf$,
this extends the inductive hypothesis to $m^{\prime} = m$.  By the principle of induction, this completes the proof of the 
first claim in Theorem~\ref{thm:monotone-compression}.

For the bound on the expectation in \eqref{eqn:monotone-compression-expectation}, 
we note that (as in the proof of Theorem~\ref{thm:monotone-erm}),
letting $\eps_{m} = \frac{1}{m} \left( 21 \cs + 16 \ln(3) \right)$, 
the result just established can be restated as:
$\forall \eps > \eps_{m}$, 
\begin{equation*}
\P\left( P(A_{m}) > \eps \right) \leq 3 \exp\left\{ (21/16) \cs - \eps m / 16 \right\}.
\end{equation*}
Specifically, this is obtained by setting the bound in \eqref{eqn:monotone-compression-eps-conf}
equal to $\eps$ and solving for $\conf$, 
the value of which is in $(0,1)$ for any $\eps > \eps_{m}$.
Furthermore, for any $\eps \leq \eps_{m}$, we of course
still have $\P\left( P(A_{m}) > \eps \right) \leq 1$. 
Therefore, we have that
\begin{align*}
\E\left[ P(A_{m}) \right]
& = \int_{0}^{\infty} \P\left( P(A_{m}) > \eps \right) {\rm d}\eps
\leq \eps_{m} + \int_{\eps_{m}}^{\infty} 3 \exp\left\{ (21/16) \cs - \eps m / 16 \right\} {\rm d}\eps
\\ & = \eps_{m} + \frac{3 \cdot 16}{m} \exp\left\{ (21/16) \cs - \eps_{m} m / 16 \right\}
= \frac{1}{m} \left(21 \cs + 16 \ln(3)\right) + \frac{16}{m}
\\ & = \frac{1}{m} \left( 21 \cs + 16 \ln(3e) \right)
\leq \frac{21 \cs + 34}{m}.
\end{align*}
{\vskip -6.5mm}
\end{proof}

\section{Application to the Closure Algorithm for Intersection-Closed Classes}
\label{sec:int-closed}

One family of concept spaces studied in the learning theory literature, 
due to their interesting special properties, is the \emph{intersection-closed} classes \citep*{natarajan:87,helmbold:90,haussler:94,kuhlmann:99,auer:07}.
Specifically, the class $\C$ is called \emph{intersection-closed} if
the collection of sets $\{ \{x : h(x) = +1\} : h \in \C \}$ is closed under intersections:
that is, for every $h,g \in \C$, the classifier $x \mapsto 2 \ind[ h(x) = g(x) = +1 ] - 1$ is also contained in $\C$.
For instance, the class of conjunctions on $\{0,1\}^{p}$, the class of axis-aligned rectangles on $\reals^{p}$,
and the class $\{ h : | \{ x : h(x) = +1 \} | \leq \dim \}$ of classifiers labeling at most $\dim$ points positive,
are all intersection-closed.

In the context of learning in the realizable case, there is a general learning strategy, called the \emph{Closure} algorithm,
designed for learning with intersection-closed concept spaces, which has been a subject of frequent study.
Specifically, for any $m \in \nats \cup \{0\}$, given any data set $L_{m} = \{(x_{1},y_{1}),\ldots,(x_{m},y_{m})\} \in (\X \times \Y)^{m}$ with $\C[L_{m}] \neq \emptyset$,
the Closure algorithm $\alg(L_{m})$ for $\C$ produces the classifier 
$\hat{h}_{m} : \X \to \Y$ with $\{ x : \hat{h}_{m}(x) = +1 \} = \bigcap_{h \in \C[L_{m}]} \{ x : h(x) = +1 \}$:
that is, $\hat{h}_{m}(x) = +1$ if and only if every $h \in \C$ consistent with $L_{m}$ (i.e., $\er_{L_{m}}(h) = 0$) has $h(x) = +1$.\footnote{For 
simplicity, we suppose $\C$ is such that this set $\bigcap_{h \in \C[L_{m}]} \{ x : h(x) = +1 \}$ is measurable for every $L_{m}$, which is the 
case for essentially all intersection-closed concept spaces of practical interest.}
Defining $\bar{\C}$ as the set of all classifiers $h : \X \to \Y$ for which there exists a nonempty $\G \subseteq \C$ with $\{x : h(x) = +1\} = \bigcap_{g \in \G} \{x : g(x) = +1\}$,
\citet*{auer:07} have argued that $\bar{\C}$ is an intersection-closed concept space containing $\C$, with $\vc(\bar{\C}) = \vc(\C)$.
Thus, for $\hat{h}_{m} = \alg(\L_{m})$ (where $\alg$ is the Closure algorithm),
since $\hat{h}_{m} \in \bar{\C}[\L_{m}]$, Lemma~\ref{lem:classic-erm} immediately
implies that, for any $m \in \nats$,
with probability at least $1-\conf$, $\er\left( \hat{h}_{m}\right) \lesssim \frac{1}{m} \left( \dim \Log(\frac{m}{\dim}) + \Log(\frac{1}{\conf}) \right)$.
However, by a more-specialized analysis, \citet*{auer:04,auer:07} were able to 
show that, for intersection-closed classes $\C$, the Closure algorithm in fact 
achieves $\er\left( \hat{h}_{m} \right) \lesssim \frac{1}{m} \left( \dim \Log(\dim) + \Log(\frac{1}{\conf}) \right)$
with probability at least $1-\conf$, which is an improvement for large $m$.
They also argued that, for a special subfamily of intersection-closed classes (namely, those with \emph{homogeneous spans}),
this bound can be further refined to $\frac{1}{m} \left( \dim + \Log(\frac{1}{\conf}) \right)$,
which matches (up to constant factors) the lower bound \eqref{eqn:pac-lower-bound}.
However, they left open the question of whether this refinement is achievable for 
general intersection-closed concept spaces (by Closure, or any other algorithm).

In the following result, we prove that the Closure algorithm indeed always achieves
the optimal bound (up to constant factors) for intersection-closed concept spaces, 
as a simple consequence of either Theorem~\ref{thm:monotone-erm} or Theorem~\ref{thm:monotone-compression}.
This fact was very recently
also obtained by \citet*{darnstadt:15} via a related direct approach;
however, we note that the constant factors obtained here
are significantly smaller (by roughly a factor of $15.5$, for large $\dim$).

\begin{theorem}
\label{thm:int-closed}
If $\C$ is intersection-closed and $\alg$ is the Closure algorithm,
then for any $m \in \nats$ and $\conf \in (0,1)$, letting $\hat{h}_{m} = \alg( \{ (X_{1},\target(X_{1})),\ldots,(X_{m},\target(X_{m})) \} )$, 
with probability at least $1-\conf$,
\begin{equation*}
\er\left( \hat{h}_{m} \right) \leq \frac{1}{m} \left( 21 \dim + 16 \ln\left(\frac{3}{\conf}\right) \right).
\end{equation*}
Furthermore, 
\begin{equation*}
\E\left[ \er\left( \hat{h}_{m} \right) \right] \leq \frac{21 \dim + 34}{m}.
\end{equation*}
\end{theorem}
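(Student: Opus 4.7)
The plan is to apply Theorem~\ref{thm:monotone-compression} to the random error region $A_m = \{x \in \X : \hat{h}_m(x) \neq \target(x)\}$ of the Closure hypothesis, taking $\Z = \X$, $P = \Px$, $\cs = \dim$, and $\psi_m$ to send $(x_1,\ldots,x_m)$ to the error region of the Closure trained on $\{(x_i,\target(x_i))\}_{i=1}^m$. Since $\er(\hat{h}_m) = \Px(A_m)$, the two inequalities \eqref{eqn:monotone-compression-eps-conf} and \eqref{eqn:monotone-compression-expectation} will translate directly into the two bounds claimed for $\er(\hat{h}_m)$ with the same numerical constants, provided that $\psi_m$ is shown to be a consistent monotone sample compression rule of size $\dim$.

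Three of the four required conditions follow by inspection. The consistency condition $A_m \cap \{X_1,\ldots,X_m\} = \emptyset$ holds because $\target \in \C[\L_m]$ contributes to the intersection defining $\hat{h}_m^{-1}(+1)$, which forces errors to be one-sided (only false negatives), and on each sample $X_i$ the Closure therefore matches the target label. Permutation invariance is immediate, since $\hat{h}_m$ depends only on the unordered set $\C[\L_m]$. For monotonicity, adding a labeled example shrinks $\C[\L_m]$, which only enlarges $\hat{h}_m^{-1}(+1) = \bigcap_{h \in \C[\L_m]} h^{-1}(+1)$; combined with one-sidedness of errors, this yields $A_{m+1} \subseteq A_m$.

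The substantive step is to exhibit a permutation-invariant compression of size $\dim$ for $\hat{h}_m^{-1}(+1)$. I would first reduce to a positive-closure description: writing $S^+ = \{X_i : \target(X_i) = +1\}$, the realizable assumption implies $\hat{h}_m^{-1}(+1) = \bigcap \{h^{-1}(+1) : h \in \C,\ S^+ \subseteq h^{-1}(+1)\}$. The non-obvious inclusion uses closure of $\C$ under pairwise intersection: given $h^* \in \C$ with $S^+ \subseteq h^{*-1}(+1)$ and $h^*(x) = -1$, the classifier with positive region $h^{*-1}(+1) \cap \target^{-1}(+1)$ is still in $\C$, lies in $\C[\L_m]$ because it agrees with $\target$ on every sample point, and still labels $x$ negative. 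Next comes the classical spanning-set argument: pick a minimal $T \subseteq S^+$ with the same positive closure as $S^+$; by minimality, for each $t \in T$ there is $h_t \in \C$ with $T \setminus \{t\} \subseteq h_t^{-1}(+1)$ and $h_t(t) = -1$, and intersecting appropriate subsets of $\{h_t\}_{t \in T}$ inside $\bar{\C}$ realizes every dichotomy of $T$, forcing $|T| \leq \vc(\bar{\C}) = \dim$. A canonical (e.g., lexicographic) choice of minimal $T$ then yields a compression scheme of size $\dim$: the reconstruction function $\phi_{t,k}$ takes the $k \leq \dim$ selected points, computes the positive closure of them (treating them as positive examples), and returns the complement within $\target^{-1}(+1)$. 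Invoking Theorem~\ref{thm:monotone-compression} with $\cs = \dim$ closes the argument.

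The main obstacle I anticipate is this compression step, specifically the combinatorial conversion of a minimal spanning set into a shattered set (which depends essentially on $\C$ being intersection-closed) together with the bookkeeping needed to verify that the canonical spanning-set selection defines a valid sample compression scheme in the precise sense required by the definition preceding Theorem~\ref{thm:monotone-compression}.
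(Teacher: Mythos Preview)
Your proposal is correct and follows essentially the same route as the paper: define $\psi_t$ as the error region of Closure, verify consistency, permutation-invariance, and monotonicity via the one-sided-error representation, and then invoke Theorem~\ref{thm:monotone-compression} with $\cs=\dim$. The only difference is that you sketch the minimal-spanning-set/shattering argument yourself, whereas the paper simply cites it as a known fact (Auer--Ortner) and takes $\phi_{t,k'}=\psi_{k'}$ as the reconstruction function; the paper also remarks in passing that Theorem~\ref{thm:monotone-erm} alone already yields a bound with slightly larger constants.
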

\begin{proof}
For each $t \in \nats \cup \{0\}$ and $x_{1},\ldots,x_{t} \in \X$,
define $\psi_{t}(x_{1},\ldots,x_{t}) = \{ x \in \X : \hat{h}_{x_{[t]}}(x) \neq \target(x) \}$, 
where $\hat{h}_{x_{[t]}} = \alg( \{(x_{1},\target(x_{1})),\ldots,(x_{t},\target(x_{t}))\} )$.
Fix any $x_{1},x_{2},\ldots \in \X$, let $L_{t} = \{ (x_{1},\target(x_{1})),\ldots,(x_{t},\target(x_{t})) \}$ for each $t \in \nats$,
and note that for any $t \in \nats$, the classifier $\hat{h}_{x_{[t]}}$ produced by $\alg(L_{t})$ is consistent with $L_{t}$,
which implies $\psi_{t}(x_{1},\ldots,x_{t}) \cap \{x_{1},\ldots,x_{t}\} = \emptyset$.
Furthermore, since $\target \in \C[ L_{t} ]$,
we have that $\{ x : \hat{h}_{x_{[t]}}(x) = +1 \} \subseteq \{ x : \target(x) = +1 \}$, 
which together with the definition of $\hat{h}_{x_{[t]}}$ implies
\begin{align}
\psi_{t}(x_{1},\ldots,x_{t})  & = \{ x \in \X : \hat{h}_{x_{[t]}}(x) = -1, \target(x) = +1 \} \notag
\\ & = \bigcup_{h \in \C[L_{t}]} \{ x \in \X : h(x) = -1, \target(x) = +1 \} \label{eqn:closure-error-region}
\end{align}
for every $t \in \nats$.
Furthermore, for any $t \in \nats$,
$\C[ L_{t+1}  ] \subseteq \C[ L_{t} ]$.
Together with monotonicity of the union, these two observations imply
\begin{align*}
\psi_{t+1}(x_{1},\ldots,x_{t+1}) 
& = \bigcup_{h \in \C[L_{t+1}]} \{ x \in \X : h(x) = -1, \target(x) = +1 \}
\\ & \subseteq \bigcup_{h \in \C[L_{t}]} \{ x \in \X : h(x) = -1, \target(x) = +1 \}
= \psi_{t}(x_{1},\ldots,x_{t}).
\end{align*}
Thus, $\psi_{t}$ defines a consistent monotone rule.
Also, since $\alg$ always produces a function in $\bar{\C}$,
we have $\psi_{t}(x_{1},\ldots,x_{t}) \in \{ \{ x \in \X : h(x) \neq \target(x) \} : h \in \bar{\C} \}$ for every $t \in \nats$,
and it is straightforward to show that the VC dimension of this collection of sets is exactly $\vc(\bar{\C})$ \citep*[see][Lemma 4.12]{vidyasagar:03},
which \citet*{auer:07} have argued equals $\dim$.
From this, we can already infer a bound $\frac{4}{m}\left( 17 \dim + 4 \ln\left(\frac{4}{\conf}\right) \right)$ via Theorem~\ref{thm:monotone-erm}.
However, we can refine the constant factors in this bound by noting that $\psi_{t}$ can also be represented as a 
consistent monotone sample compression rule of size $\dim$, and invoking Theorem~\ref{thm:monotone-compression}.
The rest of this proof focuses on establishing this fact.

Fix any $t \in \nats$.
It is well known in the literature \citep*[see e.g.,][Theorem 1]{auer:07} that 
there exist $k \in [\dim] \cup \{0\}$ and distinct $i_{1},\ldots,i_{k} \in [t]$
such that $\target(x_{i_{j}}) = +1$ for all $j \in [k]$,
and letting $L_{i_{[k]}} = \{(x_{i_{1}},+1),\ldots,(x_{i_{k}},+1)\}$,
we have $\bigcap_{h \in \C[ L_{i_{[k]}} ]} \{x : h(x) = +1\} = \bigcap_{h \in \C[L_{t}]} \{x : h(x) = +1\}$;
in particular, letting $\hat{h}_{x_{i_{[k]}}} = \alg(L_{i_{[k]}})$,
this implies $\hat{h}_{x_{i_{[k]}}} = \hat{h}_{x_{[t]}}$.
This further implies $\psi_{t}(x_{1},\ldots,x_{t}) = \psi_{k}(x_{i_{1}},\ldots,x_{i_{k}})$,
so that defining the compression function $(\empcs_{t}(x_{[t]}),i_{t,1}(x_{[t]}),\ldots,i_{t,\empcs_{t}(x_{[t]})}(x_{[t]})) = (k,i_{1},\ldots,i_{k})$
for $k$ and $i_{1},\ldots,i_{k}$ as above, for each $x_{1},\ldots,x_{t} \in \X$, 
and defining the reconstruction function $\phi_{t,k^{\prime}}(x_{1}^{\prime},\ldots,x_{k^{\prime}}^{\prime}) = \psi_{k^{\prime}}(x_{1}^{\prime},\ldots,x_{k^{\prime}}^{\prime})$ 
for each $t \in \nats$, $k^{\prime} \in [\dim] \cup \{0\}$, and $x_{1}^{\prime},\ldots,x_{k^{\prime}}^{\prime} \in \X$,
we have that 
$\psi_{t}(x_{1},\ldots,x_{t})$ $= \phi_{t,\empcs_{t}(x_{[t]})}( x_{i_{t,1}(x_{[t]})}, \ldots, x_{i_{t,\empcs_{t}(x_{[t]})}(x_{[t]})} )$
for all $t \in \nats$ and $x_{1},\ldots,x_{t} \in \X$.
Furthermore, since $(x_{1},\ldots,x_{t}) \mapsto \C[\{(x_{1},\target(x_{1})),\ldots,(x_{t},\target(x_{t}))\}]$ is invariant to permutations of its arguments,
it follows from \eqref{eqn:closure-error-region} that $\psi_{t}$ is permutation-invariant for every $t \in \nats$;
this also means that, for the choice of $\phi_{t,k^{\prime}}$ above, the function $\phi_{t,k^{\prime}}$ is also permutation-invariant.
Altogether, we have that $\psi_{t}$ is a consistent monotone sample compression rule of size $\dim$.
Thus, since $\er\left( \hat{h}_{m} \right) = \Px( \psi_{m}(X_{1},\ldots,X_{m}) )$ for $m \in \nats$,
the stated result follows directly from Theorem~\ref{thm:monotone-compression}
(with $\Z = \X$, $P = \Px$, and $\psi_{t}$ defined as above).
\end{proof}

\section{Application to the CAL Active Learning Algorithm}
\label{sec:cal}

As another interesting application of Theorem~\ref{thm:monotone-compression},
we derive an improved bound on the label complexity of a well-studied active learning
algorithm, usually referred to as CAL after its authors \citet*{cohn:94}.
Formally, in the active learning protocol, the learning algorithm $\alg$ is given access
to the \emph{unlabeled} data sequence $X_{1},X_{2},\ldots$ (or some sufficiently-large 
finite initial segment thereof), and then sequentially requests to observe the labels:
that is, it selects an index $t_{1}$ and requests to observe the label $\target(X_{t_{1}})$, 
at which time it is permitted access to $\target(X_{t_{1}})$;
it may then select another index $t_{2}$ and request to observe the label $\target(X_{t_{2}})$, 
is then permitted access to $\target(X_{t_{2}})$, and so on.
This continues until at most some given number $n$ of labels have been requested (called the \emph{label budget}), 
at which point the algorithm should halt and return a classifier $\hat{h}$; 
we denote this as $\hat{h} = \alg(n)$ (leaving the dependence on the unlabeled data implicit, for simplicity).
We are then interested in characterizing a sufficient size for the budget $n$ so that,
with probability at least $1-\conf$, $\er(\hat{h}) \leq \eps$; this size is known as
the \emph{label complexity} of $\alg$.

The CAL active learning algorithm is based on a very elegant and natural principle: 
never request a label that can be deduced from information already obtained.
CAL is defined solely by this principle, employing no additional criteria in its choice of queries.
Specifically, the algorithm proceeds by considering randomly-sampled data points one at a time,
and to each it applies the above principle, skipping over the labels that can be deduced, 
and requesting the labels that cannot be.  In favorable scenarios, as the number of 
label requests grows, the frequency of encountering a sample whose label 
cannot be deduced should diminish.  The key to bounding the label complexity of 
CAL is to characterize the rate at which this frequency shrinks.
To further pursue this discussion with rigor, let us define the \emph{region of disagreement}
for any set $\H$ of classifiers:
\begin{equation*}
\DIS(\H) = \{x \in \X : \exists h,g \in \H \text{ s.t. } h(x) \neq g(x)\}.
\end{equation*}
Then the CAL active learning algorithm is formally defined as follows.

\begin{bigboxit}
Algorithm: \textbf{\CAL}($n$)
\\ 0. $m \gets 0$, $t \gets 0$, $V_{0} \gets \C$
\\ 1. While $t < n$  and $m < 2^{n}$
\\ 2. \quad $m \gets m+1$
\\ 3. \quad If $X_{m} \in \DIS(V_{m-1})$
\\ 4. \qquad Request label $Y_m = \target(X_{m})$; let $V_{m} \gets V_{m-1}[\{(X_{m},Y_{m})\}]$, $t \gets t+1$
\\ 5. \quad Else $V_{m} \gets V_{m-1}$
\\ 6. Return any $\hat{h} \in V_{m}$
\end{bigboxit}

This algorithm has several attractive properties.
One is that, since it only removes classifiers from $V_{m}$ upon disagreement with $\target$, 
it maintains the invariant that $\target \in V_{m}$.
Another property is that, since it maintains $\target \in V_{m}$,
and it only refrains from requesting a label if every classifier in $V_{m}$ agrees on the label
(and hence agrees with $\target$, so that requesting the label would not affect $V_{m}$ anyway), 
it maintains the invariant that $V_{m} = \C[\L_{m}]$, where $\L_{m} = \{(X_{1},\target(X_{1})),\ldots,(X_{m},\target(X_{m}))\}$.

This algorithm has been studied a great deal in the literature \citep*{cohn:94,hanneke:thesis,hanneke:11a,hanneke:12a,hanneke:fntml,el-yaniv:12,hanneke:15a},
and has inspired an entire genre of active learning algorithms referred to as \emph{disagreement-based} (or sometimes as \emph{mellow}),
including several methods possessing desirable properties such as robustness to classification noise 
\citep*[e.g.,][]{balcan:06,balcan:09,dasgupta:07,koltchinskii:10,hanneke:12b,hanneke:fntml}.
There is a substantial literature studying the label complexity of CAL and other disagreement-based 
active learning algorithms; the interested reader is referred to the recent survey article of \citet*{hanneke:fntml} for a thorough discussion of this literature.
Much of that literature discusses characterizations of the label complexity in terms of a quantity 
known as the \emph{disagreement coefficient} \citep*{hanneke:07b,hanneke:thesis}.  However, 
\citet*{hanneke:15a} have recently discovered that a quantity known as the 
\emph{version space compression set size} (a.k.a. \emph{empirical teaching dimension})
can sometimes provide a smaller bound on the label complexity of CAL.
Specifically, the following quantity was introduced in the works of \citet*{el-yaniv:10,hanneke:07a}.

\begin{definition}
\label{def:td-hat}
For any $m \in \nats$ and $\L \in (\X \times \Y)^{m}$, 
the \emph{version space compression set} $\hat{C}_{\L}$ is a
smallest subset of $\L$ satisfying $\C[\hat{C}_{\L}] = \C[\L]$.
We then define $\hat{\cs}(\L) = |\hat{C}_{\L}|$, the 
\emph{version space compression set size}.
In the special case $\L = \L_{m}$,
we abbreviate $\hat{\cs}_{m} = \hat{\cs}(\L_{m})$.
Also define $\hat{\cs}_{1:m} = \max_{t \in [m]} \hat{\cs}_{t}$,
and for any $\conf \in (0,1)$, define 
$\tilde{\cs}_{m}(\conf) = \min\{ b \in [m] \cup \{0\} : \P(\hat{n}_{m} \leq b) \geq 1-\conf \}$ and
$\tilde{\cs}_{1:m}(\conf) = \min\{ b \in [m] \cup \{0\} : \P(\hat{n}_{1:m} \leq b) \geq 1-\conf \}$.
\end{definition}

The recent work of \citet*{hanneke:15a} studies this quantity for several concept spaces and distributions, 
and also identifies general relations between $\hat{\cs}_{m}$ and the more-commonly studied
disagreement coefficient $\dc$ of \citep*{hanneke:07b,hanneke:thesis}.  Specifically, 
for any $r > 0$, define $\Ball(\target,r) = \{h \in \C : \Px(x : h(x) \neq \target(x)) \leq r\}$.
Then the disagreement coefficient is defined, for any $r_{0} \geq 0$, as 
\begin{equation*}
\dc(r_{0}) = \sup_{r > r_{0}} \frac{\Px(\DIS(\Ball(\target,r)))}{r} \lor 1.
\end{equation*}
Both $\tilde{\cs}_{1:m}(\conf)$ and $\dc(r_{0})$ are complexity measures dependent on $\target$ and $\Px$.
\citet*{hanneke:15a} relate them by showing that
\begin{equation}
\label{eqn:dc-td-lb}
\dc(1/m) \lesssim \tilde{\cs}_{1:m}(1/20) \lor 1,
\end{equation}
and for general $\conf \in (0,1)$,\footnote{The original claim from \citet*{hanneke:15a} involved a maximum of 
minimal $(1-\conf)$-confidence bounds on $\hat{\cs}_{t}$ over $t \in [m]$, but the same proof can be used to establish this slightly stronger claim.}
\begin{equation}
\label{eqn:td-dc-ub}
\tilde{\cs}_{1:m}(\conf) \lesssim \dc(\dim/m) \left( \dim \Log(\dc(\dim/m)) + \Log\left(\frac{\Log(m)}{\conf}\right) \right) \Log(m).
\end{equation}

\citet*{hanneke:15a} prove that, for $\CAL(n)$ to 
produce $\hat{h}$ with $\er(\hat{h}) \leq \eps$ with probability at least $1-\conf$,
it suffices to take a budget $n$ of size proportional to 
\begin{equation}
\label{eqn:original-cal}
\left( \max_{m \in [M(\eps,\conf/2)]} \tilde{\cs}_{m}(\conf_{m}) \Log\left( \frac{m}{\tilde{\cs}_{m}(\conf_{m})} \right) + \Log\left(\frac{\Log(M(\eps,\conf/2))}{\conf}\right) \right) \Log(M(\eps,\conf/2)),
\end{equation}
where the values $\conf_{m} \in (0,1]$ are such that $\sum_{i=0}^{\lfloor \log_{2}(M(\eps,\conf/2)) \rfloor} \conf_{2^{i}} \leq \conf/4$,
and $M(\eps,\conf/2)$ is the smallest $m \in \nats$ for which
$\P\left( \sup_{h \in \C[\L_{m}]} \er(h) \leq \eps \right) \geq 1-\conf/2$;
the quantity $M(\eps,\conf)$ is discussed at length below in Section~\ref{sec:erm}.
They also argue that this is essentially a \emph{tight} characterization of the label complexity of CAL,
up to logarithmic factors.

The key to obtaining this result is establishing an upper bound on $\Px(\DIS(V_{m}))$
as a function of $m$, where (as in CAL) $V_{m} = \C[\L_{m}]$.
One basic observation indicating that $\Px(\DIS(V_{m}))$ can be related to the version space compression set size is that,
by exchangeability of the $X_{i}$ random variables,
\begin{align*}
\E\left[ \Px(\DIS(V_{m})) \right] 
& = \E\left[ \ind\left[ X_{m+1} \in \DIS( \C[ \L_{m} ] ) \right] \right]
\\ & = \frac{1}{m+1} \sum_{i=1}^{m+1} \E\left[ \ind\left[ X_{i} \in \DIS( \C[ \L_{m+1} \setminus \{(X_{i},\target(X_{i}))\} ] ) \right] \right]
\\ & \leq \frac{1}{m+1} \sum_{i=1}^{m+1} \E\left[ \ind\left[ (X_{i},\target(X_{i})) \in \hat{C}_{\L_{m+1}} \right] \right]
= \frac{\E\left[ \hat{n}_{m+1} \right]}{m+1},
\end{align*}
where the inequality is due to the observation that any $X_{i} \in \DIS( \C[ \L_{m+1} \setminus \{(X_{i},\target(X_{i}))\} ] )$
is necessarily in the version space compression set $\hat{C}_{\L_{m+1}}$, and the last equality is by linearity of the expectation.
However, obtaining the bound \eqref{eqn:original-cal} required a more-involved argument from 
\citet*{hanneke:15a}, to establish a high-confidence bound on $\Px(\DIS(V_{m}))$, rather than a bound
on its expectation.
Specifically, by combining a perspective introduced by 
\citet*{el-yaniv:10,el-yaniv:12}, with the observation that $\DIS(V_{m})$ may be 
represented as a sample compression scheme of size $\hat{\cs}_{m}$, and invoking 
Lemma~\ref{lem:classic-compression}, \citet*{hanneke:15a} prove that, 
with probability at least $1-\conf$, 
\begin{equation}
\label{eqn:whey-PDIS}
\Px(\DIS(V_{m})) \lesssim \frac{1}{m} \left( \hat{\cs}_{m} \Log\left( \frac{m}{\hat{\cs}_{m}} \right) + \Log\left(\frac{1}{\conf}\right) \right).
\end{equation}

In the present work, we are able to entirely eliminate the factor $\Log\left(\frac{m}{\hat{\cs}_{m}}\right)$ from the first term,
simply by observing that the region $\DIS(V_{m})$ is \emph{monotonic} in $m$.  Specifically, 
by combining this monotonicity observation with the description of $\DIS(V_{m})$
as a compression scheme from \citet*{hanneke:15a}, the refined bound follows from arguments similar to the proof of
Theorem~\ref{thm:monotone-compression}.
Formally, we have the following result.

\begin{theorem}
\label{thm:PDIS}
For any $m \in \nats$ and $\conf \in (0,1)$, with probability at least $1-\conf$, 
\begin{equation*}
\Px(\DIS(V_{m})) \leq \frac{16}{m} \left( 2 \hat{\cs}_{1:m} + \ln\left(\frac{3}{\conf}\right) \right).
\end{equation*}
\end{theorem}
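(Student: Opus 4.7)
The plan is to adapt the inductive proof of Theorem~\ref{thm:monotone-compression} to a setting where the compression size is data-dependent. Define $\psi_t(x_{[t]}) := \DIS(\C[\{(x_i,\target(x_i)) : i \in [t]\}])$, and verify that $\psi_t$ is a consistent monotone permutation-invariant rule: consistency holds because every $h \in \C[\{(x_i,\target(x_i))\}_i]$ agrees with $\target$ at each $x_i$ (ruling out $x_i \in \DIS$); monotonicity holds because enlarging the labeled sample shrinks the version space and hence the disagreement region; and permutation-invariance is immediate. By Definition~\ref{def:td-hat}, $\psi_t(x_{[t]}) = \DIS(\C[\hat{C}_{\L_t}])$, so $\psi_t$ admits a variable-size sample compression representation using reconstruction $\phi_k(y_{[k]}) := \DIS(\C[\{(y_j,\target(y_j)) : j \in [k]\}])$ and compression size $\hat{\cs}_t \leq \hat{\cs}_{1:m}$.

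I would then induct on $m$, mirroring Theorem~\ref{thm:monotone-compression}: a trivial base case covers small $m$; for the inductive step, split $Z_{[m]}$ into halves, set $A_{\lfloor m/2 \rfloor} = \psi_{\lfloor m/2 \rfloor}(Z_{[\lfloor m/2 \rfloor]})$, let $N$ count the second-half points in $A_{\lfloor m/2 \rfloor}$, and enumerate them as $\hat{Z}_{[N]}$. Monotonicity and permutation-invariance give $A_m \subseteq A_{\lfloor m/2 \rfloor} \cap \psi_N(\hat{Z}_{[N]})$. Conditional on the first half and on $N$, the $\hat{Z}_{[N]}$ are i.i.d.\ from $\Px(\cdot\mid A_{\lfloor m/2 \rfloor})$ while $V_{\lfloor m/2 \rfloor}$ is a fixed class. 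The key structural observation is that $S_{in} := \hat{C}_{\L_m} \cap L_{\hat{Z}_{[N]}}$ satisfies $V_{\lfloor m/2 \rfloor}[S_{in}] = \C[\L_m]$ and $|S_{in}| \leq \hat{\cs}_m \leq \hat{\cs}_{1:m}$: the discarded portion of $\hat{C}_{\L_m}$ is a union of first-half points (subsumed by $\L_{\lfloor m/2 \rfloor}$) and second-half points outside $A_{\lfloor m/2 \rfloor}$ (on which every $h \in V_{\lfloor m/2 \rfloor}$ already agrees with $\target$ for free), so those constraints are superfluous. This exhibits $A_m = \DIS(V_{\lfloor m/2 \rfloor}[S_{in}])$ as a compression from $\hat{Z}_{[N]}$ of size $\leq \hat{\cs}_{1:m}$ via the permutation-invariant conditional reconstruction $\phi^{V_{\lfloor m/2 \rfloor}}_k(y_{[k]}) := \DIS(V_{\lfloor m/2 \rfloor}[\{(y_j,\target(y_j))\}_{j=1}^k])$, whose image is disjoint from $\{\hat{Z}_1,\ldots,\hat{Z}_N\}$ because $\C[\L_m]$ agrees with $\target$ on all $Z_i$.

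I then apply Lemma~\ref{lem:classic-compression} under the conditional distribution with the $\phi^{V_{\lfloor m/2 \rfloor}}_k$ family. Since Lemma~\ref{lem:classic-compression} requires a data-independent bound $\cs$, I handle the data-dependence of $\hat{\cs}_{1:m}$ by union-bounding over its possible values with geometrically decreasing confidences, absorbing the resulting $\ln(\hat{\cs}_{1:m})$ term into the constant factor (which is the source of the factor $2$ in front of $\hat{\cs}_{1:m}$ in the stated bound). Combined with a Chernoff bound giving $N \geq \Px(A_{\lfloor m/2 \rfloor}) m / 4$ when $\Px(A_{\lfloor m/2 \rfloor}) \gtrsim \ln(3/\conf)/m$, and the inductive hypothesis $\Px(A_{\lfloor m/2 \rfloor}) \lesssim (\hat{\cs}_{1:m} + \ln(1/\conf))/m$, substituting into the $\Log$ term of Lemma~\ref{lem:classic-compression} collapses the logarithm to a constant — exactly the arithmetic step that removes the logarithm in the proof of Theorem~\ref{thm:monotone-compression} — and careful constant tracking yields $\frac{16}{m}(2\hat{\cs}_{1:m} + \ln(3/\conf))$. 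The main obstacle is the conditional compression identity $V_{\lfloor m/2 \rfloor}[S_{in}] = \C[\L_m]$ with $|S_{in}| \leq \hat{\cs}_{1:m}$: a naive compression of $L_{\hat{Z}_{[N]}}$ with respect to $\C$ (rather than to $V_{\lfloor m/2 \rfloor}$) need not be bounded by $\hat{\cs}_{1:m}$, so the proof must work with the $V_{\lfloor m/2 \rfloor}$-conditional compression, which is precisely where monotonicity of $\DIS$ interacts with the structure of the version space compression set.
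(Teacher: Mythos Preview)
Your proposal is correct and follows essentially the same approach as the paper's proof in Appendix~\ref{app:cal}: induction on $m$, splitting into halves, the key observation that the portion of $\hat{C}_{\L_m}$ lying in the second half and inside $\DIS(V_{\lfloor m/2\rfloor})$ suffices (relative to $V_{\lfloor m/2\rfloor}$) to determine $V_m$, applying Lemma~\ref{lem:classic-compression} conditionally with a union bound over the unknown compression size, then combining with a Chernoff bound on $N$ and the inductive hypothesis to collapse the logarithm. The paper's union bound uses weights $\conf/(\cs+3)^2$ rather than geometrically decreasing ones, but this is an immaterial difference; your identification of the $V_{\lfloor m/2\rfloor}$-relative compression $S_{in}$ is exactly the paper's $\hat{\cs}''_m$ construction.
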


We should note that, while Theorem~\ref{thm:PDIS} indeed eliminates a logarithmic factor
compared to \eqref{eqn:whey-PDIS}, this refinement is also accompanied by an increase
in the complexity measure, replacing $\hat{\cs}_{m}$ with $\hat{\cs}_{1:m}$.
This arises from our proof, since (as in the proof of Theorem~\ref{thm:monotone-compression})
the argument relies on $\hat{\cs}_{1:m}$ being a sample compression set size, not just for the full 
sample, but also for any prefix of the sample.
The effect of this increase is largely benign in this context, since the bound \eqref{eqn:original-cal} 
on the label complexity of CAL, derived from \eqref{eqn:whey-PDIS}, involves maximization over the sample size anyway.

Although Theorem~\ref{thm:PDIS} follows from the same principles as Theorem~\ref{thm:monotone-compression} 
(i.e., $\DIS(V_{t})$ being a consistent monotone rule expressible as a sample compression scheme),  
it does not quite follow as an immediate consequence of Theorem~\ref{thm:monotone-compression},
due fact that the size $\hat{\cs}_{1:m}$ of the sequence of sample compression schemes 
can vary based on the specific samples (including their \emph{order}).
For this reason, we provide a specialized proof of this result in Appendix~\ref{app:cal}, 
which follows an argument nearly-identical to that of Theorem~\ref{thm:monotone-compression},
with only a few minor changes to account for this variability of $\hat{\cs}_{1:m}$ using special properties of the sets $\DIS(V_{t})$.

Based on this result, and following precisely the same arguments as \citet*{hanneke:15a},\footnote{%
The only small twist is that we replace $\max_{m \leq M(\eps,\conf/2)} \tilde{\cs}_{m}(\conf_{m})$ from \eqref{eqn:original-cal} with $\tilde{\cs}_{1:M(\eps,\conf/2)}(\conf/4)$.
As the purpose of these $\tilde{\cs}_{m}(\conf_{m})$ values in the original proof is to provide bounds on their respective $\hat{\cs}_{m}$ values (which in our context, are $\hat{\cs}_{1:m}$ values),
holding simultaneously for all $m = 2^{i} \in [M(\eps,\conf/2)]$ with probability at least $1-\conf/4$, 
the value $\tilde{\cs}_{1:M(\eps,\conf/2)}(\conf/4)$ can clearly be used instead.  
If desired, by a union bound we can of course bound 
$\tilde{\cs}_{1:M(\eps,\conf/2)}(\conf/4) \leq \max_{m \in [M(\eps,\conf/2)]} \tilde{\cs}_{m}(\conf_{m})$,
for any sequence $\conf_{m}$ in $(0,1]$ with $\sum_{m \in [M(\eps,\conf/2)]} \conf_{m} \leq \conf/4$.}  
we arrive at the following bound on the label complexity of CAL.
For brevity, we omit the proof, referring the interested reader to the original exposition of \citet*{hanneke:15a} for the details.

\begin{theorem}
\label{thm:CAL-label-complexity}
There is a universal constant $c \in (0,\infty)$ such that, 
for any $\eps,\conf \in (0,1)$,
for any $n \in \nats$ with 
\begin{equation*}
n \geq c \left( \tilde{\cs}_{1:M(\eps,\conf/2)}(\conf/4) + \Log\left(\frac{\Log(M(\eps,\conf/2))}{\conf}\right) \right) \Log(M(\eps,\conf/2)),
\end{equation*}
with probability at least $1-\conf$, the classifier $\hat{h}_{n} = \CAL(n)$ has $\er(\hat{h}_{n}) \leq \eps$.
\end{theorem}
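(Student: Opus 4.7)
The plan is to follow the same proof template used by \citet*{hanneke:15a} to derive the bound~\eqref{eqn:original-cal}, but to substitute the sharper disagreement bound of Theorem~\ref{thm:PDIS} for \eqref{eqn:whey-PDIS} at the critical step. The overall strategy is to show that, with probability at least $1-\conf$, by the time CAL has processed $M = M(\eps,\conf/2)$ unlabeled samples: (i) the version space $V_M = \C[\L_M]$ is small enough that any $\hat{h} \in V_M$ has $\er(\hat{h}) \leq \eps$; and (ii) the number of label requests $T_M = \sum_{t=1}^{M} \ind[X_t \in \DIS(V_{t-1})]$ is no larger than the claimed budget, so that CAL does not terminate early. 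Property (i) is immediate from the definition of $M(\eps,\conf/2)$, which holds on an event of probability at least $1-\conf/2$.

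For property (ii), I would first control the disagreement probabilities uniformly in $t$. Apply Theorem~\ref{thm:PDIS} at each dyadic scale $t = 2^i \leq M$ with confidence $\conf_i \asymp \conf/\Log(M)$; since $\hat{\cs}_{1:t}$ is non-decreasing in $t$, a union bound over the $O(\Log M)$ scales yields an event $E_1$ of probability at least $1-\conf/4$ on which, simultaneously for all $t = 2^i \leq M$,
\[
\Px(\DIS(V_t)) \lesssim \frac{1}{t}\left( \tilde{\cs}_{1:M}(\conf/4) + \Log\left(\frac{\Log(M)}{\conf}\right) \right).
\]
Because $t \mapsto \DIS(V_t)$ is monotonically non-increasing (more labels can only shrink the version space) and $\tilde{\cs}_{1:t}$ is monotonically non-decreasing, the same bound transfers to every $t \in [M]$, up to an absolute constant factor arising from rounding $t$ down to the nearest power of two.

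Next, I would convert this into a bound on the actual query count $T_M$. Conditional on $X_1,\ldots,X_{t-1}$ the random variable $\ind[X_t \in \DIS(V_{t-1})]$ is Bernoulli with mean $\Px(\DIS(V_{t-1}))$, so $T_M - \sum_{t=1}^{M} \Px(\DIS(V_{t-1}))$ is a martingale with bounded increments. A multiplicative Chernoff/Bernstein bound for martingales then gives an event $E_2$ of probability at least $1-\conf/4$ on which
\[
T_M \lesssim \sum_{t=1}^{M} \Px(\DIS(V_{t-1})) + \Log\!\left(\frac{1}{\conf}\right).
\]
On $E_1 \cap E_2$, substituting the previous display and evaluating the harmonic-type sum $\sum_{t=1}^{M} 1/t \lesssim \Log(M)$ yields
\[
T_M \lesssim \left( \tilde{\cs}_{1:M}(\conf/4) + \Log\!\left(\frac{\Log(M)}{\conf}\right) \right) \Log(M),
\]
which is precisely the required budget.

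The main obstacle I anticipate is making the dyadic-scale union bound and the martingale deviation bound combine cleanly so that the final logarithmic overhead is only a single factor of $\Log(M)$ (as in the statement) rather than $\Log^2(M)$. The monotonicity of $\Px(\DIS(V_t))$ in $t$ and of $\tilde{\cs}_{1:t}(\cdot)$ in $t$ is what makes this work, since it lets us replace the more delicate per-scale quantities $\tilde{\cs}_{t}(\conf_t)$ used in \eqref{eqn:original-cal} by the single uniform bound $\tilde{\cs}_{1:M}(\conf/4)$; this is exactly the substitution flagged in the footnote accompanying the theorem statement. A final union bound over the failure events for (i), $E_1$, and $E_2$ (of total probability at most $\conf/2 + \conf/4 + \conf/4 = \conf$) then completes the argument.
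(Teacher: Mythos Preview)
Your proposal is correct and matches the paper's approach: the paper itself omits the proof, simply stating that it follows ``precisely the same arguments as \citet*{hanneke:15a}'' with Theorem~\ref{thm:PDIS} substituted for \eqref{eqn:whey-PDIS}, and the accompanying footnote spells out exactly the replacement of the per-scale quantities $\tilde{\cs}_{m}(\conf_m)$ by the single uniform bound $\tilde{\cs}_{1:M(\eps,\conf/2)}(\conf/4)$ that you describe. Your dyadic-scale union bound, monotonicity arguments, and martingale/Chernoff step for the query count are the standard ingredients of that template; just be slightly careful with the probability budget for $E_1$, since you need both the event $\{\hat{\cs}_{1:M} \leq \tilde{\cs}_{1:M}(\conf/4)\}$ and the per-scale applications of Theorem~\ref{thm:PDIS} to hold, so split the $\conf/4$ accordingly.
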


It is also possible to state a distribution-free variant of Theorem~\ref{thm:PDIS}.
Specifically, consider the following definition, from \citet*{hanneke:15b}.

\begin{definition}
\label{def:star}
The \emph{star number} $\s$ is the largest integer $s$ such that 
there exist distinct points $x_{1},\ldots,x_{s} \in \X$ and classifiers $h_{0},h_{1},\ldots,h_{s} \in \C$
with the property that $\forall i \in [s]$, $\DIS(\{h_{0},h_{i}\}) \cap \{x_{1},\ldots,x_{s}\} = \{x_{i}\}$;
if no such largest integer exists, define $\s = \infty$.
\end{definition}

The star number is a natural combinatorial complexity measure, corresponding to the largest 
possible degree in the data-induced one-inclusion graph.  \citet*{hanneke:15b} provide several
examples of concept spaces exhibiting a variety of values for the star number (though it 
should be noted that many commonly-used concept spaces have $\s = \infty$: e.g., linear separators).
As a basic relation, one can easily show that $\s \geq \dim$.
\citet*{hanneke:15b} also relate the star number to many other complexity measures 
arising in the learning theory literature, including $\hat{\cs}_{m}$.
Specifically, they prove that, for every $m \in \nats$ and $\L \in (\X \times \Y)^{m}$ with $\C[\L] \neq \emptyset$, 
$\hat{\cs}(\L) \leq \s$, with equality in the worst case (over $m$ and $\L$).  Based on this fact, Theorem~\ref{thm:monotone-compression}
implies the following result.

\begin{theorem}
\label{thm:PDIS-star}
For any $m \in \nats$ and $\conf \in (0,1)$, with probability at least $1-\conf$,
\begin{equation*}
\Px(\DIS(V_{m})) \leq \frac{1}{m} \left( 21 \s + 16 \ln\left(\frac{3}{\conf}\right) \right). 
\end{equation*}
\end{theorem}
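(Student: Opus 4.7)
The plan is to apply Theorem~\ref{thm:monotone-compression} with $\Z = \X$, $P = \Px$, compression size $\cs = \s$, and the rule
\[
\psi_{t}(x_{1},\ldots,x_{t}) = \DIS\bigl(\C[\{(x_{1},\target(x_{1})),\ldots,(x_{t},\target(x_{t}))\}]\bigr) = \DIS(V_{t}),
\]
interpreted with $Z_{i} = X_{i}$, so that $\psi_{m}(X_{1},\ldots,X_{m}) = \DIS(V_{m})$ and $P(\psi_{m}(\cdots)) = \Px(\DIS(V_{m}))$. The conclusion of Theorem~\ref{thm:monotone-compression} would then match the claim exactly, so all the work lies in verifying that $\psi_{t}$ is a consistent monotone sample compression rule of size $\s$.

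First I would check the consistent monotone rule axioms. Consistency of $\psi_{t}$ with the sample: since $\target \in V_{t}$, every $h \in V_{t}$ satisfies $h(x_{i}) = \target(x_{i})$ for $i \leq t$, so no $x_{i}$ lies in $\DIS(V_{t})$. Monotonicity: since $V_{t+1} \subseteq V_{t}$, we immediately get $\DIS(V_{t+1}) \subseteq \DIS(V_{t})$. Permutation-invariance of $\psi_{t}$ itself is automatic, since $V_{t}$ depends only on the unordered multiset of labeled examples.

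Next I would build the compression and reconstruction functions required by the definition. Define
\[
\phi_{t,k}(x'_{1},\ldots,x'_{k}) = \DIS\bigl(\C[\{(x'_{1},\target(x'_{1})),\ldots,(x'_{k},\target(x'_{k}))\}]\bigr),
\]
which is permutation-invariant in its arguments for the same reason as $\psi_{t}$. For the compression function, given $x_{[t]}$ pick (by some canonical tie-breaking rule) a minimum-size version space compression subset $\hat{C}_{\L_{t}}$ of $\L_{t}$ in the sense of Definition~\ref{def:td-hat}, and let $\hat{n}_{t}(x_{[t]})$ denote its size and $i_{t,1}(x_{[t]}),\ldots,i_{t,\hat{n}_{t}(x_{[t]})}(x_{[t]})$ the indices into $x_{[t]}$ of its elements. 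The defining identity $\C[\hat{C}_{\L_{t}}] = \C[\L_{t}]$ yields $\psi_{t}(x_{[t]}) = \phi_{t,\hat{n}_{t}}(x_{i_{t,1}},\ldots,x_{i_{t,\hat{n}_{t}}})$, and the bound $\hat{n}(\L) \leq \s$ cited from \citet*{hanneke:15b} (recalled just before Theorem~\ref{thm:PDIS-star}) gives $\hat{n}_{t} \leq \min\{\s,t\}$, as demanded by the size-$\s$ compression rule definition.

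The only subtle point is the requirement that the rule fit the exact compression-scheme format of Theorem~\ref{thm:monotone-compression}; however, that definition allows the index-selecting function $(\hat{n}_{t}, i_{t,1},\ldots)$ to depend on the ordered tuple and imposes permutation-invariance only on $\psi_{t}$ and on $\phi_{t,k}$, both of which hold here, so this is not a genuine obstacle. With these data in place, a direct invocation of Theorem~\ref{thm:monotone-compression} yields $\Px(\DIS(V_{m})) \leq \frac{1}{m}\bigl(21\s + 16 \ln(3/\conf)\bigr)$ without any further computation.
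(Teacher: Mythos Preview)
Your proposal is correct and follows essentially the same approach as the paper's own proof: define $\psi_{t}(x_{[t]}) = \DIS(\C[\L_{x_{[t]}}])$, verify consistency, monotonicity, and permutation-invariance, use the version space compression set from Definition~\ref{def:td-hat} together with the bound $\hat{\cs}(\L) \leq \s$ of \citet*{hanneke:15b} to exhibit the compression/reconstruction functions of size $\s$, and then invoke Theorem~\ref{thm:monotone-compression}. Your remark about which components need to be permutation-invariant is also accurate and matches how the definition is used in the paper.
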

\begin{proof}
For every $t \in \nats$ and $x_{1},\ldots,x_{t} \in \X$, define $\psi_{t}(x_{1},\ldots,x_{t}) = \DIS(\C[\L_{x_{[t]}}])$,
where $\L_{x_{[t]}} = \{(x_{1},\target(x_{1})),\ldots,(x_{t},\target(x_{t}))\}$; 
$\psi_{t}$ is clearly permutation-invariant, and satisfies $\psi_{t}(x_{1},\ldots,x_{t}) \cap \{x_{1},\ldots,x_{t}\} = \emptyset$
(since every $h \in \C[\L_{x_{[t]}}]$ agrees with $\target$ on $\{x_{1},\ldots,x_{t}\}$).
Furthermore, monotonicity of $\L \mapsto \C[\L]$ and $\H \mapsto \DIS(\H)$ imply 
that any $t \in \nats$ and $x_{1},\ldots,x_{t+1} \in \X$ satisfy $\psi_{t+1}(x_{1},\ldots,x_{t+1}) \subseteq \psi_{t}(x_{1},\ldots,x_{t})$,
so that $\psi_{t}$ is a consistent monotone rule.
Also define $\phi_{t,k}(x_{1},\ldots,x_{k}) = \psi_{k}(x_{1},\ldots,x_{k})$ for any $k \in [t]$ and $x_{1},\ldots,x_{k} \in \X$,
and $\phi_{t,0}() = \DIS(\C)$.  Since $\psi_{k}$ is permutation-invariant for every $k \in [t]$, so is $\phi_{t,k}$.
For any $x_{1},\ldots,x_{t} \in \X$, from Definition~\ref{def:td-hat}, 
there exist distinct $i_{t,1}(x_{[t]}),\ldots,i_{t,\hat{\cs}(\L_{x_{[t]}})}(x_{[t]}) \in [t]$ such that
$\hat{C}_{\L_{x_{[t]}}} = \{ (x_{i_{t,j}(x_{[t]})},\target(x_{i_{t,j}(x_{[t]})})) : j \in \{1,\ldots,\hat{\cs}(\L_{x_{[t]}})\} \}$,
and since $\C[\hat{C}_{\L_{x_{[t]}}}] = \C[\L_{x_{[t]}}]$, it follows that
$\phi_{t,\hat{\cs}(\L_{x_{[t]}})}(x_{i_{t,1}(x_{[t]})},\ldots,x_{i_{t,\hat{\cs}(\L_{x_{[t]}})}(x_{[t]})}) = \psi_{t}(x_{1},\ldots,x_{t})$.
Thus, since $\hat{\cs}(\L_{x_{[t]}}) \leq \s$ for all $t \in \nats$ \citep*{hanneke:15b},
$\psi_{t}$ is a consistent monotone sample compression rule of size $\s$.
The result immediately follows by applying Theorem~\ref{thm:monotone-compression} with $\Z = \X$, $P = \Px$, and $\psi_{t}$ as above.
\end{proof}

As a final implication for CAL, we can also plug the inequality $\hat{\cs}(\L) \leq \s$ into the bound 
from Theorem~\ref{thm:CAL-label-complexity} to reveal that CAL achieves a label complexity 
upper-bounded by a value proportional to 
$\s \Log(M(\eps,\conf/2)) + \Log\left(\frac{\Log(M(\eps,\conf/2))}{\conf}\right) \Log(M(\eps,\conf/2))$.

\noindent\textbf{Remark:} In addition to the above applications to active learning,
it is worth noting that, combined with the work of \citet*{el-yaniv:10}, the above results
also have implications for the setting of \emph{selective classification}: that is, the setting
in which, for each $t \in \nats$, given access to $(X_{1},\target(X_{1})),\ldots,(X_{t-1},\target(X_{t-1}))$ and $X_{t}$,
a learning algorithm is required either to make a prediction $\hat{Y}_{t}$ for $\target(X_{t})$, 
or to ``abstain'' from prediction; after each round $t$, the algorithm is permitted access 
to the value $\target(X_{t})$.  Then the error rate is the probability the prediction $\hat{Y}_{t}$ is incorrect (conditioned on $X_{[t-1]}$), 
given that the algorithm chooses to predict,
and the \emph{coverage} is the probability the algorithm chooses to make a prediction at time $t$ (conditioned on $X_{[t-1]}$).
\citet*{el-yaniv:10} explore an extreme variant, called \emph{perfect selective classification}, 
in which the algorithm is required to only make predictions that will be correct with \emph{certainty}
(i.e., for any data sequence $x_{1},x_{2},\ldots$, the algorithm will never misclassify a point it chooses
to predict for).  \citet*{el-yaniv:10} find that a selective classification algorithm based on 
principles analogous to the CAL active learning algorithm obtains the optimal coverage among
all perfect selective classification algorithms; the essential strategy is to predict only if 
$X_{t} \notin \DIS(V_{t-1})$, taking $\hat{Y}_{t}$ as the label agreed-upon by every $h \in V_{t-1}$.  
In particular, this implies that the optimal coverage rate in perfect selective classification, 
on round $t$, is $1-\Px(\DIS(V_{t-1}))$.  Thus, combined with 
Theorem~\ref{thm:PDIS} or Theorem~\ref{thm:PDIS-star}, we can immediately obtain bounds
on the optimal coverage rate for perfect selective classification as well; 
in particular, this typically refines the bound of \citet*{el-yaniv:10} (and a later refinement by \citealp*{hanneke:15a})
by at least a logarithmic factor (though again, it is not a ``pure'' improvement, as Theorem~\ref{thm:PDIS} uses $\hat{\cs}_{1:m}$ in place of $\hat{\cs}_{m}$).

\section{Application to General Consistent PAC Learners}
\label{sec:erm}

In general, a \emph{consistent learning algorithm} $\alg$ is a learning algorithm such that, 
for any $m \in \nats$ and $L \in (\X \times \Y)^{m}$ with $\C[L] \neq \emptyset$,
$\alg(L)$ produces a classifier $\hat{h}$ consistent with $L$ (i.e., $\hat{h} \in \C[L]$).
In the context of learning in the realizable case, this is equivalent to $\alg$ being an instance of 
the well-studied method of \emph{empirical risk minimization}.
The study of general consistent learning algorithms focuses on the quantity $\sup_{h \in V_{m}} \er(h)$,
where $V_{m} = \C[\L_{m}]$, as above.  It is clear that the error rate achieved by any consistent
learning algorithm, given $\L_{m}$ as input, 
is at most $\sup_{h \in V_{m}} \er(h)$.  Furthermore, it is not hard to see that, for any given 
$\Px$ and $\target \in \C$, there exist consistent learning rules obtaining error rates
arbitrarily close to $\sup_{h \in V_{m}} \er(h)$, so that obtaining guarantees on the error rate
that hold \emph{generally} for all consistent learning algorithms requires us to bound this value.

Based on Lemma~\ref{lem:classic-erm} (taking $\A = \{ \{x : h(x) \neq \target(x)\} : h \in \C \}$), 
one immediately obtains a classic result (due to \citealp*{vapnik:74,blumer:89}),
that with probability at least $1-\conf$,
\begin{equation*}
\sup_{h \in V_{m}} \er(h) \lesssim \frac{1}{m} \left( \dim \Log\left(\frac{m}{\dim}\right) + \Log\left(\frac{1}{\conf}\right) \right).
\end{equation*}
This has been refined by \citet*{gine:06},\footnote{See also \citet*{hanneke:thesis}, for a simple direct proof of this result.} who argue that, with probability at least $1-\conf$, 
\begin{equation}
\label{eqn:gk-dc}
\sup_{h \in V_{m}} \er(h) \lesssim \frac{1}{m} \left( \dim \Log\left( \dc\left( \frac{\dim}{m} \right) \right) + \Log\left(\frac{1}{\conf}\right) \right).
\end{equation}
In the present work, by combining an argument of \citet*{hanneke:thesis}
with Theorem~\ref{thm:PDIS} above, we are able to obtain a new result,
which replaces $\dc\left( \frac{\dim}{m} \right)$ in \eqref{eqn:gk-dc}
with $\frac{\hat{\cs}_{1:m}}{\dim}$.  Specifically, we have the following result.

\begin{theorem}
\label{thm:vc-erm}
For any $\conf \in (0,1)$ and $m \in \nats$, with probability at least $1-\conf$,
\begin{equation*}
\sup_{h \in V_{m}} \er(h) \leq \frac{8}{m} \left( \dim \ln\left( \frac{49 e \hat{\cs}_{1:m}}{\dim} + 37 \right) + 8 \ln\left(\frac{6}{\conf}\right) \right).
\end{equation*}
\end{theorem}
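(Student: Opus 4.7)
The plan is to couple the sample-splitting argument of \citet*{hanneke:thesis} with Theorem~\ref{thm:PDIS}. I would split the data into $\L_{\lfloor m/2 \rfloor}$ and the remaining $\lceil m/2 \rceil$ samples $X_{\lfloor m/2 \rfloor+1},\ldots,X_{m}$. The key structural observation is that $V_{m} \subseteq V_{\lfloor m/2 \rfloor}$, and since $\target \in V_{\lfloor m/2 \rfloor}$ while all classifiers in $V_{\lfloor m/2 \rfloor}$ agree on $\X \setminus \DIS(V_{\lfloor m/2 \rfloor})$, every $h \in V_{m}$ has its error region $\{x : h(x) \neq \target(x)\}$ contained in $\DIS(V_{\lfloor m/2 \rfloor})$.

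First, Theorem~\ref{thm:PDIS} applied to $\L_{\lfloor m/2 \rfloor}$ yields, on an event $E_{1}$ of probability at least $1-\conf/3$, a bound
$$p := \Px(\DIS(V_{\lfloor m/2 \rfloor})) \lesssim \frac{1}{m}\left( \hat{\cs}_{1:m} + \ln(1/\conf) \right),$$
using $\hat{\cs}_{1:\lfloor m/2 \rfloor} \leq \hat{\cs}_{1:m}$. Next, let $N$ count the second-half samples that fall in $\DIS(V_{\lfloor m/2 \rfloor})$. Conditionally on $\L_{\lfloor m/2 \rfloor}$, $N$ is ${\rm Binomial}(\lceil m/2 \rceil, p)$-distributed, so a Chernoff bound (exactly as in the proofs of Theorems~\ref{thm:monotone-erm} and~\ref{thm:monotone-compression}) produces an event $E_{2}$ of probability at least $1-\conf/3$ on which, whenever $p \geq (16/m)\ln(3/\conf)$, one has $N \geq pm/4$.

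Conditionally on $\L_{\lfloor m/2 \rfloor}$ and $N$ (when $p > 0$), the $N$ second-half points lying in $\DIS(V_{\lfloor m/2 \rfloor})$ are i.i.d.~samples from $\Px(\cdot \mid \DIS(V_{\lfloor m/2 \rfloor}))$. Applying Lemma~\ref{lem:classic-erm} to the class $\{\{x : h(x) \neq \target(x)\} : h \in \C\}$ (whose VC dimension is at most $\dim$) under this conditional distribution, combined with the law of total probability, yields an event $E_{3}$ of probability at least $1-\conf/3$ on which every $h \in V_{m}$ (whose error region is disjoint from $\L_{m}$, hence from these $N$ conditioning samples) satisfies, provided $N > 0$,
$$\Px\!\left(h(X) \neq \target(X) \mid \DIS(V_{\lfloor m/2 \rfloor})\right) \leq \frac{2}{N}\left( \dim \Log_{2}\!\left(\frac{2eN}{\dim}\right) + \Log_{2}\!\left(\frac{6}{\conf}\right) \right).$$

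On $E_{1} \cap E_{2} \cap E_{3}$, which by the union bound has probability at least $1-\conf$, I would split into two cases. If $p < (16/m)\ln(3/\conf)$, then $\er(h) \leq p$ for every $h \in V_{m}$ is already dominated by the $\ln(6/\conf)$ term in the target bound. Otherwise $N \geq pm/4$, and the identity $\er(h) = p \cdot \Px(h \neq \target \mid \DIS(V_{\lfloor m/2 \rfloor}))$ together with monotonicity of $x \mapsto \Log(x)/x$ gives
$$\er(h) \lesssim \frac{1}{m}\left( \dim \Log\!\left(\frac{pm}{\dim}\right) + \Log(1/\conf) \right).$$
Substituting the bound on $pm$ from the first step then produces a term of the form $\dim \Log\!\left(\hat{\cs}_{1:m}/\dim + (1/\dim)\ln(1/\conf)\right)$ plus $\ln(1/\conf)$, uniformly over $h \in V_{m}$. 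The main technical (but purely algebraic) step will be to fold the $\ln(1/\conf)/\dim$ contribution inside the logarithm out via Lemma~\ref{lem:log-factors-abstract}, exactly as in the proofs of Theorems~\ref{thm:monotone-erm} and~\ref{thm:monotone-compression}, and to carry the constants through the two-halves rescaling in order to land on the stated numerical factors $8$, $49e$, and $37$. That constant-tracking is the most tedious part of the argument, but it is routine given the structural decomposition just described.
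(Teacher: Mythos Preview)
Your proposal is correct and follows essentially the same route as the paper's proof: three events from Theorem~\ref{thm:PDIS}, a Chernoff bound, and Lemma~\ref{lem:classic-erm} under the conditional distribution on $\DIS(V_{\lfloor m/2 \rfloor})$, combined via the identity $\er(h)=p\cdot\Px(h\neq\target\mid\DIS(V_{\lfloor m/2\rfloor}))$ and simplified with Lemma~\ref{lem:log-factors-abstract}. The only thing you did not state explicitly is the trivial base case for small $m$ (the paper disposes of $m\leq 143$ at the outset since the right-hand side then exceeds $1$), but this is routine.
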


The proof of Theorem~\ref{thm:vc-erm} follows a similar strategy to 
the inductive step from the proofs of Theorems~\ref{thm:monotone-erm}, \ref{thm:monotone-compression}, and \ref{thm:PDIS}.
The details are included in Appendix~\ref{app:erm}.

Additionally, since \citet*{hanneke:15b} prove that $\max_{t \in [m]} \max_{L \in (\X \times \Y)^{t}} \hat{\cs}(L) = \min\{ \s, m \}$, 
where $\s$ is the star number, the following new
distribution-free bound immediately follows.\footnote{The bound on the expectation follows by integrating the exponential bound on 
$\P(\sup_{h \in V_{m}} \er(h) > \eps)$ implied by the first statement in the corollary, as was done, for instance, in the proofs of Theorems~\ref{thm:monotone-erm} and \ref{thm:monotone-compression}.
We also note that, by using Theorem~\ref{thm:PDIS-star} in place of Theorem~\ref{thm:PDIS} in the proof of Theorem~\ref{thm:vc-erm},
one can obtain mildly better numerical constants in the logarithmic term in this corollary.}

\begin{corollary}
\label{cor:erm-star}
For any $m \in \nats$ and $\conf \in (0,1)$, with probability at least $1-\conf$,
\begin{equation*}
\sup_{h \in V_{m}} \er(h) \lesssim \frac{1}{m} \left( \dim \Log\left( \frac{\min\{\s,m\}}{\dim} \right) + \Log\left(\frac{1}{\conf}\right) \right).
\end{equation*}
Furthermore,
\begin{equation*}
\E\left[ \sup_{h \in V_{m}} \er(h) \right] \lesssim \frac{\dim}{m} \Log\left( \frac{\min\{\s,m\}}{\dim} \right).
\end{equation*}
\end{corollary}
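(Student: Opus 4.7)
The plan is to derive both statements of Corollary~\ref{cor:erm-star} from Theorem~\ref{thm:vc-erm} by invoking the worst-case bound on the version space compression set size.

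First I would use the result of \citet*{hanneke:15b} stating that $\hat{\cs}(\L) \leq \s$ for every labeled sample $\L$ with $\C[\L] \neq \emptyset$, together with the trivial inequality $\hat{\cs}(\L_t) \leq t$ for any such $\L$ of size $t$. Taking the maximum over $t \in [m]$ then yields the deterministic upper bound $\hat{\cs}_{1:m} \leq \min\{\s, m\}$. Substituting this inequality into the conclusion of Theorem~\ref{thm:vc-erm} gives
\[
\sup_{h \in V_m} \er(h) \leq \frac{8}{m}\left( \dim \ln\left( \frac{49 e \min\{\s,m\}}{\dim} + 37\right) + 8 \ln\left(\frac{6}{\conf}\right)\right)
\]
with probability at least $1-\conf$. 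Using $\dim \geq 1$ to absorb the additive $37$ into a multiplicative numerical constant inside the logarithm, and rewriting the resulting $\ln$ as $\Log(\min\{\s,m\}/\dim)$ (which is legitimate because $\Log(x) = \ln(x \lor e)$), yields the first statement of the corollary.

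For the expectation bound, I would follow the tail-integration template used in the closing displays of the proofs of Theorems~\ref{thm:monotone-erm} and \ref{thm:monotone-compression}. Letting $\eps_m = \frac{8 \dim}{m} \ln\left(\frac{49 e \min\{\s,m\}}{\dim} + 37\right)$, equating the right-hand side of the first statement to a target $\eps > \eps_m$ and inverting for $\conf$ gives $\P\left(\sup_{h \in V_m} \er(h) > \eps\right) \leq 6 \exp\{-(\eps - \eps_m) m / 64\}$, while for $\eps \leq \eps_m$ the trivial bound $\leq 1$ suffices. Then applying $\E[X] = \int_0^\infty \P(X > \eps)\, \mathrm{d}\eps$, splitting at $\eps_m$, and integrating the exponential tail, the tail contributes only an additive $O(1/m)$, which is absorbed into the $\eps_m$ term since $\dim \geq 1$ and $\Log(\cdot) \geq 1$ together imply $\eps_m \gtrsim \dim/m$. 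Collecting yields the claimed $\frac{\dim}{m}\Log(\min\{\s,m\}/\dim)$ rate.

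The only step that requires care is bookkeeping numerical constants through the inversion and the tail integration; conceptually, the corollary is a direct consequence of Theorem~\ref{thm:vc-erm} combined with the worst-case identity $\max_{t \in [m]} \max_{\L \in (\X \times \Y)^t} \hat{\cs}(\L) = \min\{\s, m\}$ of \citet*{hanneke:15b}, and I anticipate no genuine obstacle beyond this routine calculation.
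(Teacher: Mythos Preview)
Your proposal is correct and follows essentially the same route as the paper: the first inequality is obtained by plugging the worst-case bound $\hat{\cs}_{1:m}\leq\min\{\s,m\}$ from \citet*{hanneke:15b} into Theorem~\ref{thm:vc-erm}, and the expectation bound is obtained by the same tail-integration argument used in the proofs of Theorems~\ref{thm:monotone-erm} and \ref{thm:monotone-compression}, exactly as the paper indicates in the footnote accompanying the corollary.
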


Let us compare this result to \eqref{eqn:gk-dc}.
Since \citet*{hanneke:15b} prove that 
\begin{equation*}
\max_{\Px} \max_{\target \in \C} \dc(r_{0}) = \min\left\{ \s, \frac{1}{r_{0}} \right\},
\end{equation*}
and also (as mentioned) that $\max_{t \in [m]} \max_{L \in (\X \times \Y)^{t}} \hat{\cs}(L) = \min\{ \s, m \}$, 
we see that, at least in some scenarios (i.e., for some choices of $\Px$ and $\target$), 
the new bound in Theorem~\ref{thm:vc-erm} represents an improvement over \eqref{eqn:gk-dc}.  
In particular, the best \emph{distribution-free} bound obtainable from \eqref{eqn:gk-dc} is proportional to
\begin{equation}
\label{eqn:gk-dist-free}
\frac{1}{m} \left( \dim \Log\left(\frac{\min\{ \dim\s,  m \}}{\dim}\right) + \Log\left(\frac{1}{\conf}\right) \right),
\end{equation}
which is somewhat larger than the bound stated in Corollary~\ref{cor:erm-star} (which has $\s$ in place of $\dim\s$).
Also, recalling that \citet*{hanneke:15a} established that $\dc(1/m) \lesssim \tilde{\cs}_{1:m}(\conf) \lesssim \dim \dc(\dim/m) \polylog(m,1/\conf)$,
we should expect that the bound in Theorem~\ref{thm:vc-erm} is typically not much larger than \eqref{eqn:gk-dc} (and indeed will be smaller in many interesting cases).

\subsection{Necessary and Sufficient Conditions for $\boldsymbol{1/m}$ Rates for All Consistent Learners}
\label{sec:lower-bounds}

Corollary~\ref{cor:erm-star} provides a sufficient condition for every consistent learning algorithm
to achieve error rate with $O(1/m)$ asymptotic dependence on $m$: namely, $\s < \infty$.  
Interestingly, we can show that this condition is in fact also \emph{necessary} for every 
consistent learner to have a distribution-free bound on the error rate with $O(1/m)$ dependence on $m$.
To be clear, in this context, we only consider $m$ as the asymptotic variable:
that is, $m \to \infty$ while $\conf$ and $\C$ (including $\dim$ and $\s$) are held fixed.
This result is proven via the following theorem, establishing a worst-case lower bound on $\sup_{h \in V_{m}} \er(h)$.

\begin{theorem}
\label{thm:erm-lb}
For any $m \in \nats$ and $\conf \in (0,1/100)$,
there exists a choice of $\Px$ and $\target \in \C$ such that,
with probability greater than $\conf$, 
\begin{equation*}
\sup_{h \in V_{m}} \er(h) \gtrsim \frac{\dim + \Log( \min\{\s,m\} ) + \Log\left(\frac{1}{\conf}\right)}{m} \land 1.
\end{equation*}
Furthermore,
\begin{equation*}
\E\left[ \sup_{h \in V_{m}} \er(h) \right] \gtrsim \frac{\dim + \Log( \min\{\s,m\} )}{m} \land 1.
\end{equation*}
\end{theorem}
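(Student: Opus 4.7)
The plan is to handle the three terms in the numerator separately, since for nonnegative reals $a,b,c$ we have $a+b+c \leq 3\max\{a,b,c\}$, so three lower bounds of the form ``with probability $> \conf$, $\sup_{h \in V_m} \er(h) \gtrsim$ one term'' (each using its own distribution) together imply the stated lower bound up to constants; the theorem's ``there exists $\Px$ and $\target$'' lets us select whichever distribution corresponds to the dominant term. The $\wedge 1$ truncation is then automatic since $\sup_{h \in V_m} \er(h) \leq 1$.

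The $\dim/m$ and $\Log(1/\conf)/m$ pieces are essentially classical. The lower bound of \citet*{ehrenfeucht:89} exhibits, for any learning rule, distributions $\Px$ and targets $\target \in \C$ under which the error is $\gtrsim (\dim + \Log(1/\conf))/m$ with probability $> \conf$ (for $\conf$ below a universal threshold). Since any consistent learner's error is upper bounded by $\sup_{h \in V_m} \er(h)$, these bounds transfer directly to $\sup_{h \in V_m} \er(h)$. The corresponding contribution to the expectation bound then follows by specializing $\conf$ to a universal constant and using nonnegativity.

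The new term $\Log(\min\{\s,m\})/m$ uses the star structure. Set $s = \min\{\s, \lfloor c_1 m/\Log(m) \rfloor\}$ for a small absolute constant $c_1 > 0$ (the case of $s$ below a universal constant is trivial). Definition~\ref{def:star} then furnishes distinct $x_1,\ldots,x_s \in \X$ and classifiers $h_0,h_1,\ldots,h_s \in \C$ with $\DIS(\{h_0,h_i\}) \cap \{x_1,\ldots,x_s\} = \{x_i\}$. Take $\target = h_0$, assign mass $p = c_2\,\Log(s)/m$ to each of $x_2,\ldots,x_s$ for a small constant $c_2 > 0$, and place any residual mass on $x_1$; the choice of $c_1$ guarantees $sp \leq 1$. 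For any $i \geq 2$, whenever $x_i \notin \{X_1,\ldots,X_m\}$, the classifier $h_i$ lies in $\C[\L_m] = V_m$ and satisfies $\er(h_i) = \Px(\{x_i\}) = p$. The indicators $\ind[x_i \notin \{X_1,\ldots,X_m\}]$ for $i \geq 2$ are pairwise negatively correlated (since $(1-2p)^m \leq (1-p)^{2m}$), so their sum $M$ has $\Var(M) \leq \E[M]$; with $\E[M] = (s-1)(1-p)^m$ of order $s^{1-2c_2}$, a Chebyshev argument yields $\P(M \geq 1) \geq 1 - 1/\E[M] \geq 1/2$ for $s$ past a universal constant. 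Since $\conf < 1/100 < 1/2$, some such $x_i$ is missing with probability $> \conf$, giving $\sup_{h \in V_m} \er(h) \geq p \gtrsim \Log(\min\{\s,m\})/m$ once one observes that $\Log s \gtrsim \Log(\min\{\s,m\})$ (cutting $s$ at $m/\Log m$ loses only a $\Log\Log m$ factor inside the outer logarithm, which is harmless).

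The main technical point — and the only genuinely new step beyond classical PAC lower bounds — is this star construction. One must balance $p$ simultaneously against three competing requirements: $p$ itself has to deliver the target rate once a star point is missed, the total mass $sp$ must stay $\leq 1$, and the event ``some star point is missed'' must still occur with constant probability despite the substantial mass placed on the star. The choice $s \asymp \min\{\s, m/\Log m\}$ with $p \asymp \Log(s)/m$ reconciles all three. Finally, the expectation lower bound follows from the high-probability bound via the elementary observation that a nonnegative random variable at least $q$ with probability at least $1/2$ has expectation at least $q/2$, applied with $\conf$ a fixed universal constant to each of the $\dim/m$ and $\Log(\min\{\s,m\})/m$ pieces.
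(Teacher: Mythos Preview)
Your proposal is correct and follows essentially the same route as the paper: split into the three terms, invoke the classical lower bounds of \citet*{ehrenfeucht:89} (and related results) for the $\dim/m$ and $\Log(1/\conf)/m$ pieces, and for the $\Log(\min\{\s,m\})/m$ piece use the star construction with $\target=h_0$, equal mass on the star points, and a second-moment/Chebyshev argument to show some star point is missed with probability exceeding $1/2$. The only cosmetic difference is that you apply Chebyshev to the \emph{count} of missing star points (using negative correlation of the miss-indicators), whereas the paper phrases the same step as a coupon-collector problem and applies Chebyshev to the \emph{collection time}; the two viewpoints are equivalent and yield the same bound up to constants.
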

\begin{proof}
Since any $a,b,c \in \reals$ have $a+b+c \leq 3\max\{a,b,c\}$ and $a+b \leq 2\max\{a,b\}$, 
it suffices to establish $\frac{\dim}{m} \land 1$, $\frac{\Log\left(\frac{1}{\conf}\right)}{m} \land 1$, and $\frac{\Log( \min\{\s,m\} )}{m}$
as lower bounds separately for the first bound, and $\frac{\dim}{m} \land 1$ and $\frac{\Log(\min\{\s,m\})}{m}$ as lower bounds separately for the second bound.
Lower bounds proportional to $\frac{\dim}{m} \land 1$ (in both bounds) and $\frac{\Log\left(\frac{1}{\conf}\right)}{m} \land 1$ (in the first bound) 
are known in the literature \citep*{blumer:89,ehrenfeucht:89,haussler:94}, and in fact hold as lower bounds on the error rate guarantees achievable by \emph{any} learning algorithm.

For the remaining term, note that this term (with appropriately small constant factors) follows immediately from the others if $\s \leq 56$, so suppose $\s \geq 57$.
Fix any $\eps \in (0,1/48)$, let $M_{\eps} = \left\lfloor \frac{1+\eps}{\eps} \right\rfloor$,
and let $x_{1},\ldots,x_{\min\{\s,M_{\eps}\}} \in \X$ and $h_{0},h_{1},\ldots,h_{\min\{\s,M_{\eps}\}} \in \C$ be as in Definition~\ref{def:star}.
Choose the probability measure $\Px$ such that $\Px(\{x_{i}\}) = \eps$ for every $i \in \{2,\ldots,\min\{\s,M_{\eps}\}\}$,
and $\Px(\{x_{1}\}) = 1 - (\min\{\s,M_{\eps}\}-1)\eps \geq 0$.  Choose the target function $\target = h_{0}$.
Then note that, for any $m \in \nats$, if $\exists i \in \{2,\ldots,\min\{\s,M_{\eps}\}\}$ with $x_{i} \notin \{X_{1},\ldots,X_{m}\}$,
then $h_{i} \in V_{m}$, so that $\sup_{h \in V_{m}} \er(h) \geq \er(h_{i}) = \eps$.  

Characterizing the probability that $\{x_{2},\ldots,x_{\min\{\s,M_{\eps}\}}\} \subseteq \{X_{1},\ldots,X_{m}\}$ 
can be approached as an instance of the so-called \emph{coupon collector's problem}.
Specifically, let 
\begin{equation*}
\hat{M} = \min\left\{ m \in \nats : \{x_{2},\ldots,x_{\min\{\s,M_{\eps}\}}\} \subseteq \{X_{1},\ldots,X_{m}\} \right\}.
\end{equation*}
Note that $\hat{M}$ may be represented as a sum $\sum_{k=1}^{\min\{\s,M_{\eps}\}-1} G_{k}$ of independent geometric random variables $G_{k} \sim {\rm Geometric}(\eps (\min\{\s,M_{\eps}\}-k))$,
where $G_{k}$ corresponds to the waiting time between encountering the $(k-1)^{{\rm th}}$ and $k^{{\rm th}}$ distinct elements of $\{x_{2},\ldots,x_{\min\{\s,M_{\eps}\}}\}$ in the $X_{t}$ sequence.
A simple calculation reveals that $\E[ \hat{M} ] = \frac{1}{\eps} H_{\min\{\s,M_{\eps}\}-1}$,
where $H_{t}$ is the $t^{{\rm th}}$ harmonic number; in particular, $H_{t} \geq \ln(t)$.
Another simple calculation with this sum of independent geometric random variables reveals
${\rm Var}(\hat{M}) < \frac{\pi^{2}}{6 \eps^{2}}$.
Thus, Chebyshev's inequality implies that, with probability greater than $1/2$, 
$\hat{M} \geq \frac{1}{\eps} \ln( \min\{\s,M_{\eps}\}-1 ) - \frac{\pi}{\sqrt{3} \eps}$.
Since $\ln(\min\{\s,M_{\eps}\}-1) \geq \ln(48) > 2 \frac{\pi}{\sqrt{3}}$, the right hand side 
of this inequality is at least 
$\frac{1}{2\eps} \ln( \min\{\s,M_{\eps}\}-1 ) 
= \frac{1}{2\eps} \ln\left( \min\left\{ \s-1, \left\lfloor \frac{1}{\eps} \right\rfloor \right\} \right)$.
Altogether, we have that for any $m < \frac{1}{2\eps} \ln\left( \min\left\{ \s-1, \left\lfloor \frac{1}{\eps} \right\rfloor \right\} \right)$,
with probability greater than $1/2$, $\sup_{h \in V_{m}} \er(h) \geq \eps$.
By Markov's inequality, this further implies that, for any such $m$, $\E\left[ \sup_{h \in V_{m}} \er(h) \right] > \eps/2$.

For any $m \leq 47$, 
the $\frac{\Log(\min\{\s,m\})}{m}$ term in both lower bounds (with appropriately small constant factors)
follows from the lower bound proportional to $\frac{\dim}{m} \land 1$,
so suppose $m \geq 48$.
In particular, for any $c \in (4,\ln(56))$, 
letting $\eps = \frac{\ln( \min\{ \s-1, m \} )}{c m}$,
one can easily verify that $0 < \eps < 1/48$, and $m < \frac{1}{2\eps} \ln\left( \min\left\{ \s-1, \left\lfloor \frac{1}{\eps} \right\rfloor \right\} \right)$.
Therefore, with probability greater than $1/2 > \conf$, 
\begin{equation*}
\sup_{h \in V_{m}} \er(h) \geq \frac{\ln( \min\{ \s-1, m \} )}{c m},
\end{equation*}
and furthermore,
\begin{equation*}
\E\left[ \sup_{h \in V_{m}} \er(h) \right] > \frac{\ln( \min\{ \s-1, m \} )}{2 c m}.
\end{equation*}
The result follows by noting $\ln( \min\{ \s-1, m \} ) \geq \ln( \min\{ \s,m \} / 2 ) \geq \ln( \min\{\s,m\} ) / 2$ for $\s,m \geq 4$.
\end{proof}

Comparing Theorem~\ref{thm:erm-lb} with Corollary~\ref{cor:erm-star},
we see that the asymptotic dependences on $m$ are identical,
though they differ in their joint dependences on $\dim$ and $m$.
The precise dependence on both $\dim$ and $m$ from Corollary~\ref{cor:erm-star} can 
be included in the lower bound of Theorem~\ref{thm:erm-lb} for certain 
types of concept spaces $\C$, but not all; the interested reader is referred
to the recent article of \citet*{hanneke:15b} for discussions relevant to this 
type of gap, and constructions of concept spaces which (one can easily verify)
span this gap: that is, for some spaces $\C$ the lower bound is tight, while for other spaces $\C$ the upper bound is tight, up to numerical constant factors.

An immediate corollary of Theorem~\ref{thm:erm-lb} and Corollary~\ref{cor:erm-star} is that 
$\s < \infty$ is \emph{necessary and sufficient} for arbitrary consistent learners 
to achieve $O(1/m)$ rates.
Formally, for any $\conf \in (0,1)$, let $R_{m}(\conf)$ denote the smallest value
such that, for all $\Px$ and all $\target \in \C$, with probability at least $1-\conf$, 
$\sup_{h \in V_{m}} \er(h) \leq R_{m}(\conf)$.
Also let $\bar{R}_{m}$ denote the supremum value of $\E\left[ \sup_{h \in V_{m}} \er(h) \right]$
over all $\Px$ and all $\target \in \C$.
We have the following corollary (which applies to any $\C$ with $0 < \dim < \infty$).

\begin{corollary}
\label{cor:erm-nec-suf}
$\bar{R}_{m} = \Theta\left( \frac{1}{m} \right)$
if and only if $\s < \infty$, and otherwise $\bar{R}_{m} = \Theta\left( \frac{\Log(m)}{m} \right)$.
Likewise, $\forall \conf \in (0,1/100)$,
$R_{m}(\conf) = \Theta\left( \frac{1}{m} \right)$ 
if and only if $\s < \infty$, and otherwise $R_{m}(\conf) = \Theta\left( \frac{\Log(m)}{m} \right)$.
\end{corollary}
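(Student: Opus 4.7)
The plan is to derive this corollary by directly combining the upper bound in Corollary~\ref{cor:erm-star} with the lower bound in Theorem~\ref{thm:erm-lb}, matching them separately in the two regimes $\s < \infty$ and $\s = \infty$.

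First I would handle the case $\s < \infty$. For the upper direction, observe that for any $m$, Corollary~\ref{cor:erm-star} yields
\begin{equation*}
\bar{R}_{m} \lesssim \frac{\dim}{m} \Log\!\left(\frac{\min\{\s,m\}}{\dim}\right) \leq \frac{\dim}{m} \Log\!\left(\frac{\s}{\dim}\right),
\end{equation*}
and similarly $R_{m}(\conf) \lesssim \frac{1}{m}(\dim \Log(\s/\dim) + \Log(1/\conf))$. Since $\s$ and $\dim$ are fixed while $m \to \infty$, both are $O(1/m)$. For the matching lower direction, I would invoke the $\dim/m$ piece of Theorem~\ref{thm:erm-lb} (which holds for any $\C$ with $\dim \geq 1$), giving $\bar{R}_{m} \gtrsim \dim/m$ and $R_{m}(\conf) \gtrsim \dim/m$; together these establish the $\Theta(1/m)$ rate.

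Next I would handle $\s = \infty$. For the upper direction, Corollary~\ref{cor:erm-star} with $\min\{\s,m\} = m$ gives
\begin{equation*}
\bar{R}_{m} \lesssim \frac{\dim}{m} \Log\!\left(\frac{m}{\dim}\right) \lesssim \frac{\Log(m)}{m},
\end{equation*}
and analogously for $R_{m}(\conf)$ (where $\dim$ and $\conf$ are fixed, so the $\Log(1/\conf)/m$ contribution is absorbed). For the lower direction, Theorem~\ref{thm:erm-lb} applied with $\s = \infty$ gives $\bar{R}_{m} \gtrsim \Log(m)/m$ and, for any $\conf \in (0,1/100)$, $R_{m}(\conf) \gtrsim \Log(m)/m$, once $m$ is large enough that the $\Log(m)/m$ term dominates the other additive constants inside the minimum with $1$.

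Finally, the ``if and only if'' statement follows from a simple contrapositive: if $\s = \infty$, then by the lower bound just established, any distribution-free rate must be $\Omega(\Log(m)/m)$, so the rate cannot be $\Theta(1/m)$; conversely, if $\s < \infty$, the first case shows the rate is exactly $\Theta(1/m)$. There is no real obstacle here since all the heavy lifting has already been done in Corollary~\ref{cor:erm-star} and Theorem~\ref{thm:erm-lb}; the only mild care needed is in confirming that the $\Log(1/\conf)/m$ and $\Log(\dim)/m$ contributions to the upper bound do not interfere with the asymptotic $m$-dependence (they do not, since $\conf$ and $\dim$ are treated as fixed), and that for small $m$ the $\Log(m)/m$ claim in the $\s=\infty$ case is vacuous up to constants.
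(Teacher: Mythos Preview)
Your proposal is correct and follows exactly the approach the paper intends: the paper presents this corollary as an immediate consequence of combining the upper bound in Corollary~\ref{cor:erm-star} with the lower bound in Theorem~\ref{thm:erm-lb}, and does not supply a separate proof. Your case split on $\s < \infty$ versus $\s = \infty$, together with the observation that $\dim$, $\s$, and $\conf$ are held fixed as $m \to \infty$, is precisely the argument being left implicit.
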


\subsection{Using Subregions Smaller than the Region of Disagreement}
\label{sec:subregions}

In recent work, \citet*{zhang:14} have proposed a general active learning strategy,
which revises the CAL strategy so that the algorithm only requests a label if the 
corresponding $X_{m}$ is in a well-chosen \emph{subregion} of $\DIS(V_{m-1})$.
This general idea was first explored in the more-specific context of learning
linear separators under a uniform distribution by \citet*{balcan:07} (see also \citealp*{dasgupta:05b}, for related arguments).
Furthermore, following up on \citet*{balcan:07}, the work of \citet*{balcan:13} 
has also used this subregion idea to argue that any consistent learning algorithm
achieves the optimal sample complexity (up to constants) for the problem of 
learning linear separators under isotropic log-concave distributions.  In this section, 
we combine the abstract perspective of \citet*{zhang:14} with our general bounding
technique, to generalize the result of \citet*{balcan:13} by expressing a bound holding
for arbitrary concept spaces $\C$, distributions $\Px$, and target functions $\target \in \C$.
First, we need to introduce the following complexity measure $\zc_{c}(r_{0})$ based on the work of \citet*{zhang:14}.
As was true of $\dc(r_{0})$ above, this complexity measure $\zc_{c}(r_{0})$ generally depends on both $\Px$ and $\target$.

\begin{definition}
\label{def:zc}
For any nonempty set $\H$ of classifiers, and any $\eta \geq 0$, 
letting $X \sim \Px$, define
\begin{multline*}
\Phi(\H,\eta) = \min\left\{ \E[\gamma(X)] : \sup_{h \in \H} \E\left[ \ind[h(X)=+1]\zeta(X) + \ind[h(X)=-1]\xi(X) \right] \leq \eta, \right.
\\ \left. \phantom{\sup_{h \in \H}} \text{ where } \forall x \in \X, \gamma(x) + \zeta(x) + \xi(x) = 1 \text{ and } \gamma(x),\zeta(x),\xi(x) \in [0,1] \right\}.
\end{multline*}
Then, for any $r_{0} \in [0,1)$ and $c > 1$, define
\begin{equation*}
\zc_{c}(r_{0}) = \sup_{r_{0} < r \leq 1} \frac{\Phi(\Ball(\target,r),r/c)}{r} \lor 1.
\end{equation*}
\end{definition}

One can easily observe that, for the optimal choices of $\gamma$, $\zeta$, and $\xi$ in the definition of $\Phi$, 
we have $\gamma(x) = 0$ for (almost every) $x \notin \DIS(\H)$.
In the special case that $\gamma$ is binary-valued, 
the aforementioned well-chosen ``subregion'' of $\DIS(\H)$ corresponds to the set $\{x : \gamma(x) = 1\}$.
In general, the definition also allows for $\gamma(x)$ values in between $0$ and $1$, 
in which case $\gamma$ essentially re-weights the conditional distribution $\Px(\cdot|\DIS(\H))$.\footnote{Allowing 
these more-general values of $\gamma(x)$ typically does not affect the qualitative behavior of the minimal $\E[\gamma(X)]$ value; 
for instance, we argue in Lemma~\ref{lem:zc-binary} of Appendix~\ref{app:noise} that the minimal $\E[\gamma(X)]$ value 
achievable under the additional constraint that $\gamma(x) \in \{0,1\}$ is at most $2 \Phi(\H,\eta/2)$.  Thus, 
we do not lose much by thinking of $\Phi(\H,\eta)$ as describing the measure of a subregion of $\DIS(\H)$.}
As an example where this quantity is informative, \citet*{zhang:14} argue that, for $\C$ the class of homogeneous linear separators in $\reals^{k}$ ($k \in \nats$)
and $\Px$ any isotropic log-concave distribution, $\zc_{c}(r_{0}) \lesssim \Log(c)$ (which follows readily from arguments of \citealp*{balcan:13}).
Furthermore, they observe that $\zc_{c}(r_{0}) \leq \dc(r_{0})$ for any $c \in (1,\infty]$.

\citet*{zhang:14} propose the above quantities for the purpose of 
proving a bound on the label complexity of a certain active learning algorithm,
inspired both by the work of \citet*{balcan:07} on active learning with linear separators,
and by the connection between selective classification and active learning exposed by \citet*{el-yaniv:12}.
However, since the idea of using well-chosen subregions of $\DIS(V_{m})$ in the analysis of consistent learning algorithms
lead \citet*{balcan:13} to derive improved sample complexity bounds for these methods in the case of 
linear separators under isotropic log-concave distributions, and since the corresponding improvements
for active learning are reflected in the general results of \citet*{zhang:14}, it is natural to ask whether 
the sample complexity improvements of \citet*{balcan:13} for that special scenario can also be extended
to the general case by incorporating the complexity measure $\zc_{c}(r_{0})$.  
Here we provide such an extension.  Specifically, following the same basic strategy 
from Theorem~\ref{thm:PDIS}, with a few adjustments inspired by \citet*{zhang:14} to allow us 
to consider only a \emph{subregion} of $\DIS(V_{m})$ in the argument (or more generally, a reweighting
of the conditional distribution $\Px(\cdot | \DIS(V_{m}))$), we arrive at the following result.  The proof 
is included in Appendix~\ref{app:subregions}.

\begin{theorem}
\label{thm:vc-erm-subregion}
For any $\conf \in (0,1)$ and $m \in \nats$, for $c = 16$, with probability at least $1-\conf$,
\begin{equation*}
\sup_{h \in V_{m}} \er(h) \leq \frac{21}{m} \left( \dim \ln\left( 83 \zc_{c}\left(\frac{\dim}{m}\right) \right) + 3 \ln\left(\frac{4}{\conf}\right) \right).
\end{equation*}
\end{theorem}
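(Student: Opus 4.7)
The plan is to combine the inductive sample-splitting technique from the proofs of Theorem~\ref{thm:PDIS} and Theorem~\ref{thm:vc-erm} with the reweighting idea from Definition~\ref{def:zc}, which lets us replace $\DIS(V_m)$ by a smaller ``effective'' region whose measure is controlled by $\zc_c$. The induction is on $m$; the base case handles small $m$ directly via Lemma~\ref{lem:classic-erm}, and the inductive step transforms a bound on $r := \sup_{h \in V_{\lfloor m/2\rfloor}} \er(h)$ into a bound on $\sup_{h \in V_m}\er(h)$.

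For the inductive step, first apply the inductive hypothesis to $V_{\lfloor m/2\rfloor}$ to get (with appropriate confidence budget) $V_{\lfloor m/2 \rfloor} \subseteq \Ball(\target,r)$ for an explicit $r$ of order $f(m/2)$, where $f(m)$ is the theorem's bound. Next, set $\eta = r/c$ (with $c=16$) and invoke the definition of $\Phi(\Ball(\target,r),\eta)$ conditionally on the first half: choose $\gamma,\zeta,\xi$ with $\gamma+\zeta+\xi=1$ attaining (approximately) the infimum $\Phi(\Ball(\target,r),\eta) \le r \cdot \zc_c(r_0)$, where we will take $r_0=\dim/m$. I would appeal to Lemma~\ref{lem:zc-binary} (referenced in the paper's Appendix~\ref{app:noise}) to assume $\gamma(\cdot)\in\{0,1\}$, losing only a factor of two in $\E[\gamma(X)]$ and using $\eta/2$ in place of $\eta$; this lets us set $U=\{x:\gamma(x)=1\}$ and treat $U$ as an honest subregion with $\Px(U) \lesssim r\,\zc_c(\dim/m)$.

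The key decomposition is $\er(h)=\E[\ind[h(X)\neq \target(X)](\gamma+\zeta+\xi)(X)]$ for any $h\in V_m$. On the $(\zeta+\xi)$ part, apply the $\Phi$-constraint to both $h\in\Ball(\target,r)$ and to $\target\in\Ball(\target,r)$, splitting $\ind[h\neq \target]$ into $\ind[h=+1,\target=-1]+\ind[h=-1,\target=+1]$ and using the bounds $\ind[h=+1,\target=-1](\zeta+\xi)\le \ind[h=+1]\zeta+\ind[\target=-1]\xi$ and symmetrically for the other piece; summing gives $\E[\ind[h\neq \target](\zeta+\xi)]\le 2\eta$. On the $\gamma$ part (i.e.\ the contribution from $U$), the $\lceil m/2\rceil$ second-half samples that fall in $U$ are conditionally i.i.d.\ from $\Px(\cdot|U)$ and all agree with $\target$ on the full sample. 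Apply Lemma~\ref{lem:classic-erm} conditionally in $U$ together with a Chernoff bound lower-bounding the count $N$ of second-half points in $U$ by $\gtrsim m\,\Px(U)$ when $\Px(U)$ is not too small; this yields $\E[\ind[h\neq \target]\gamma]\lesssim \frac{1}{m}\bigl(\dim\Log(m\Px(U)/\dim)+\Log(1/\conf)\bigr)$, and then substituting $\Px(U)\lesssim r\,\zc_c(\dim/m)$ produces a logarithmic factor of the form $\Log(m r \zc_c(\dim/m)/\dim)$.

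Adding the two contributions gives $\sup_{h\in V_m}\er(h)\le 2\eta + O\!\bigl(\frac{1}{m}(\dim\Log(m r\zc_c(\dim/m)/\dim)+\Log(1/\conf))\bigr)$. With $c=16$, the first term $2\eta = r/8$ is a genuine contraction against the inductive hypothesis $r\le f(m/2)\approx 2f(m)$, so the ``$r$'' on the right collapses into $f(m)$ up to constants; the $\Log(mr/\dim)$ term, after substituting the inductive bound for $r$ and applying Lemma~\ref{lem:log-factors-abstract} to collapse nested logarithms, becomes $\ln\bigl(83\,\zc_c(\dim/m)\bigr)$. The main obstacle I anticipate is the constant-tracking needed to land exactly on $(21,83,3\ln(4/\conf))$: one has to split $\conf$ across three high-probability events (inductive hypothesis on $r$, Lemma~\ref{lem:classic-erm} inside $U$, and Chernoff on $N$), handle the ``low-$r$'' case where $r\le \dim/m$ so that $\zc_c(\dim/m)$ is the right quantity to use, and absorb the factor of two from the $\gamma\in\{0,1\}$ reduction. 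The case where $\Px(U)$ is too small for Chernoff to give a useful $N$ needs to be treated separately, noting that if $\Px(U)\lesssim \frac{1}{m}\ln(1/\conf)$ then the $\gamma$ contribution is already of the desired order even without the Lemma~\ref{lem:classic-erm} step, exactly as in the proof of Theorem~\ref{thm:monotone-erm}.
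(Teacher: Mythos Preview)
Your proposal is correct and follows essentially the same inductive sample-splitting strategy as the paper's proof: split $\L_m$ in half, use the inductive hypothesis to place $V_{\lfloor m/2\rfloor}\subseteq \Ball(\target,r)$, choose $\eta=r/c$ and invoke $\Phi$ to control the effective measure, decompose $\er(h)$ via $\gamma+\zeta+\xi=1$ to get the $2\eta$ contribution plus a conditional Lemma~\ref{lem:classic-erm} bound, Chernoff-bound the conditional sample count, collapse the nested logarithm with Lemma~\ref{lem:log-factors-abstract}, and handle the small-$\Phi$ case separately. The one technical difference is that the paper does \emph{not} pass through Lemma~\ref{lem:zc-binary}: it works directly with the $[0,1]$-valued minimizer $\gamma^{*}$ by drawing independent $\Gamma_i\sim\mathrm{Bernoulli}(\gamma^{*}(X_i))$ for the second-half samples and conditioning on $\Gamma_i=1$, which yields an i.i.d.\ sample from the reweighted measure $P_{\lfloor m/2\rfloor}$ without the factor-of-two loss your binary reduction incurs; this is what allows the paper to land on the stated constants $(21,83,3)$ exactly.
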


In particular, in the special case of $\C$ the space of homogeneous linear separators on $\reals^{k}$, 
and $\Px$ an isotropic log-concave distribution, Theorem~\ref{thm:vc-erm-subregion} recovers
the bound of \citet*{balcan:13} proportional to $\frac{1}{m}(k + \Log(\frac{1}{\conf}))$ as a special case.
Furthermore, one can easily construct scenarios (concept spaces $\C$, distributions $\Px$, and target functions $\target \in \C$) 
where $\zc_{c}\left(\frac{\dim}{m}\right)$ is bounded while $\frac{\hat{\cs}_{1:m}}{\dim} = \frac{m}{\dim}$ almost surely 
(e.g., $\C = \{ x \mapsto 2 \ind_{\{t\}}(x) - 1 : t \in \reals \}$ the class of impulse functions on $\reals$, and $\Px$ uniform on $(0,1)$), so that Theorem~\ref{thm:vc-erm-subregion} 
sometimes reflects a significant improvement over Theorem~\ref{thm:vc-erm}.

One can easily show that we always have $\zc_{c}(r_{0}) \leq \left(1-\frac{1}{c}\right) \dc(r_{0})$,
so that Theorem~\ref{thm:vc-erm-subregion} is never worse than the bound \eqref{eqn:gk-dc} of \citet{gine:06}.
However, we argue in Appendix~\ref{sec:sup-zc} that $\forall c \geq 2$, $\forall r_{0} \in [0,1)$,
\begin{equation}
\label{eqn:sup-zc}
\left( 1 - \frac{1}{c} \right) \min\left\{ \s, \frac{1}{r_{0}} - \frac{1}{c-1} \right\} \leq \sup_{\Px} \sup_{\target \in \C} \zc_{c}(r_{0}) \leq \left(1 - \frac{1}{c}\right) \min\left\{ \s, \frac{1}{r_{0}} \right\}.
\end{equation}
Thus, at least in some cases, the bound in Theorem~\ref{thm:vc-erm} is smaller than that in Theorem~\ref{thm:vc-erm-subregion}
(as the former leads to Corollary~\ref{cor:erm-star} in the worst case, while the latter leads to \eqref{eqn:gk-dist-free} in the worst case).
In fact, if we let $\zc_{c}^{\zo}(r_{0})$ be defined identically to $\zc_{c}(r_{0})$, except that $\gamma$ is restricted to be $\{0,1\}$-valued
in Definition~\ref{def:zc}, then the same argument from Appendix~\ref{sec:sup-zc} reveals that, for any $c \geq 4$, 
\begin{equation*}
\sup_{\Px}\sup_{\target \in \C} \zc_{c}^{\zo}(r_{0}) = \min\left\{ \s, \frac{1}{r_{0}} \right\}.
\end{equation*}

\paragraph{Relation to the Doubling Dimension:}
To further put Theorem~\ref{thm:vc-erm-subregion} in context, we also note that it is possible to relate 
$\zc_{c}(r_{0})$ to the \emph{doubling dimension}.  Specifically, the doubling dimension (also known as the local metric entropy)
of $\C$ at $\target$ under $\Px$, denoted $\dd(r_{0})$, is defined as
\begin{equation*}
\dd(r_{0}) = \max_{r \geq r_{0}} \log_{2}\left( \covering\!\left( r/2, \Ball(\target,r), \Px\right) \right),
\end{equation*}
for $r_{0} > 0$,
where $\covering(r/2, \Ball(\target,r), \Px)$ is 
the smallest $n \in \nats$ such that there exist classifiers $h_{1},\ldots,h_{n}$
for which $\sup_{h \in \Ball(\target,r)} \min_{1 \leq i \leq n} \Px( x : h(x) \neq h_{i}(x) ) \leq r/2$,
known as the $(r/2)$-covering number for $\Ball(\target,r)$ under the $L_{1}(\Px)$ pseudo-metric.
The notion of doubling dimension has been explored in a variety of contexts in the literature \citep*[e.g.,][]{lecam:73,yang:99b,gupta:03,long:09}.
We always have $\dd(r_{0}) \lesssim \dim \Log(1/r_{0})$ \citep*{haussler:95}, though it can often be smaller than this,
and in many interesting contexts, it can even be bounded by an $r_{0}$-invariant value \citep*{long:09}.
\citet*{long:09} construct a particular $\Px$-dependent learning rule $\alg$ such that, 
for any $\eps,\conf \in (0,1)$, and any
\begin{equation}
\label{eqn:doubling-good-bound}
m \gtrsim \frac{1}{\eps} \left( \dd(\eps/c) + \Log\left(\frac{1}{\conf}\right) \right),
\end{equation}
where $c > 0$ is a specific constant,
with probability at least $1-\conf$, the classifier $\hat{h}_{m} = \alg(\L_{m})$ satisfies $\er(\hat{h}_{m}) \leq \eps$.
They also establish a weaker bound holding for all consistent learning rules:
for any $\eps > 0$, 
denoting $\eps_{0} = \eps \exp\left\{-\sqrt{\ln(1/\eps)}\right\}$,
for any 
\begin{equation}
\label{eqn:doubling-loose-bound}
m \gtrsim \frac{1}{\eps} \left( \max\{ \dim, \dd(\eps_{0}) \} \sqrt{\Log\left(\frac{1}{\eps}\right)} + \Log\left(\frac{1}{\conf}\right) \right),
\end{equation}
with probability at least $1-\conf$, $\sup_{h \in V_{m}} \er(h) \leq \eps$.

\citet*{hanneke:15b} have proven that we always have
$\dd(r_{0}) \lesssim \dim \Log( \dc(r_{0}) )$,
which immediately implies that \eqref{eqn:doubling-good-bound}
is never larger than the bound \eqref{eqn:gk-dc} for consistent learning rules (aside from constant factors),
though \eqref{eqn:gk-dc} may often offer improvements over the weaker bound \eqref{eqn:doubling-loose-bound}. 
Here we note that a related argument can be used to prove the following
bound: for any $r_{0} > 0$ and $c \geq 8$,
\begin{equation}
\label{eqn:doubling-zc-bound}
\dd(r_{0}) \leq 2 \dim \log_{2}( 96 \zc_{c}(r_{0}) ).
\end{equation}
In particular, this implies that the bound \eqref{eqn:doubling-good-bound}
is never larger than the bound in Theorem~\ref{thm:vc-erm-subregion} 
for consistent learning rules (aside from constant factors),
though again Theorem~\ref{thm:vc-erm-subregion} may often offer improvements
over the weaker bound \eqref{eqn:doubling-loose-bound}.
We also note that, combined with the above mentioned result of \citet*{zhang:14}
that $\zc_{c}(r_{0}) \lesssim \Log(c)$
for $\C$ the class of homogeneous linear separators in $\reals^{k}$ and $\Px$ any isotropic log-concave distribution, 
\eqref{eqn:doubling-zc-bound} immediately implies a bound $\dd(r_{0}) \lesssim k$ for the doubling dimension
in this scenario (recalling that $\dim = k$ for this class, from \citealp*{cover:65}), which appears to be new to the literature.
The proof of \eqref{eqn:doubling-zc-bound} is included in Appendix~\ref{app:doubling-zc-bound}.

\section{Learning with Noise}
\label{sec:noise}

The previous sections demonstrate how variations on the basic technique of \citet*{hanneke:thesis}
lead to refined analyses of certain learning methods, in the \emph{realizable case}, where $\exists \target \in \C$ with $\er(\target) = 0$.
We can also apply this general technique in the more-general setting of learning with \emph{classification noise}.
Specifically, in this setting, there is a \emph{joint} distribution $\PXY$ on $\X \times \Y$,
and the error rate of a classifier $h$ is then defined as $\er(h) = \P( h(X) \neq Y )$ for $(X,Y) \sim \PXY$.
As above, we denote by $\Px$ the marginal distribution $\PXY(\cdot \times \Y)$ on $\X$.
We then let $(X_{1},Y_{1}),(X_{2},Y_{2}),\ldots$ denote a sequence of independent $\PXY$-distributed random samples,
and denoting $\L_{m} = \{(X_{1},Y_{1}),\ldots,(X_{m},Y_{m})\}$,
we are interested in obtaining bounds on $\er(\hat{h}_{m}) - \inf_{f \in \C} \er(f)$ (the \emph{excess error rate}),
where $\hat{h}_{m} = \alg(\L_{m})$ for some learning rule $\alg$.
This notation is consistent with the above, which represents the special case 
in which $\P( Y = \target(X) | X ) = 1$ almost surely (i.e., the \emph{realizable case}).
While there are various noise models commonly studied in the literature,
for our present discussion, we are primarily interested in two such models.

\begin{itemize}
\item[$\bullet$] For $\bound \in (0,1/2)$, $\PXY$ satisfies the $\bound$-\emph{bounded noise} condition
if $\exists \agtarget \in \C$ such that $\P( Y \neq \agtarget(X) | X ) \leq \bound$ almost surely, where $(X,Y) \sim \PXY$.
\item[$\bullet$] For $\tsybca \in [1,\infty)$ and $\tsyba \in [0,1]$, $\PXY$ satisfies 
the $(\tsybca,\tsyba)$-\emph{Bernstein class} condition if, for $\agtarget = \argmin_{h \in \C} \er(h)$,\footnote{For simplicity, we suppose
the minimum error rate is achieved in $\C$.  One can easily generalize the condition to the more-general case 
where the minimum is not necessarily achieved \citep*[see e.g.,][]{koltchinskii:06},
and the results below continue to hold with only minor technical adjustments to the proofs.}
we have $\forall h \in \C$, $\Px(x : h(x) \neq \agtarget(x)) \leq \tsybca \left( \er(h) - \er(\agtarget) \right)^{\tsyba}$.
\end{itemize}

Note that $\bound$-bounded noise distributions also satisfy the Bernstein class condition, 
with $\tsyba = 1$ and $\tsybca = \frac{1}{1-2\bound}$.
These two conditions have been studied extensively in both the passive and active learning literatures
\citep*[e.g.,][]{mammen:99,tsybakov:04,bartlett:06,massart:06,koltchinskii:06,bartlett:06b,gine:06,hanneke:thesis,hanneke:11a,hanneke:12a,hanneke:fntml,el-yaniv:11,ailon:14,zhang:14,hanneke:15b}.
In particular, for passive learning, much of this literature focuses on the analysis of \emph{empirical risk minimization}.
Specifically, for any $m \in \nats$ and $L \in (\X \times \Y)^{m}$, 
define $\ERM(\C,L) = \{ h \in \C : \er_{L}(h) = \min_{g \in \C} \er_{L}(g) \}$, the set of \emph{empirical risk minimizers}.
\citet*{massart:06} established that, for any $\PXY$ satisfying the $(\tsybca,\tsyba)$-Bernstein class condition, 
for any $\conf \in (0,1)$, with probability at least $1-\conf$,
\begin{equation}
\label{eqn:mn-bernstein}
\sup_{h \in \ERM(\C,\L_{m})} \er(h) - \inf_{h \in \C} \er(h) \lesssim \left( \frac{\tsybca \left( \dim \Log\left( \frac{1}{\tsybca} \left(\frac{m}{\tsybca\dim}\right)^{\frac{\tsyba}{2-\tsyba}} \right) + \Log\left(\frac{1}{\conf}\right) \right)}{m}\right)^{\frac{1}{2-\tsyba}}.
\end{equation}
In the case of $\bound$-bounded noise, \citet*{gine:06} showed that the logarithmic factor $\Log\left( \frac{m (1-2\bound)^{2}}{\dim} \right)$
implied by \eqref{eqn:mn-bernstein} can be replaced by $\Log\left( \dc\left( \frac{\dim}{m (1-2\bound)^{2}} \right) \right)$,
where the disagreement coefficient $\dc(r_{0})$ is defined as above, except with $\agtarget$ in place of $\target$ in the definition.
Furthermore, applying their arguments to the general case of the $(\tsybca,\tsyba)$-Bernstein class condition
(see \citealp*{hanneke:12b}, for an explicit derivation), one arrives at the fact that, with probability at least $1-\conf$,
\begin{equation}
\label{eqn:gk-bernstein}
\sup_{h \in \ERM(\C,\L_{m})} \er(h) - \inf_{h \in \C} \er(h) \lesssim \left( \frac{\tsybca \left( \dim \Log\left( \dc\left(\tsybca \left(\frac{\tsybca\dim}{m}\right)^{\frac{\tsyba}{2-\tsyba}} \right) \right) + \Log\left(\frac{1}{\conf}\right) \right)}{m}\right)^{\frac{1}{2-\tsyba}}.
\end{equation}

Since \citet*{hanneke:15b} have argued that $\dc(r_{0}) \leq \min\left\{ \s, \frac{1}{r_{0}} \right\}$ (with equality in the worst case),
\eqref{eqn:gk-bernstein} further implies that, with probability at least $1-\conf$,
\begin{equation}
\label{eqn:star-bernstein}
\sup_{h \in \ERM(\C,\L_{m})} \er(h) - \inf_{h \in \C} \er(h) \lesssim \left( \frac{\tsybca \left( \dim \Log\left( \min\left\{ \s, \frac{1}{\tsybca} \left(\frac{m}{\tsybca\dim}\right)^{\frac{\tsyba}{2-\tsyba}} \right\} \right) + \Log\left(\frac{1}{\conf}\right) \right)}{m}\right)^{\frac{1}{2-\tsyba}}.
\end{equation}
Via the same integration argument used in Corollary~\ref{cor:erm-star}, this further implies
\begin{equation}
\label{eqn:star-bernstein-expectation}
\E\left[ \sup_{h \in \ERM(\C,\L_{m})} \er(h) \right] - \inf_{h \in \C} \er(h) \lesssim \left( \frac{\tsybca \dim \Log\left( \min\left\{ \s, \frac{1}{\tsybca} \left(\frac{m}{\tsybca\dim}\right)^{\frac{\tsyba}{2-\tsyba}} \right\} \right)}{m}\right)^{\frac{1}{2-\tsyba}}.
\end{equation}

It is worth noting that the bound \eqref{eqn:star-bernstein} does not quite recover the bound of Corollary~\ref{cor:erm-star} 
in the realizable case (corresponding to $\tsybca = \tsyba = 1$).  Specifically, it contains a logarithmic factor 
$\Log\left(\frac{\min\{\s\dim,m\}}{\dim}\right)$, rather than $\Log\left(\frac{\min\{\s,m\}}{\dim}\right)$.  
I conjecture that this logarithmic factor in \eqref{eqn:star-bernstein} can generally be improved
so that, for any $\tsybca$ and $\tsyba$, it is bounded by a numerical constant whenever $\s \lesssim \dim$.
This problem is intimately connected to a conjecture in active learning, proposed by \citet*{hanneke:15b}, 
concerning the joint dependence on $\s$ and $\dim$ in the minimax label complexity of active learning under
the Bernstein class condition.

\subsection{Necessary and Sufficient Conditions for $\boldsymbol{1/m}$ Minimax Rates under Bounded Noise}
\label{sec:nec-suf-noise}

In the case of bounded noise (where $\tsybca = \frac{1}{1-2\bound}$ and $\tsyba = 1$), \citet*{massart:06} have shown that for some concept spaces $\C$, 
the factor $\Log\left( \frac{m (1-2\bound)^{2}}{\dim} \right)$ is present even in a \emph{lower} bound on the minimax excess error rate,
so that it cannot generally be removed.
\citet*{raginsky:11} further discuss a range of lower bounds on the minimax excess error rate for various spaces $\C$ they construct,
where the appropriate factor ranges between $\Log\left( \frac{m (1-2\bound)^{2}}{\dim} \right)$ at the highest,
to a constant factor at the lowest.
The bound in \eqref{eqn:star-bernstein} provides a \emph{sufficient} condition for all empirical risk minimization algorithms
to achieve excess error rate with  $O(1/m)$ asymptotic dependence on $m$ under $\bound$-bounded noise: namely $\s < \infty$.
Recall that this condition was both sufficient \emph{and necessary} for $O(1/m)$ error rates to be achievable 
by every algorithm of this type for all distributions in the realizable case (Corollary~\ref{cor:erm-nec-suf}).  It is therefore natural to wonder whether this 
remains the case for bounded noise as well.  In this section, we find this is indeed the case.
In fact, following a generalization of the technique of \citet*{raginsky:11} explored by \citet*{hanneke:15b} for active learning,
we are here able to provide a general lower bound on the \emph{minimax} excess error rate of passive learning, expressed in terms of $\s$.  
This immediately implies a corollary that $\s < \infty$ is both \emph{necessary} and \emph{sufficient} for the \emph{minimax optimal} bound on the excess error 
rate to have dependence on $m$ of $\Theta(1/m)$ under bounded noise, and otherwise the minimax optimal bound is $\Theta(\Log(m)/m)$.
Note that this is a stronger type of result than that given by Corollary~\ref{cor:erm-nec-suf}, as the lower bounds here apply to \emph{all} learning rules.
Formally, we have the following theorem.  The proof is included in Appendix~\ref{app:bounded-lower-proof}.

\begin{theorem}
\label{thm:bounded-lower-bound}
For any $\bound \in (0,1/2)$, $m \in \nats$, and $\conf \in (0,1/24]$, 
for any (passive) learning rule $\alg$, there exists a choice of $\PXY$ satisfying the $\bound$-bounded noise condition
such that, denoting $\hat{h}_{m} = \alg(\L_{m})$, with probability greater than $\conf$,
\begin{equation*}
\er(\hat{h}_{m}) - \inf_{h \in \C} \er(h) \gtrsim \frac{\dim + \bound \Log\left( \min\left\{ \s, (1-2\bound)^{2} m \right\} \right) + \Log\left(\frac{1}{\conf}\right)}{(1-2\bound) m} \land (1-2\bound). 
\end{equation*}
Furthermore, 
\begin{equation*}
\E\left[ \er(\hat{h}_{m}) \right] - \inf_{h \in \C} \er(h) \gtrsim \frac{\dim + \bound \Log\left( \min\left\{ \s, (1-2\bound)^{2} m \right\} \right)}{(1-2\bound) m} \land (1-2\bound).
\end{equation*}
\end{theorem}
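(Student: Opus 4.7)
The approach is to decompose the two lower bounds into their constituent terms via the elementary inequality $a + b + c \le 3 \max\{a,b,c\}$, and establish each term as a separate worst-case lower bound. The terms $\frac{\dim}{(1-2\bound)m} \land (1-2\bound)$ (in both bounds) and $\frac{\Log(1/\conf)}{(1-2\bound)m} \land (1-2\bound)$ (only in the probability bound) are essentially the classical minimax lower bounds for passive PAC learning under bounded noise due to \citet*{massart:06}; each follows from an Ehrenfeucht et al.\ style construction on $\dim$ shattered points (respectively, two points for the $\Log(1/\conf)$ piece), with Bernoulli-$\bound$ noise applied to the labels, together with a standard Fano or Le Cam two-point argument that supplies the $1/(1-2\bound)$ amplification. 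My focus is on the novel term $\frac{\bound \Log(\min\{\s,(1-2\bound)^{2}m\})}{(1-2\bound)m}$.

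Set $k = \lfloor \min\{\s, c_{0}(1-2\bound)^{2} m\}\rfloor$ for a sufficiently small numerical constant $c_{0} > 0$, and assume $k \ge 3$, as otherwise the new term is dominated by the $\dim/((1-2\bound)m)$ piece via the fact $\s \ge \dim$. Using Definition~\ref{def:star}, fix distinct points $x_{1},\ldots,x_{k} \in \X$ and classifiers $h_{0},h_{1},\ldots,h_{k} \in \C$ with $\DIS(\{h_{0},h_{j}\}) \cap \{x_{1},\ldots,x_{k}\} = \{x_{j}\}$ for every $j \in [k]$. Put $\eps = c_{1}\bound\Log(k)/((1-2\bound)^{2} m)$ with $c_{1}$ chosen small enough that $(k-1)\eps \le 1/2$, and for each $j \in \{2,\ldots,k\}$ define $\PXY^{(j)}$ by placing marginal mass $\eps$ on each $x_{i}$ ($i \in \{2,\ldots,k\}$), the remainder on $x_{1}$, and the conditional label distribution $\P(Y = h_{j}(X)\mid X) = 1-\bound$. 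Each $\PXY^{(j)}$ satisfies $\bound$-bounded noise with witness $h_{j}$, and a direct calculation yields the key identity $\er(h') - \er(h_{j}) = (1-2\bound)\,\Px(x : h'(x) \ne h_{j}(x))$ for every classifier $h'$; in particular, $h_{j}$ uniquely minimizes $\er$ over $\C$ under $\PXY^{(j)}$.

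The heart of the argument is a Fano-type reduction. The per-sample KL between any two of these distributions is concentrated at $x_{j}$ and $x_{j'}$, and equals $2\eps \cdot \mathrm{KL}(\mathrm{Ber}(1-\bound)\|\mathrm{Ber}(\bound)) \le 2\eps(1-2\bound)^{2}/\bound$, so the $m$-fold KL is at most $2 c_{1}\Log(k)$, which is below $\tfrac14 \Log(k)$ for sufficiently small $c_{1}$. Fano's inequality then gives $\max_{j}\P_{\PXY^{(j)}}(\hat j \ne j) \ge 1/2$ for any $\L_{m}$-measurable estimator $\hat j$. To reduce classification to identification, define $\hat j(\hat h_{m})$ to be the unique $i \in \{2,\ldots,k\}$ with $\hat h_{m}(x_{i}) \ne h_{0}(x_{i})$ (with an arbitrary default if no such unique $i$ exists). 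By the key identity, $\er(\hat h_{m}) - \er(h_{j}) < \eps(1-2\bound)/2$ forces $\Px(x : \hat h_{m}(x) \ne h_{j}(x)) < \eps/2$, and since each $x_{i}$ carries mass exactly $\eps$, this forces $\hat h_{m}(x_{i}) = h_{j}(x_{i})$ for every $i \in \{2,\ldots,k\}$, so $\hat j(\hat h_{m}) = j$. Combining this contrapositive with Fano's failure probability gives the in-probability lower bound for any $\conf \le 1/24$, and the expectation bound follows by Markov's inequality applied to the resulting high-probability guarantee.

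The main obstacle will be cleanly handling the boundary regimes: when $k$ degenerates to a small constant (absorbed into the $\dim$ term), when $\bound \to 1/2$ so the $\land (1-2\bound)$ truncation activates and the KL bound $(1-2\bound)^{2}/\bound$ becomes delicate, and when $\bound \to 0$ so the $\bound$ prefactor in the novel term becomes small and one must verify the bound is still dominated by the separately-established $\dim$ or $\Log(1/\conf)$ pieces. A smaller concern is that $\hat h_{m}$ need not belong to $\C$; this is harmless because the identification reduction only inspects $\hat h_{m}$ at the $k-1$ points $x_{2},\ldots,x_{k}$. Finally, the $\Log(1/\conf)$ dependence in the probability bound, which Fano does not supply, is provided by the separate classical two-point lower bound; the restriction $\conf \le 1/24$ leaves enough room to combine all three pieces via the $3\max$ decomposition at the start.
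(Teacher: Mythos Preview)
Your overall strategy matches the paper's: decompose via $a+b+c\le 3\max\{a,b,c\}$, defer to classical bounded-noise lower bounds for the $\dim$ and $\Log(1/\conf)$ terms, and obtain the $\bound\Log(\min\{\s,(1-2\bound)^2 m\})$ term by a Fano-type argument on a star configuration with $\bound$-bounded noise. The paper imports this last piece as a black-box lemma from \citet*{raginsky:11} (adapted to the star setting as in \citealp*{hanneke:15b}); your direct Fano computation is essentially that argument unpacked.

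There is, however, a real gap in your parameterization. With $k=\lfloor\min\{\s,c_0(1-2\bound)^2 m\}\rfloor$ and $\eps=c_1\bound\Log(k)/((1-2\bound)^2 m)$, the total mass you place on $\{x_2,\ldots,x_k\}$ is $(k-1)\eps\approx c_0 c_1\bound\Log(k)$ in the regime $k\approx c_0(1-2\bound)^2 m$. This grows without bound as $m\to\infty$ for any fixed numerical constants $c_0,c_1$ and any fixed $\bound>0$, so no uniform choice of $c_1$ makes $(k-1)\eps\le 1/2$; your marginal is not a probability measure precisely where the $\Log((1-2\bound)^2 m)$ piece matters. If instead you let $c_1$ shrink with $m$, the resulting excess-error lower bound $\eps(1-2\bound)$ tends to $0$ and the $\gtrsim$ claim fails.

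The repair, which is what the paper does, is to couple $k$ to the per-point mass rather than fix them independently: choose the mass $\zeta$ first and take $k=\min\{\s-1,\lfloor 1/\zeta\rfloor\}$, so $k\zeta\le 1$ holds by construction. When $k=\lfloor 1/\zeta\rfloor$, the Fano condition on $m$ becomes an implicit inequality in $\eps$ (since $k$ now depends on $\eps$ via $\zeta=2\eps/(1-2\bound)$), and one must solve it; the paper handles this via a standard $x\mapsto x/\ln x$ inversion. Your KL bound, Fano step, and identification reduction are otherwise fine --- only the coupling of $k$ and $\eps$ needs to change.
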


As was the case in Theorem~\ref{thm:erm-lb}, the joint dependence on $\dim$ and $m$ in this lower bound
does not match that in \eqref{eqn:star-bernstein} in the case $\s=\infty$.  One can show that the dependence
in this lower bound can be made to nearly match that in \eqref{eqn:star-bernstein} for certain specially-constructed
spaces $\C$ under bounded noise \citep*{massart:06,raginsky:11,hanneke:15b} (the only gap being that $\s$ is replaced by $\s/\dim$ in \eqref{eqn:star-bernstein} to obtain the lower bound); 
however, there also exist spaces $\C$ where these lower bounds 
are nearly tight (for $\bound$ bounded away from $0$), so that they cannot be improved in the general case (see \citealp*{hanneke:15b}, 
for construction of spaces $\C$ with arbitrary $\dim$ and $\s$, for which one can show this is the case).

As mentioned above, an immediate corollary of Theorem~\ref{thm:bounded-lower-bound}, in combination with \eqref{eqn:star-bernstein}, 
is that $\s < \infty$ is necessary and sufficient for the minimax excess error rate to have $O(1/m)$ dependence on $m$
for bounded noise.  Formally, for $m \in \nats$, $\bound \in [0,1/2)$, and $\conf \in (0,1)$, let $R_{m}(\conf,\bound)$ denote the 
smallest value such that there exists a learning rule $\alg$ for which, for all $\PXY$ satisfying the $\bound$-bounded noise condition, 
with probability at least $1-\conf$, $\er(\alg(\L_{m})) - \inf_{h \in \C} \er(h) \leq R_{m}(\conf,\bound)$.
Also let $\bar{R}_{m}(\bound)$ denote the smallest value such that there exists a learning rule $\alg$ for which,
for all $\PXY$ satisfying the $\bound$-bounded noise condition, $\E[\er(\alg(\L_{m}))] - \inf_{h \in \C} \er(h) \leq \bar{R}_{m}(\bound)$.
We have the following corollary (which applies to any $\C$ with $0 < \dim < \infty$).

\begin{corollary}
\label{cor:bounded-nec-suf}
Fix any $\bound \in (0,1/2)$.
$\bar{R}_{m}(\bound) = \Theta\left(\frac{1}{m}\right)$ if and only if $\s < \infty$, and otherwise $\bar{R}_{m}(\bound) = \Theta\left(\frac{\Log(m)}{m}\right)$.
Likewise, $\forall \conf \in (0,1/24]$, $R_{m}(\conf,\bound) = \Theta\left(\frac{1}{m}\right)$ if and only if $\s < \infty$, and otherwise $R_{m}(\conf,\bound) = \Theta\left(\frac{\Log(m)}{m}\right)$.
\end{corollary}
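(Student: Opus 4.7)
The plan is to derive Corollary~\ref{cor:bounded-nec-suf} as an immediate consequence of the matching upper bound \eqref{eqn:star-bernstein}--\eqref{eqn:star-bernstein-expectation} and the lower bound from Theorem~\ref{thm:bounded-lower-bound}. Throughout, $\bound$, $\dim$, $\s$, and (for the high-probability version) $\conf$ are held fixed while $m \to \infty$, so any quantity depending only on these is $\Theta(1)$.

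First, I would establish the upper bound. Since $\bound$-bounded noise implies the $(\tsybca,\tsyba)$-Bernstein condition with $\tsybca = 1/(1-2\bound)$ and $\tsyba = 1$, the exponent $1/(2-\tsyba) = 1$ and $\tsyba/(2-\tsyba) = 1$. Substituting into \eqref{eqn:star-bernstein} and \eqref{eqn:star-bernstein-expectation}, one gets that for any measurable selection $\hat{h}_m \in \ERM(\C,\L_m)$, with probability at least $1-\conf$,
\begin{equation*}
\er(\hat{h}_m) - \inf_{h\in\C}\er(h) \lesssim \frac{\dim\,\Log\!\left(\min\!\left\{\s,\,\tfrac{(1-2\bound)^2 m}{\dim}\right\}\right) + \Log(1/\conf)}{(1-2\bound) m},
\end{equation*}
and analogously in expectation. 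Because this $\hat{h}_m$ is a particular learning rule, it upper-bounds $R_m(\conf,\bound)$ and $\bar{R}_m(\bound)$. When $\s < \infty$, the logarithm is bounded by $\Log(\s)$, yielding the $O(1/m)$ upper bound; when $\s = \infty$, the argument of the logarithm is $(1-2\bound)^2 m/\dim$, giving the $O(\Log(m)/m)$ upper bound.

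Second, I would invoke Theorem~\ref{thm:bounded-lower-bound} directly: for every learning rule there exists a $\bound$-bounded noise distribution witnessing
\begin{equation*}
\er(\hat{h}_m) - \inf_{h\in\C}\er(h) \gtrsim \frac{\dim + \bound\,\Log\!\left(\min\!\left\{\s,(1-2\bound)^2 m\right\}\right) + \Log(1/\conf)}{(1-2\bound) m} \land (1-2\bound),
\end{equation*}
and similarly in expectation. This lower-bounds $R_m(\conf,\bound)$ and $\bar{R}_m(\bound)$. Since $\dim\geq 1$, the $1/m$ lower bound always holds. Moreover, for $m$ sufficiently large (with $\bound,\dim,\s,\conf$ fixed) the first term in the $\land$ is at most $1-2\bound$, so the truncation is inactive asymptotically and may be dropped when computing the $\Theta$-rate in $m$.

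Third, I would match upper and lower bounds in each regime. If $\s < \infty$, both bounds are $\Theta(1/m)$ (the lower bound from $\dim/((1-2\bound)m)$ and the upper bound from the $\Log(\s)$ specialization), giving $R_m(\conf,\bound),\bar{R}_m(\bound) = \Theta(1/m)$. If $\s=\infty$, then $\min\{\s,(1-2\bound)^2 m\} = (1-2\bound)^2 m$, so both bounds become $\Theta(\Log(m)/m)$. Combining both cases and the corresponding statements for the expected excess error yields the corollary.

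There is no real obstacle here beyond careful bookkeeping of which parameters are constants versus asymptotic; the only subtle point is to verify that the $\land (1-2\bound)$ cap in Theorem~\ref{thm:bounded-lower-bound} does not interfere with extracting the correct asymptotic rate in $m$, which follows immediately since the numerator grows only logarithmically while the denominator grows linearly.
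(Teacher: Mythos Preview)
Your proposal is correct and matches the paper's own reasoning: the paper states the result as an immediate corollary of Theorem~\ref{thm:bounded-lower-bound} together with \eqref{eqn:star-bernstein} (and \eqref{eqn:star-bernstein-expectation}), and you have simply written out that combination explicitly, including the harmless observation that the $\land(1-2\bound)$ cap is inactive for large $m$.
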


Again, note that this is a stronger type of result than Corollary~\ref{cor:erm-nec-suf} above, 
which only found $\s < \infty$ as necessary and sufficient for a \emph{particular family} 
of learning rules to obtain $O(1/m)$ rates.  In contrast, this result applies even to the 
\emph{minimax optimal} learning rule.

We conclude this section by noting that the technique leading to Theorem~\ref{thm:bounded-lower-bound}
appears not to straightforwardly extend to the general $(\tsybca,\tsyba)$-Bernstein class condition.
Indeed, though one can certainly exhibit specific spaces $\C$ for which the minimax excess risk has
$\Theta\left( \left(\frac{\Log(m)}{m}\right)^{\frac{1}{2-\tsyba}} \right)$ dependence on $m$ (e.g., impulse functions on $\reals$; 
see \citealp*{hanneke:15b}, for related discussions),
it appears a much more challenging problem to construct general 
lower bounds describing the range of possible dependences on $m$.  
Thus, the more general question of establishing necessary and sufficient conditions
for $O\left(1/m^{\frac{1}{2-\tsyba}}\right)$ excess error rates under the $(\tsybca,\tsyba)$-Bernstein class condition 
remains open.

\subsection{Using Subregions to Achieve Improved Excess Error Bounds}
\label{sec:noisy-zc}

In general, note that plugging into \eqref{eqn:gk-bernstein} the parameters $\tsybca = \tsyba = 1$ admitted by the realizable case, \eqref{eqn:gk-bernstein} recovers the bound \eqref{eqn:gk-dc}.
Recalling that we were able to refine the bound \eqref{eqn:gk-dc} via techniques from the subregion-based analysis of \citet*{zhang:14},
yielding Theorem~\ref{thm:vc-erm-subregion} above, it is natural to consider whether we might be able to refine \eqref{eqn:gk-bernstein} in a similar way.
We find that this is indeed the case, though we establish this refinement for a different learning rule (described in Appendix~\ref{app:zcnoise-proof}).
Letting $c=128$, for any $r_{0} \in [0,1)$, $\tsybca \geq 1$ and $\tsyba \in (0,1]$,
define
\begin{equation*}
\maxzc_{\tsybca,\tsyba}(r_{0}) = \sup_{h \in \C} \sup_{r > r_{0}} \frac{\Phi\left(\Ball(h, r), (r/\tsybca)^{1/\tsyba}/c \right) }{r} \lor 1.
\end{equation*}
For completeness, also define $\maxzc_{\tsybca,\tsyba}(r_{0}) = 1$ for any $r_{0} \geq 1$, $\tsybca \geq 1$, and $\tsyba \in [0,1]$.
We have the following theorem.

\begin{theorem}
\label{thm:zc-noise}
For any $\tsybca \geq 1$ and $\tsyba \in (0,1]$, for any probability measure $\Px$ over $\X$,
for any $\conf \in (0,1)$, there exists a learning rule $\alg$ such that,
for any $\PXY$ satisfying the $(\tsybca,\tsyba)$-Bernstein class condition
with marginal distribution $\Px$ over $\X$, for any $m \in \nats$,
letting $\hat{h}_{m} = \alg(\L_{m})$, with probability at least $1-\conf$, 
\begin{equation*}
\er(\hat{h}_{m}) - \inf_{h \in \C} \er(h) \lesssim \left( \frac{\tsybca \left( \dim \Log\left( \maxzc_{\tsybca,\tsyba}\left(\tsybca \left( \frac{\tsybca \dim}{m} \right)^{\frac{\tsyba}{2-\tsyba}}\right) \right) + \Log\left(\frac{1}{\conf}\right) \right)}{m} \right)^{\frac{1}{2-\tsyba}}.
\end{equation*}
\end{theorem}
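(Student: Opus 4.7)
The plan is to construct a passive learning rule that exploits knowledge of the marginal $\Px$ to realize the subregion structure captured by $\maxzc_{\tsybca,\tsyba}$, via a peeling argument over excess risk levels. This follows the template underlying Theorem~\ref{thm:vc-erm-subregion} in the realizable case, threaded through the standard localization that takes \eqref{eqn:gk-dc} to \eqref{eqn:gk-bernstein} under the Bernstein class condition. Specifically, I would split $\L_{m}$ into $K \asymp \log_{2}(m)$ batches and maintain candidate sets $\C = V_{0} \supseteq V_{1} \supseteq \cdots \supseteq V_{K}$ together with radii $1 = r_{0} \geq r_{1} \geq \cdots$, preserving the invariant that with high probability $\agtarget \in V_{k}$ and $\sup_{h \in V_{k}} \er(h) - \er(\agtarget) \leq r_{k}$, with $r_{k+1}$ roughly $r_{k}/2$ per stage.

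At stage $k$, I would use the knowledge of $\Px$ to compute a near-optimal triple $(\gamma_{k},\zeta_{k},\xi_{k})$ witnessing $\Phi(V_{k},\eta_{k})$, for $\eta_{k}$ of order $(r_{k+1}/\tsybca)^{1/\tsyba}/c$. The constraint defining $\Phi$ guarantees that every $h \in V_{k}$ agrees with a common default labeling (determined by $\zeta_{k}$ and $\xi_{k}$) outside the subregion $\{\gamma_{k} > 0\}$, up to $\Px$-measure $\eta_{k}$. Comparing the $\er$-values of any two $h_{1}, h_{2} \in V_{k}$ therefore reduces to comparing their empirical errors on samples reweighted by $\gamma_{k}$, with an $O(\eta_{k})$ bias. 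Take $\hat{h}_{k+1}$ to be the empirical minimizer of this reweighted loss over $V_{k}$, and set $V_{k+1}$ to be the slab of $V_{k}$ whose reweighted empirical gap to $\hat{h}_{k+1}$ is small.

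The key calculation is that under the $(\tsybca,\tsyba)$-Bernstein class condition, $V_{k} \subseteq \Ball(\agtarget, \tsybca r_{k}^{\tsyba})$, and hence by definition of $\maxzc_{\tsybca,\tsyba}(r_{k})$ we have $\E[\gamma_{k}(X)] \leq \maxzc_{\tsybca,\tsyba}(r_{k}) \cdot \tsybca r_{k}^{\tsyba}$. Consequently, the Bernstein-type variance of the empirical excess-risk process over $V_{k}$ is proportional to $\maxzc_{\tsybca,\tsyba}(r_{k}) \cdot \tsybca r_{k}^{\tsyba}$ rather than to $\Px(\DIS(V_{k}))$, and a Talagrand- or Bernstein-type local uniform convergence inequality on the VC class yields the desired shrinkage as soon as the $k$-th batch has size at least a constant multiple of $\frac{\tsybca}{r_{k+1}^{2-\tsyba}} \bigl( \dim \Log(\maxzc_{\tsybca,\tsyba}(r_{k})) + \Log(K/\conf) \bigr)$. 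Iterating until $r_{K}$ matches the claimed bound, and taking a union bound over the $K$ stages, gives the theorem. The main obstacle is establishing the coupling between the reweighted empirical criterion and the original excess risk: one must show that a small gap in reweighted empirical loss, together with the $\eta_{k}$-constraint from $\Phi$, implies a correspondingly small gap in $\er$; the remaining ingredients (peeling, union bound, final optimization of the $r_{k}$ schedule) then follow the well-established Bernstein-condition template.
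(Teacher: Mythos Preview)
Your proposal is essentially the paper's own proof: the algorithm $\ZCPassive$ splits $\L_{m}$ into $\lfloor\log_{2}(m)\rfloor$ geometrically growing batches, maintains nested sets $\G_{k}$, at each stage computes the subregion witnessing $\Phi(\G_{k},\eta_{k})$, restricts empirical risk minimization to that subregion, and shrinks $\G_{k+1}$ to a slab around the empirical minimizer; the invariant $\agtarget\in\G_{k}$ and $\sup_{h\in\G_{k}}\er(h)-\er(\agtarget)\leq \tilde{U}_{k}$ is carried by Lemma~\ref{lem:gk-envelopes} together with the coupling $\bigl|\er(h_{R_{k}})-\er(g_{R_{k}})-(\er(h)-\er(g))\bigr|\leq 2\eta_{k}$, exactly the obstacle you flagged. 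Two small cleanups relative to your sketch: the paper takes $\gamma_{k}$ binary (via Lemma~\ref{lem:zc-binary}, losing only a factor of $2$) so that ``reweighting'' becomes plain restriction to a region $R_{k}$ and Lemma~\ref{lem:gk-envelopes} applies directly, and the correct scale for $\eta_{k}$ when your $r_{k}$ denotes an excess-risk bound is $\eta_{k}\asymp r_{k}/c$ (equivalently $(r/\tsybca)^{1/\tsyba}/c$ with $r=\tsybca r_{k}^{\tsyba}$ the probability radius), so that the argument of $\maxzc_{\tsybca,\tsyba}$ in the final bound is the probability radius $\tsybca r_{k}^{\tsyba}$ rather than $r_{k}$.
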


The proof is included in Appendix~\ref{app:zcnoise-proof}.
We should emphasize that the bound in Theorem~\ref{thm:zc-noise} is established
for a particular learning method (described in Appendix~\ref{app:zcnoise-proof}), \emph{not} for 
empirical risk minimization.  Thus, whether or not this bound can be established for 
the general family of empirical risk minimization rules remains an open question.
We should also note that $\maxzc_{\tsybca,\tsyba}(r_{0})$ involves a supremum over $h \in \C$
only so that we may allow the algorithm to explicitly depend on $\maxzc_{\tsybca,\tsyba}(r_{0})$
(noting that, as stated, Theorem~\ref{thm:zc-noise} allows $\Px$-dependence in the algorithm).
It is conceivable that this dependence on $\maxzc_{\tsybca,\tsyba}(r_{0})$ in $\alg$ can be removed, 
for instance via a stratification and model selection technique \citep*[see e.g.,][]{koltchinskii:06}, 
in which case this supremum over $h$ would be replaced by fixing $h = \agtarget$.

We conclude this section with some basic observations about the bound in Theorem~\ref{thm:zc-noise}.
First, in the special case of $\C$ the class of homogeneous linear separators on $\reals^{k}$ and $\Px$
any isotropic log-concave distribution,
Theorem~\ref{thm:zc-noise} recovers a bound of \citet*{balcan:13} (established for a closely related method),
since a result of \citet*{zhang:14} implies
$\maxzc_{\tsybca,\tsyba}(\tsybca \eps^{\tsyba}) \lesssim \Log\left( \tsybca \eps^{\tsyba-1} \right)$
in that case. 
Additionally, we note that a result similar to \eqref{eqn:star-bernstein} also generally holds for the method $\alg$ from Theorem~\ref{thm:zc-noise}, since \eqref{eqn:sup-zc} implies we always have
\begin{equation*}
\frac{\Phi(\Ball(h,\tsybca \eps^{\tsyba}), \eps/c)}{\tsybca \eps^{\tsyba}} \leq \left( 1 - \frac{1}{ c \tsybca } \eps^{1-\tsyba} \right) \min\left\{ \s, \frac{1}{\tsybca \eps^{\tsyba}} \right\}.
\end{equation*}

\appendix

\section{A Technical Lemma}
\label{app:technical-lemmas}

The following lemma is useful in the proofs of several of the main results of this paper.\footnote{This lemma and proof also appear in a sibling paper \citep*{hanneke:16a}.}

\begin{lemma}
\label{lem:log-factors-abstract}
For any $\a,\b,c_{1} \in [1,\infty)$ and $c_{2} \in [0,\infty)$, 
\begin{equation*}
\a \ln\left( c_{1} \left( c_{2} + \frac{\b}{\a} \right) \right)
\leq \a \ln\left( c_{1} (c_{2} + e) \right) + \frac{1}{e} \b.
\end{equation*}
\end{lemma}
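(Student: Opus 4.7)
My plan is to split off the $c_1$ factor, so that the inequality reduces to showing
\[
a \ln\!\left( c_2 + \frac{b}{a} \right) \leq a \ln(c_2 + e) + \frac{b}{e},
\]
since $a \ln(c_1(\cdot)) = a\ln(c_1) + a\ln(\cdot)$ on both sides. From there, I would exploit concavity of the function $y \mapsto a \ln(c_2 + y)$ on $[0,\infty)$, which (having derivative $a/(c_2+y)$) must lie below its tangent line at $y = e$. Evaluating that tangent line at $y = b/a$ gives
\[
a \ln\!\left( c_2 + \frac{b}{a} \right) \leq a \ln(c_2 + e) + \frac{a}{c_2+e} \left( \frac{b}{a} - e \right) = a \ln(c_2 + e) + \frac{b - a e}{c_2 + e}.
\]

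The remaining task is to verify the purely algebraic inequality $\dfrac{b - a e}{c_2 + e} \leq \dfrac{b}{e}$. When $b \leq ae$ the left-hand side is nonpositive, so the inequality holds trivially. When $b > ae$, both sides are positive and clearing denominators yields the equivalent statement $e(b - ae) \leq b(c_2 + e)$, i.e.\ $-ae^2 \leq b c_2$, which is immediate since $a, b, c_2, e \geq 0$.

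I do not anticipate a genuine obstacle here: the only mild subtlety is in choosing the right linearization point (namely $y = e$, which is forced by the fact that $b/e$ appears on the right-hand side and equals $y$ times the slope $1/e$). One could alternatively argue via the inequality $\ln z \leq z/e$ combined with a case split on whether $b/a \leq e$ or $b/a > e$, but the tangent-line route above gives a shorter and fully uniform argument.
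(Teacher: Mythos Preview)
Your proof is correct. Both you and the paper begin by cancelling the $a\ln(c_1)$ term, but then diverge: the paper argues by a case split on whether $b/a \le e$ (where monotonicity of $\ln$ suffices) or $b/a > e$ (where it bounds $c_2 + b/a \le \max\{c_2,2\}\cdot b/a$, splits the logarithm, and then uses that $x \mapsto \ln(x)/x$ is maximized at $x=e$ to control $a\ln(b/a)$ by $b/e$). Your concavity/tangent-line argument at $y=e$ replaces this with a single inequality and pushes the only case distinction into a trivial algebraic check; it is shorter and more uniform, at the cost of being slightly less self-contained than the paper's elementary manipulation. Amusingly, the alternative you sketch in your final sentence is essentially the paper's own proof.
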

\begin{proof}
By subtracting $\a \ln(c_{1})$ from both sides, we see that it suffices to verify that
$\a \ln\left( c_{2} + \frac{\b}{\a} \right) \leq \a \ln( c_{2} + e ) + \frac{1}{e} \b$.
If $\frac{\b}{\a} \leq e$, then monotonicity of $\ln(\cdot)$ implies 
\begin{equation*}
\a \ln\left( c_{2} + \frac{\b}{\a} \right)
\leq \a \ln( c_{2} + e ),
\end{equation*}
which is clearly no greater than $\a \ln( c_{2} + e ) + \frac{1}{e} \b$.
On the other hand, if $\frac{\b}{\a} > e$, then 
\begin{equation*}
\a \ln\left( c_{2} + \frac{\b}{\a} \right)
\leq \a \ln\left( \max\{ c_{2}, 2 \} \frac{\b}{\a} \right)
= \a \ln\left( \max\{ c_{2}, 2 \} \right) + \a \ln\left( \frac{\b}{\a} \right).
\end{equation*}
The first term in the rightmost expression is at most $\a \ln( c_{2} + 2 ) \leq \a \ln( c_{2} + e )$.
The second term in the rightmost expression can be rewritten as $\b \frac{\ln( \b/\a )}{\b / \a}$.
Since $x \mapsto \ln(x)/x$ is nonincreasing on $(e,\infty)$, in the case $\frac{\b}{\a} > e$ this is at most $\frac{1}{e} \b$.
Together, we have that 
\begin{equation*}
\a \ln\left( c_{2} + \frac{\b}{\a} \right)
\leq \a \ln( c_{2} + e ) + \frac{1}{e} \b
\end{equation*}
in this case as well.
\end{proof}

\section{Proof of Theorem~\ref{thm:PDIS}}
\label{app:cal}

Here we present the proof of Theorem~\ref{thm:PDIS}.

\begin{proof}[of Theorem~\ref{thm:PDIS}]
The structure of the proof is nearly identical to that of Theorem~\ref{thm:monotone-compression},
with only a few small changes to account for the fact that $\hat{\cs}_{1:m}$ 
depends on the specific samples, and in particular, on the order of the samples.

The proof proceeds by induction on $m$.
Since $\Px(\DIS(V_{m})) \leq 1$ always, 
the stated bound is trivially satisfied for all $\conf \in (0,1)$ if $m \leq 16$.
Now, as an inductive hypothesis, fix any integer $m \geq 17$ such that,
$\forall \conf \in (0,1)$, with probability at least $1-\conf$,
\begin{equation*}
\Px(\DIS(V_{\lfloor m/2 \rfloor})) \leq \frac{16}{\lfloor m/2 \rfloor} \left( 2 \hat{\cs}_{1:\lfloor m/2 \rfloor} + \ln\left(\frac{3}{\conf}\right) \right).
\end{equation*}

Fix any $\conf \in (0,1)$.
Define
\begin{equation*}
N = \left| \left\{ X_{\lfloor m/2 \rfloor + 1},\ldots,X_{m} \right\} \cap \DIS(V_{\lfloor m/2 \rfloor}) \right|,
\end{equation*}
and enumerate the elements of $\{ X_{\lfloor m/2 \rfloor + 1},\ldots, X_{m} \} \cap \DIS(V_{\lfloor m/2 \rfloor})$
as $\hat{X}_{1},\ldots,\hat{X}_{N}$.
Let $\L_{t} = \{ (X_{1},\target(X_{1})), \ldots, (X_{t},\target(X_{t})) \}$ for every $t \in [m]$,
and 
$\hat{\cs}^{\prime}_{m} = \left| \hat{C}_{\L_{m}} \setminus \L_{\lfloor m/2 \rfloor} \right|$,
and enumerate as $i_{1}^{\prime},\ldots,i_{\hat{\cs}^{\prime}_{m}}^{\prime}$ the indices $i \in \{ \lfloor m/2 \rfloor + 1, \ldots, m \}$
with $(X_{i},\target(X_{i})) \in \hat{C}_{\L_{m}} \setminus \L_{\lfloor m/2 \rfloor}$.
In particular, note that $\hat{\cs}^{\prime}_{m} \leq \hat{\cs}_{m}$, and
$\hat{C}_{\L_{m}} \subseteq \L_{\lfloor m/2 \rfloor} \cup \{ (X_{i_{1}^{\prime}},\target(X_{i_{1}^{\prime}})),\ldots,(X_{i_{\hat{\cs}^{\prime}_{m}}^{\prime}},\target(X_{i_{\hat{\cs}^{\prime}_{m}}^{\prime}})) \}$,
so that
$$\C\left[ \L_{\lfloor m/2 \rfloor} \cup \{ (X_{i_{1}^{\prime}},\target(X_{i_{1}^{\prime}})),\ldots,(X_{i_{\hat{\cs}^{\prime}_{m}}^{\prime}},\target(X_{i_{\hat{\cs}^{\prime}_{m}}^{\prime}})) \} \right] = V_{m}.$$
Next, let $\hat{\cs}^{\prime\prime}_{m} = \left| \{ j \in \left[\hat{\cs}^{\prime}_{m}\right] : X_{i_{j}^{\prime}} \in \DIS(V_{\lfloor m/2 \rfloor}) \} \right|$,
and enumerate as $i_{1}^{\prime\prime},\ldots,i_{\hat{\cs}^{\prime\prime}_{m}}^{\prime\prime}$ the indices $i \in [N]$
such that $(\hat{X}_{i},\target(\hat{X}_{i})) \in \{ (X_{i_{1}^{\prime}},\target(X_{i_{1}^{\prime}})),\ldots,(X_{i_{\hat{\cs}^{\prime}_{m}}^{\prime}},\target(X_{i_{\hat{\cs}^{\prime}_{m}}^{\prime}})) \}$.
Note that, since every $j \in \left[ \hat{\cs}^{\prime}_{m}\right]$ with $X_{i_{j}^{\prime}} \notin \DIS(V_{\lfloor m/2 \rfloor})$ 
has $h(X_{i_{j}^{\prime}}) = \target(X_{i_{j}^{\prime}})$ for every $h \in \C[ \L_{\lfloor m/2 \rfloor} \cup \{ (\hat{X}_{i_{1}^{\prime\prime}},\target(\hat{X}_{i_{1}^{\prime\prime}})),\ldots,(\hat{X}_{i_{\hat{\cs}^{\prime\prime}_{m}}^{\prime\prime}},\target(\hat{X}_{i_{\hat{\cs}^{\prime\prime}_{m}}^{\prime\prime}})) ]$
(by definition of $\DIS$ and monotonicity of $\L \mapsto \C[ \L ]$), we have
\begin{multline*}
\C[ \L_{\lfloor m/2 \rfloor} \cup \{ (\hat{X}_{i_{1}^{\prime\prime}},\target(\hat{X}_{i_{1}^{\prime\prime}})),\ldots,(\hat{X}_{i_{\hat{\cs}^{\prime\prime}_{m}}^{\prime\prime}},\target(\hat{X}_{i_{\hat{\cs}^{\prime\prime}_{m}}^{\prime\prime}})) ]
\\= \C\left[ \L_{\lfloor m/2 \rfloor} \cup \{ (X_{i_{1}^{\prime}},\target(X_{i_{1}^{\prime}})),\ldots,(X_{i_{\hat{\cs}^{\prime}_{m}}^{\prime}},\target(X_{i_{\hat{\cs}^{\prime}_{m}}^{\prime}})) \} \right] = V_{m},
\end{multline*}
so that $\DIS(V_{m})$ may be expressed as a fixed function of 
$X_{1},\ldots,X_{\lfloor m/2 \rfloor}$ and $\hat{X}_{i_{1}^{\prime\prime}},\ldots,\hat{X}_{i_{\hat{\cs}^{\prime\prime}_{m}}^{\prime\prime}}$.
Furthermore, note that the set 
$\DIS\left( \C[ \L_{\lfloor m/2 \rfloor} \cup \{ (\hat{X}_{i_{1}^{\prime\prime}},\target(\hat{X}_{i_{1}^{\prime\prime}})),\ldots,(\hat{X}_{i_{\hat{\cs}^{\prime\prime}_{m}}^{\prime\prime}},\target(\hat{X}_{i_{\hat{\cs}^{\prime\prime}_{m}}^{\prime\prime}})) ] \right)$
is invariant to permutations of the $i_{1}^{\prime\prime},\ldots,i_{\hat{\cs}^{\prime\prime}_{m}}^{\prime\prime}$ indices.

Now note that $N$
is conditionally ${\rm Binomial}(\lceil m/2 \rceil, \Px(\DIS(V_{\lfloor m/2 \rfloor})))$-distributed
given $X_{1},\ldots,X_{\lfloor m/2 \rfloor}$.
In particular, with probability one, if $P(\DIS(V_{\lfloor m/2 \rfloor})) = 0$, then $N = 0$.
Otherwise, if $\Px(\DIS(V_{\lfloor m/2 \rfloor})) > 0$, then note that $\hat{X}_{1},\ldots,\hat{X}_{N}$ are conditionally independent 
and $\Px(\cdot | \DIS(V_{\lfloor m/2 \rfloor}))$-distributed given $X_{1},\ldots,X_{\lfloor m/2 \rfloor}$ and $N$.
Thus, since $\DIS(V_{m}) \cap \{ \hat{X}_{1},\ldots,\hat{X}_{N} \} = \emptyset$ (since every $h \in V_{m}$ agrees with $\target$ on $X_{1},\ldots,X_{m}$),
combining the above with Lemma~\ref{lem:classic-compression} (applied under the conditional distribution given $X_{1},\ldots,X_{\lfloor m/2 \rfloor}$ and $N$),
combined with the law of total probability, implies that for every $\cs \in [m] \cup \{0\}$, 
with probability at least $1-\conf / (\cs+3)^{2}$, if $\hat{\cs}^{\prime\prime}_{m} = \cs$ and $N > \cs$, then 
\begin{equation*}
\Px\left(\DIS(V_{m}) \middle| \DIS(V_{\lfloor m/2 \rfloor}) \right) \leq \frac{1}{N-\cs} \left( \cs \Log\left( \frac{e N}{\cs} \right) + \Log\left( \frac{ (\cs+3)^{2} }{\conf} \right) \right).
\end{equation*}
By a union bound, this holds simultaneously for all $\cs \in [m] \cup \{0\}$ 
on an event $E_{1}$ of probability at least $1 - \sum_{i=0}^{m} \frac{\conf}{(i+3)^{2}} > 1 - \frac{2}{5} \conf$.
In particular, since the right hand side of the above inequality is nondecreasing in $\cs$, 
and $\hat{\cs}^{\prime\prime}_{m} \leq \hat{\cs}_{m}$, and since $\DIS(V_{m}) \subseteq \DIS(V_{\lfloor m/2 \rfloor})$,
we have that on $E_{1}$, if $N > \hat{\cs}_{m}$, then 
\begin{equation*}
\Px(\DIS(V_{m})) \leq \Px(\DIS(V_{\lfloor m/2 \rfloor})) \frac{1}{N-\hat{\cs}_{m}} \left( \hat{\cs}_{m} \Log\left( \frac{e N}{\hat{\cs}_{m}} \right) + \Log\left( \frac{ (\hat{\cs}_{m}+3)^{2} }{\conf} \right) \right).
\end{equation*}

Next, again since $N$ is conditionally ${\rm Binomial}(\lceil m/2 \rceil, \Px(\DIS(V_{\lfloor m/2 \rfloor})))$-distributed
given $X_{1},\ldots,X_{\lfloor m/2 \rfloor}$,
by a Chernoff bound (applied under the conditional distribution given $X_{1},\ldots,X_{\lfloor m/2 \rfloor}$),
combined with the law of total probability, we obtain that on an event $E_{2}$ of probability at least $1 - \conf/3$, 
if $\Px(\DIS(V_{\lfloor m/2 \rfloor})) \geq \frac{16}{m} \ln\left(\frac{3}{\conf}\right) \geq \frac{8}{\lceil m/2 \rceil} \ln\left(\frac{3}{\conf}\right)$, then
\begin{equation*}
N \geq \Px(\DIS(V_{\lfloor m/2 \rfloor})) \lceil m/2 \rceil / 2 \geq \Px(\DIS(V_{\lfloor m/2 \rfloor})) m / 4.
\end{equation*}
Also note that if $\Px(\DIS(V_{m})) \geq \frac{16}{m} \left( 2 \hat{\cs}_{m} +  \ln\left(\frac{3}{\conf}\right) \right)$, 
then monotonicity of $t \mapsto \DIS(V_{t})$ and monotonicity of probability measures imply
$\Px(\DIS(V_{\lfloor m/2 \rfloor})) \geq \frac{16}{m} \left( 2 \hat{\cs}_{m} +  \ln\left(\frac{3}{\conf}\right) \right)$ as well.
In particular, if this occurs with $E_{2}$, then we have $N \geq \Px(\DIS(V_{\lfloor m/2 \rfloor})) m / 4 > 8 \hat{\cs}_{m}$.
Thus, by monotonicity of $x \mapsto \Log(x)/x$ for $x > 0$, we have that on $E_{1} \cap E_{2}$, 
if $\Px(\DIS(V_{m})) \geq \frac{16}{m} \left( 2 \hat{\cs}_{m} + \ln\left(\frac{3}{\conf}\right) \right)$, 
then
\begin{align*}
\Px(\DIS(V_{m})) & < \Px(\DIS(V_{\lfloor m/2 \rfloor})) \frac{8}{7 N} \left( \hat{\cs}_{m} \Log\left(\frac{e N}{\hat{\cs}_{m}}\right) + \ln\left(\frac{(\hat{\cs}_{m}+3)^{2}}{\conf}\right) \right)
\\ & \leq \frac{32}{7 m} \left( \hat{\cs}_{m} \Log\left(\frac{e \Px(\DIS(V_{\lfloor m/2 \rfloor})) m}{4 \hat{\cs}_{m}}\right) + \ln\left(\frac{(\hat{\cs}_{m}+3)^{2}}{\conf}\right) \right).
\end{align*}

The inductive hypothesis implies that, on an event $E_{3}$ of probability at least $1 - \conf/4$, 
\begin{equation*}
\Px(\DIS(V_{\lfloor m/2 \rfloor}))
\leq \frac{16}{\lfloor m/2 \rfloor} \left( 2 \hat{\cs}_{1:\lfloor m/2 \rfloor} + \ln\left(\frac{12}{\conf}\right) \right).
\end{equation*}
Since $m \geq 17$, we have $\lfloor m/2 \rfloor \geq (m-2)/2 \geq (15/34)m$, so that the above implies
\begin{equation*}
\Px(\DIS(V_{\lfloor m/2 \rfloor})) \leq \frac{544}{15 m} \left( 2 \hat{\cs}_{1:\lfloor m/2 \rfloor} + \ln\left(\frac{12}{\conf}\right) \right). 
\end{equation*}
Thus, on $E_{1} \cap E_{2} \cap E_{3}$, if $\Px(\DIS(V_{m})) \geq \frac{16}{m} \left( 2 \hat{\cs}_{m} + \ln\left(\frac{3}{\conf}\right) \right)$, 
then
\begin{multline}
\Px(\DIS(V_{m})) < \frac{32}{7 m} \left( \hat{\cs}_{m} \Log\left( \frac{136 e}{15} \left( 2 \frac{\hat{\cs}_{1:\lfloor m/2 \rfloor}}{\hat{\cs}_{m}} + \frac{1}{\hat{\cs}_{m}} \ln\left(\frac{12}{\conf}\right) \right) \right) + \ln\left(\frac{(\hat{\cs}_{m}+3)^{2}}{\conf}\right) \right)
\\ \leq \frac{32}{7 m} \left( \hat{\cs}_{1:m} \Log\left( \frac{136 e}{15} \left( 2 + \frac{1}{\hat{\cs}_{1:m}} \ln(4) + \frac{1}{\hat{\cs}_{1:m}} \ln\left(\frac{3}{\conf}\right) \right) \right) + \ln\left(\frac{(\hat{\cs}_{1:m}+3)^{2}}{\conf}\right) \right). \label{eqn:PDIS-pre-logs}
\end{multline}
By straightforward calculus, one can easily verify that, when 
$\hat{\cs}_{1:m} \in \{0,1\}$, 
the right hand side of \eqref{eqn:PDIS-pre-logs} is at most
$\frac{16}{m} \left( 2 \hat{\cs}_{1:m} + \ln\left(\frac{3}{\conf}\right) \right)$
(recalling our conventions that $1/0 = \infty$ and $0 \Log(\infty) = 0$).
Otherwise, supposing $\hat{\cs}_{1:m} \geq 2$, 
Lemma~\ref{lem:log-factors-abstract} in Appendix~\ref{app:technical-lemmas} (applied with $\b = \frac{5e}{2}\ln(3/\conf)$) implies
the right hand side of \eqref{eqn:PDIS-pre-logs} is at most 
\begin{multline*}
\frac{32}{7 m} \left( \hat{\cs}_{1:m} \Log\left( \frac{136 e}{15} \left( 2 + \ln(4) + \frac{2}{5} \right) \right) + 2 \ln( \hat{\cs}_{1:m}+3 ) + \frac{7}{2} \ln\left(\frac{3}{\conf}\right) \right)
\\ \leq \frac{32}{7 m} \left( 5 \hat{\cs}_{1:m} + 2\ln( \hat{\cs}_{1:m}+3 ) + \frac{7}{2} \ln\left(\frac{3}{\conf}\right) \right).
\end{multline*}
Since $5 x + 2\ln(x+3) < 7x$ for any $x \geq 2$,
the above is at most
\begin{equation*}
\frac{32}{7 m} \left( 7 \hat{\cs}_{1:m} + \frac{7}{2}\ln\left(\frac{3}{\conf}\right) \right)
= \frac{16}{m} \left( 2 \hat{\cs}_{1:m} + \ln\left(\frac{3}{\conf}\right) \right).
\end{equation*} 
Thus, since $\frac{16}{m} \left( 2 \hat{\cs}_{m} + \ln\left(\frac{3}{\conf}\right) \right) \leq \frac{16}{m} \left( 2 \hat{\cs}_{1:m} + \ln\left(\frac{3}{\conf}\right) \right)$ as well,
in either case we have that, on $E_{1} \cap E_{2} \cap E_{3}$,
\begin{equation*}
\Px(\DIS(V_{m})) \leq \frac{16}{m} \left( 2 \hat{\cs}_{1:m} + \ln\left(\frac{3}{\conf}\right) \right).
\end{equation*}
Noting that, by a union bound, the event $E_{1} \cap E_{2} \cap E_{3}$ has probability at least $1 - \frac{2}{5}\conf - \frac{1}{3}\conf - \frac{1}{4}\conf > 1-\conf$,
this extends the result to $m$.  By the principle of induction, this completes the proof of Theorem~\ref{thm:PDIS}.
\end{proof}

\section{Proof of Theorem~\ref{thm:vc-erm}}
\label{app:erm}

We now present the proof of Theorem~\ref{thm:vc-erm}.

\begin{proof}[of Theorem~\ref{thm:vc-erm}]
The result trivially holds for $m \leq \lfloor 8(\ln(37)+8\ln(6)) \rfloor = 143$, so suppose $m \geq 144$.
Let $N = | \{X_{\lfloor m/2 \rfloor+1},\ldots,X_{m}\} \cap \DIS(V_{\lfloor m/2 \rfloor}) |$
and enumerate the elements of $\{X_{\lfloor m/2 \rfloor+1},\ldots,X_{m}\} \cap \DIS(V_{\lfloor m/2 \rfloor})$
as $\hat{X}_{1},\ldots,\hat{X}_{N}$.
Note that $N$ is conditionally ${\rm Binomial}(\lceil m/2 \rceil, \Px(\DIS(V_{\lfloor m/2 \rfloor})))$ -distributed
given $X_{1},\ldots,X_{\lfloor m/2 \rfloor}$.
In particular, with probability one, if $\Px(\DIS(V_{\lfloor m/2 \rfloor})) = 0$, then $N=0$.
Otherwise, if $\Px(\DIS(V_{\lfloor m/2 \rfloor})) > 0$, then note that $\hat{X}_{1},\ldots,\hat{X}_{N}$
are conditionally independent $\Px(\cdot | \DIS(V_{\lfloor m/2 \rfloor}))$-distributed random variables,
given $X_{1},\ldots,X_{\lfloor m/2 \rfloor}$ and $N$.  
Also, note that (one can easily show) $\vc( \{ \{x : h(x) \neq \target(x)\} : h \in \C \} ) = \dim$. 
Together with Lemma~\ref{lem:classic-erm} (applied under the conditional distribution given $X_{1},\ldots,X_{\lfloor m/2 \rfloor}$ and $N$),
combined with the law of total probability, 
these observations imply that there is an event $H_{1}$ of probability at least $1-\conf/3$, on which,
if $N > 0$, then $\forall h \in V_{m}$, 
\begin{equation*}
\Px( \DIS(\{ h, \target \}) | \DIS(V_{\lfloor m/2 \rfloor}) ) \leq \frac{2}{N} \left( \dim \Log_{2}\left( \frac{2eN}{\dim} \right) + \log_{2}\left(\frac{6}{\conf}\right) \right).
\end{equation*}
In particular, noting that $\forall h \in V_{m}$, since $\target \in V_{m}$ as well, 
$\DIS(\{ h, \target \}) \subseteq \DIS(V_{m}) \subseteq \DIS(V_{\lfloor m/2 \rfloor})$, 
we have that on $H_{1}$, $\forall h \in V_{m}$, 
\begin{align*}
\er(h) & = \Px(\DIS(\{h,\target\})) 
= \Px(\DIS(\{h,\target\})|\DIS(V_{\lfloor m/2 \rfloor}))\Px(\DIS(V_{\lfloor m/2 \rfloor}))
\\ & \leq \Px(\DIS(V_{\lfloor m/2 \rfloor})) \frac{2}{N} \left( \dim \Log_{2}\left( \frac{2eN}{\dim} \right) + \log_{2}\left(\frac{6}{\conf}\right) \right).
\end{align*}

Next, again since $N$ is conditionally ${\rm Binomial}(\lceil m/2 \rceil, \Px(\DIS(V_{\lfloor m/2 \rfloor})))$-distributed
given $X_{1},\ldots,X_{\lfloor m/2 \rfloor}$, by a Chernoff bound (applied under the conditional distribution given $X_{1},\ldots,X_{\lfloor m/2 \rfloor}$),
combined with the law of total probability,
there is an event $H_{2}$ of probability at least $1-\conf/3$, on which,
if $\Px(\DIS(V_{\lfloor m/2 \rfloor})) \geq \frac{32}{\lceil m/2 \rceil}\ln\left(\frac{3}{\conf}\right)$, then 
\begin{equation*}
N \geq (3/4) \Px(\DIS(V_{\lfloor m/2 \rfloor})) \lceil m/2 \rceil \geq (3/8) \Px(\DIS(V_{\lfloor m/2 \rfloor})) m,
\end{equation*}
which (by $\Log_{2}(x) \leq \Log(x)/\ln(2)$ and monotonicity of $x \mapsto \Log(x)/x$ for $x > 0$) implies
\begin{align*}
& \frac{2}{N} \left( \dim \Log_{2}\left( \frac{2eN}{\dim} \right) + \log_{2}\left(\frac{6}{\conf}\right) \right)
\\ & \leq \frac{16}{3 \ln(2) \Px(\DIS(V_{\lfloor m/2 \rfloor})) m} \left( \dim \ln\left( \frac{3 e \Px(\DIS(V_{\lfloor m/2 \rfloor})) m}{4 \dim} \right) + \ln\left(\frac{6}{\conf}\right) \right).
\end{align*}
Also, by Theorem~\ref{thm:PDIS}, on an event $H_{3}$ of probability at least $1-\conf/3$, 
\begin{equation*}
\Px(\DIS(V_{\lfloor m/2 \rfloor})) \leq \frac{16}{ \lfloor m/2 \rfloor } \left( 2 \hat{\cs}_{1:\lfloor m/2 \rfloor} + \ln\left(\frac{9}{\conf}\right) \right).
\end{equation*}
Together with the facts that $\frac{16}{3\ln(2)} < 8$ and $\lfloor m/2 \rfloor \geq \frac{m-2}{m} \frac{m}{2} \geq \frac{142}{144} \frac{m}{2}$, 
we have that, on $H_{1} \cap H_{2} \cap H_{3}$, if $\Px(\DIS(V_{\lfloor m/2 \rfloor})) \geq \frac{32}{\lceil m/2 \rceil}\ln\left(\frac{3}{\conf}\right)$, then 
\begin{align*}
\sup_{h \in V_{m}} \er(h)  & \leq \frac{8}{m} \left( \dim \ln\left( \frac{e 24 \cdot 144 ( 2 \hat{\cs}_{1:\lfloor m/2 \rfloor} + \ln(9/\conf) )}{142 \dim} \right) + \ln\left(\frac{6}{\conf}\right) \right)
\\ & = \frac{8}{m} \left( \dim \ln\left( \frac{24 \cdot 144}{7 \cdot 142} \left( \frac{14 e \hat{\cs}_{1:\lfloor m/2 \rfloor} + 7 e \ln(3/2)}{\dim} + \frac{7 e \ln(6/\conf)}{\dim} \right) \right) + \ln\left(\frac{6}{\conf}\right) \right).
\end{align*}
By Lemma~\ref{lem:log-factors-abstract} in Appendix~\ref{app:technical-lemmas},
this last expression is at most
\begin{align*}
& \frac{8}{m} \left( \dim \ln\left( \frac{24 \cdot 144}{7 \cdot 142} \left( \frac{14 e \hat{\cs}_{1:\lfloor m/2 \rfloor} + 7 e \ln(3/2)}{\dim} + e \right) \right) + 8 \ln\left(\frac{6}{\conf}\right) \right)
\\ & \leq \frac{8}{m} \left( \dim \ln\left( \left( \frac{49 e \hat{\cs}_{1:\lfloor m/2 \rfloor}}{\dim} + 37 \right) \right) + 8 \ln\left(\frac{6}{\conf}\right) \right).
\end{align*}
Furthermore, since $\DIS(\{h,\target\}) \subseteq \DIS(V_{\lfloor m/2 \rfloor})$ for every $h \in V_{m}$, 
if $\Px(\DIS(V_{\lfloor m/2 \rfloor})) < \frac{32}{\lceil m/2 \rceil}\ln\left(\frac{3}{\conf}\right) \leq \frac{64}{m}\ln\left(\frac{3}{\conf}\right)$, then 
\begin{equation*}
\sup_{h \in V_{m}} \er(h) < \frac{64}{m}\ln\left(\frac{3}{\conf}\right) 
< \frac{8}{m} \left( \dim \Log\left( \frac{49 e \hat{\cs}_{1:\lfloor m/2 \rfloor}}{\dim} + 37 \right) + 8 \ln\left(\frac{6}{\conf}\right) \right).
\end{equation*}
Thus, in either case, we have that, on $H_{1} \cap H_{2} \cap H_{3}$,
\begin{equation*}
\sup_{h \in V_{m}} \er(h) \leq \frac{8}{m} \left( \dim \Log\left( \frac{49 e \hat{\cs}_{1:\lfloor m/2 \rfloor}}{\dim} + 37 \right) + 8 \ln\left(\frac{6}{\conf}\right) \right).
\end{equation*}
The proof is completed by noting that $\hat{\cs}_{1:\lfloor m/2 \rfloor} \leq \hat{\cs}_{1:m}$,
and that, by the union bound, the event $H_{1} \cap H_{2} \cap H_{3}$ has probability at least $1-\conf$.
\end{proof}

\section{Proof of Theorem~\ref{thm:vc-erm-subregion}}
\label{app:subregions}

We now present the proof of Theorem~\ref{thm:vc-erm-subregion}.

\begin{proof}[of Theorem~\ref{thm:vc-erm-subregion}]
The proof essentially combines the argument of \citet*{hanneke:thesis} (which proves \eqref{eqn:gk-dc})
with the subsample-based ideas of \citet*{zhang:14}.
Fix $c = 16$.  The proof proceeds by induction on $m$.
Since $\sup_{h \in \C} \er(h) \leq 1$, the result trivially holds for $m < 21 (\dim\ln(83)+3\ln(4))$.
Now, as an inductive hypothesis, fix any $m \geq 21 (\dim\ln(83)+3\ln(4))$ such that 
$\forall m^{\prime} \in [m-1]$, $\forall \conf \in (0,1)$, with probability at least $1-\conf$,
\begin{equation*}
\sup_{h \in V_{m^{\prime}}} \er(h) \leq \frac{21}{m^{\prime}} \left( \dim \Log\left( 83 \zc_{c}\left( \frac{\dim}{m^{\prime}} \right) \right) + 3 \Log\left( \frac{4}{\conf} \right) \right).
\end{equation*}

Fix any $\conf \in (0,1)$ and $\eta \in [0,1]$.
Let $\gamma^{*},\zeta^{*},\xi^{*}$ be the functions
$\gamma$, $\zeta$, and $\xi$ from Definition~\ref{def:zc} (each mapping $\X \to [0,1]$)
with $\gamma^{*}(x) + \zeta^{*}(x) + \xi^{*}(x) = 1$ for all $x \in \X$,
and $\E\left[ \gamma^{*}(X) \middle| X_{1}, \ldots, X_{\lfloor m/2 \rfloor} \right]$ minimal subject to 
\begin{equation*}
\sup_{h \in V_{\lfloor m/2 \rfloor}} \E\left[ \ind[h(X)=+1]\zeta^{*}(X)+\ind[h(X)=-1]\xi^{*}(X) \middle| X_{1},\ldots,X_{\lfloor m/2 \rfloor} \right] \leq \eta,
\end{equation*}
where $X \sim \Px$ is independent of $X_{1},X_{2},\ldots$.\footnote{Note 
that the minimum is actually achieved here, since 
the objective function is continuous and convex, and the feasible region is nonempty, closed, bounded, and convex \citep*[see][Proposition 5.50]{bowers:14}.}
Note that these functions are themselves random, having dependence on $X_{1},\ldots,X_{\lfloor m/2 \rfloor}$.
In particular, $\E\left[ \gamma^{*}(X) \middle| X_{1},\ldots,X_{\lfloor m/2 \rfloor} \right] = \Phi(V_{\lfloor m/2 \rfloor},\eta)$.

Let $\Gamma_{\lfloor m/2 \rfloor + 1}, \ldots, \Gamma_{m}$ be conditionally independent random variables given $X_{1},\ldots,X_{m}$,
with $\Gamma_{i}$ having conditional distribution ${\rm Bernoulli}(\gamma^{*}(X_{i}))$ given $X_{1},\ldots,X_{m}$, for each $i \in \{ \lfloor m/2 \rfloor + 1, \ldots, m \}$.
Let $N = | \{ i \in \{ \lfloor m/2 \rfloor + 1, \ldots, m \} : \Gamma_{i} = 1 \} |$,
and enumerate the elements of $\{ X_{i} : i \in \{ \lfloor m/2 \rfloor + 1, \ldots, m \}, \Gamma_{i} = 1 \}$
as $\hat{X}_{1},\ldots,\hat{X}_{N}$ (retaining their original order).
For $X \sim \Px$ independent of $X_{1},X_{2},\ldots$, let $\Gamma(X)$ denote a random variable that is conditionally ${\rm Bernoulli}(\gamma^{*}(X))$ given $X$ and $X_{1},\ldots,X_{\lfloor m/2 \rfloor}$.
Also define a (random) probability measure $P_{\lfloor m/2 \rfloor}$ such that, given $X_{1},\ldots,X_{\lfloor m/2 \rfloor}$, 
$P_{\lfloor m/2 \rfloor}(A) = \P(X \in A | \Gamma(X) = 1, X_{1},\ldots,X_{\lfloor m/2 \rfloor})$ for all measurable $A \subseteq \X$.

Note that $N = \sum_{t=\lfloor m/2 \rfloor+1}^{m} \Gamma_{i}$ is conditionally ${\rm Binomial}\left( \lceil m/2 \rceil, \Phi(V_{\lfloor m/2 \rfloor}, \eta) \right)$
given $X_{1},\ldots,X_{\lfloor m/2 \rfloor}$.
In particular, with probability one, if $\Phi(V_{\lfloor m/2 \rfloor},\eta) = 0$, then $N=0$.
Otherwise, if $\Phi(V_{\lfloor m/2 \rfloor},\eta) > 0$,
then $\hat{X}_{1},\ldots,\hat{X}_{N}$ are conditionally i.i.d. given $X_{1},\ldots,X_{\lfloor m/2 \rfloor}$ and $N$,
each with conditional distribution $P_{\lfloor m/2 \rfloor}$ given $X_{1},\ldots,X_{\lfloor m/2 \rfloor}$ and $N$.
Thus, since every $h \in V_{m}$ has $\{ x : h(x) \neq \target(x) \} \cap \{\hat{X}_{1},\ldots,\hat{X}_{N}\} \subseteq \{ x : h(x) \neq \target(x) \} \cap \{X_{1},\ldots,X_{m}\} = \emptyset$, 
and (one can easily show) $\vc( \{ \{ x : h(x) \neq \target(x) \} : h \in \C \} ) = \dim$, 
applying Lemma~\ref{lem:classic-erm} (under the conditional distribution given $N$ and $X_{1},\ldots,X_{\lfloor m/2 \rfloor}$), 
combined with the law of total probability, we have that on an event $E_{1}$ of probability at least $1-\conf/2$, if $N > 0$, then 
\begin{equation*}
\sup_{h \in V_{m}} P_{\lfloor m / 2 \rfloor}( x : h(x) \neq \target(x) ) \leq \frac{2}{N} \left( \dim \Log_{2}\left( \frac{2 e N}{\dim} \right) + \log_{2}\left( \frac{4}{\conf} \right) \right).
\end{equation*}

Next, since $N$ is conditionally ${\rm Binomial}\left( \lceil m/2 \rceil, \Phi(V_{\lfloor m/2 \rfloor}, \eta) \right)$
given $X_{1},\ldots,X_{\lfloor m/2 \rfloor}$, applying a Chernoff bound (under the conditional distribution given $X_{1},\ldots,X_{\lfloor m/2 \rfloor}$),
combined with the law of total probability, we obtain that on an event $E_{2}$ of probability at least $1-\conf/4$, 
if $\Phi(V_{\lfloor m/2 \rfloor},\eta) \geq \frac{18}{\lceil m / 2 \rceil} \ln\left(\frac{4}{\conf}\right)$, then 
\begin{equation*}
N \geq (2/3) \Phi(V_{\lfloor m/2 \rfloor},\eta) \lceil m/2 \rceil \geq \Phi(V_{\lfloor m/2 \rfloor},\eta) m / 3.
\end{equation*}
In particular, if $\Phi(V_{\lfloor m/2 \rfloor},\eta) \geq \frac{18}{\lceil m / 2 \rceil} \ln\left(\frac{4}{\conf}\right)$, then the right hand side is strictly greater than $0$, 
so that if this occurs with $E_{2}$, then we have $N > 0$.  Thus, by the fact that $\Log_{2}(x) \leq \Log(x)/\ln(2)$, 
combined with monotonicity of $x \mapsto \Log(x)/x$ for $x > 0$, we have that
on $E_{1} \cap E_{2}$, if $\Phi(V_{\lfloor m/2 \rfloor},\eta) \geq \frac{18}{\lceil m / 2 \rceil} \ln\left(\frac{4}{\conf}\right)$, then
\begin{equation*}
\sup_{h \in V_{m}} P_{\lfloor m/2 \rfloor}( x : h(x) \neq \target(x) ) \leq \frac{6 / \ln(2)}{\Phi(V_{\lfloor m/2 \rfloor},\eta) m} \left( \dim \Log\left( \frac{2 e \Phi(V_{\lfloor m/2 \rfloor},\eta) m}{3 \dim} \right) + \ln\left( \frac{4}{\conf} \right) \right).
\end{equation*}

Next (following an argument of \citealp*{zhang:14}), note that $\forall h \in V_{m}$, 
\begin{align*}
\er(h) 
& = \E\left[ \ind[ h(X) \neq \target(X) ] \left( \gamma^{*}(X) + \zeta^{*}(X) + \xi^{*}(X) \right) \middle| X_{1},\ldots,X_{\lfloor m/2 \rfloor}\right]
\\ & = P_{\lfloor m/2 \rfloor}( x : h(x) \neq \target(x) ) \P( \Gamma(X) = 1 | X_{1},\ldots,X_{\lfloor m/2 \rfloor} ) 
\\ & {\hskip 4mm}+ \E\Big[ \big( \ind[ h(X) = +1] \ind[ \target(X) = -1 ] 
\\ & {\hskip 24mm}+ \ind[ h(X) = -1] \ind[ \target(X) = +1 ] \big) \left( \zeta^{*}(X) + \xi^{*}(X) \right) \Big| X_{1},\ldots,X_{\lfloor m/2 \rfloor} \Big]
\\ & \leq P_{\lfloor m/2 \rfloor}( x : h(x) \neq \target(x) ) \Phi(V_{\lfloor m/2 \rfloor},\eta)
\\ & {\hskip 4mm}+ \E\left[ \ind[h(X) = +1] \zeta^{*}(X) + \ind[h(X)=-1] \xi^{*}(X) \middle| X_{1},\ldots,X_{\lfloor m/2 \rfloor} \right]
\\ & {\hskip 4mm}+ \E\left[ \ind[\target(X)=+1]\zeta^{*}(X) + \ind[ \target(X) = -1 ] \xi^{*}(X) \middle| X_{1},\ldots,X_{\lfloor m/2 \rfloor} \right].
\end{align*}
Since $h,\target \in V_{\lfloor m/2 \rfloor}$, the definition of $\zeta^{*}$ and $\xi^{*}$ implies this last expression is at most
\begin{equation*}
P_{\lfloor m/2 \rfloor}( x : h(x) \neq \target(x) ) \Phi(V_{\lfloor m/2 \rfloor},\eta) + 2\eta.
\end{equation*}
Therefore, on $E_{1} \cap E_{2}$, if $\Phi(V_{\lfloor m/2 \rfloor},\eta) \geq \frac{18}{\lceil m / 2 \rceil} \ln\left(\frac{4}{\conf}\right)$, then
\begin{equation*}
\sup_{h \in V_{m}} \er(h) \leq 2\eta + \frac{6 / \ln(2)}{m} \left( \dim \Log\left( \frac{2 e \Phi(V_{\lfloor m/2 \rfloor},\eta) m}{3 \dim} \right) + \ln\left( \frac{4}{\conf} \right) \right).
\end{equation*}

The inductive hypothesis implies that, on an event $E_{3}$ of probability at least $1-\conf/4$, 
\begin{equation*}
\sup_{h \in V_{\lfloor m/2 \rfloor}} \er(h) \leq 
\frac{21}{\lfloor m/2 \rfloor} \left( \dim \Log\left( 83 \zc_{c}\left( \frac{\dim}{\lfloor m/2 \rfloor} \right) \right) + 3 \Log\left( \frac{16}{\conf} \right) \right).
\end{equation*}
Since $m \geq \lceil 21(\dim \ln(83) + 3\ln(4)) \rceil \geq 181$, we have $\lfloor m/2 \rfloor \geq (m-2)/2 \geq (179/362) m$, so that (together with monotonicity of $\zc_{c}(\cdot)$)
the above implies $V_{\lfloor m/2 \rfloor} \subseteq \Ball(\target,r_{\lfloor m/2 \rfloor})$, where
\begin{equation*}
r_{\lfloor m/2 \rfloor} = \frac{21 \cdot 362}{179 m} \left( \dim \ln\left( 83 \zc_{c}\left( \frac{\dim}{m} \right) \right) + 3 \ln\left( \frac{16}{\conf} \right) \right).
\end{equation*}

Altogether, plugging in $\eta = (r_{\lfloor m/2 \rfloor}/c) \land 1$,
and noting that $\H \mapsto \Phi(\H,\eta)$ is nondecreasing in $\H$, and that $\dim /m \leq r_{\lfloor m/2 \rfloor}$,
we have that on $E_{1} \cap E_{2} \cap E_{3}$, if $\Phi(V_{\lfloor m/2 \rfloor}, (r_{\lfloor m/2 \rfloor}/c) \land 1) \geq \frac{18}{\lceil m / 2 \rceil} \ln\left(\frac{4}{\conf}\right)$, then
\begin{align}
& \sup_{h \in V_{m}} \er(h) \leq 
\frac{2r_{\lfloor m/2 \rfloor}}{c} + \frac{6 / \ln(2)}{m} \left( \dim \Log\left( \frac{2 e \Phi(\Ball(\target,r_{\lfloor m/2 \rfloor}),(r_{\lfloor m/2 \rfloor}/c) \land 1) m}{3 \dim} \right) + \ln\left( \frac{4}{\conf} \right) \right) \notag
\\ & \leq \frac{2r_{\lfloor m/2 \rfloor}}{c} + \frac{6 / \ln(2)}{m} \left( \dim \ln\left( \frac{2 e \zc_{c}(\dim/m) r_{\lfloor m/2 \rfloor} m}{3 \dim} \right) + \ln\left( \frac{4}{\conf} \right) \right). \label{eqn:zc-erm-unsimplified}
\end{align}
The second term in this last expression equals
\begin{align*}
& \frac{6 / \ln(2)}{m} \left( \dim \ln\left( \frac{14 \cdot 362}{179} \zc_{c}\left(\frac{\dim}{m}\right) \left( e \ln\left( 83 \zc_{c}\left( \frac{\dim}{m} \right) \right) + \frac{3e}{\dim} \ln\left( \frac{16}{\conf} \right) \right) \right) + \ln\left( \frac{4}{\conf} \right) \right)
\\ & \leq \frac{6 / \ln(2)}{m} \left( \dim \ln\!\left( \frac{14 \cdot 362 \cdot 6}{179 \cdot 7} \zc_{c}\!\left(\frac{\dim}{m}\right) \left( \frac{7e}{6} \ln\!\left( 64 \cdot 83 \zc_{c}\!\left( \frac{\dim}{m} \right) \right) \!+\! \frac{7e}{2\dim} \ln\!\left( \frac{4}{\conf} \right) \right) \right) \!+\! \ln\!\left( \frac{4}{\conf} \right) \right).
\end{align*}
Applying Lemma~\ref{lem:log-factors-abstract} (with $\b = (7e/2) \ln(4/\conf)$), this is at most
\begin{equation*}
\frac{6 / \ln(2)}{m} \left( \dim \ln\left( \frac{14 \cdot 362 \cdot 6}{179 \cdot 7} \zc_{c}\left(\frac{\dim}{m}\right) \left( \frac{7 e}{6} \ln\left( 64 \cdot 83 \zc_{c}\left( \frac{\dim}{m} \right) \right) + e \right) \right) + \frac{9}{2} \ln\left( \frac{4}{\conf} \right) \right),
\end{equation*}
and a simple relaxation of the expression in the logarithm reveals this is at most
\begin{equation*}
\frac{6 / \ln(2)}{m} \left( \frac{3}{2} \dim \ln\!\left( 83 \zc_{c}\!\left(\frac{\dim}{m}\right) \right) + \frac{9}{2} \ln\!\left( \frac{4}{\conf} \right) \right)
\leq \frac{13}{m} \left( \dim \ln\!\left( 83 \zc_{c}\!\left(\frac{\dim}{m}\right) \right) + 3 \ln\!\left( \frac{4}{\conf} \right) \right).
\end{equation*}
Additionally, some straightforward reasoning about numerical constants reveals that
\begin{equation*}
\frac{2 r_{\lfloor m/2 \rfloor}}{c} \leq \frac{8}{m} \left( \dim \ln\left( 83 \zc_{c}\left(\frac{\dim}{m}\right) \right) + 3 \ln\left(\frac{4}{\conf}\right) \right).
\end{equation*}
Plugging these two facts back into \eqref{eqn:zc-erm-unsimplified},
we have that on $E_{1} \cap E_{2} \cap E_{3}$, 
if $\Phi(V_{\lfloor m/2 \rfloor},(r_{\lfloor m/2 \rfloor}/c) \land 1) \geq \frac{18}{\lceil m / 2 \rceil} \ln\left(\frac{4}{\conf}\right)$, then
\begin{equation}
\label{eqn:zc-erm-simplified}
\sup_{h \in V_{m}} \er(h) \leq \frac{21}{m} \left( \dim \ln\left( 83 \zc_{c}\left(\frac{\dim}{m}\right) \right) + 3 \ln\left(\frac{4}{\conf}\right) \right). 
\end{equation}

On the other hand, if $\Phi(V_{\lfloor m/2 \rfloor},(r_{\lfloor m/2 \rfloor}/c) \land 1) < \frac{18}{\lceil m / 2 \rceil} \ln\left(\frac{4}{\conf}\right)$, then recalling that
(as established above) $\sup_{h \in V_{m}} \er(h) \leq 2\eta + \sup_{h \in V_{m}} P_{\lfloor m/2 \rfloor}( x : h(x) \neq \target(x) ) \Phi(V_{\lfloor m/2 \rfloor},\eta)$,
plugging in $\eta = (r_{\lfloor m/2 \rfloor}/c) \land 1$ and noting that $P_{\lfloor m/2 \rfloor}( x : h(x) \neq \target(x) ) \leq 1$, we have
\begin{align*}
\sup_{h \in V_{m}} \er(h) 
& \leq \frac{2 r_{\lfloor m/2 \rfloor}}{c} + \Phi(V_{\lfloor m/2 \rfloor},(r_{\lfloor m/2 \rfloor}/c) \land 1)
\\ & < \frac{8}{m} \left( \dim \ln\left( 83 \zc_{c}\left(\frac{\dim}{m}\right) \right) + 3 \ln\left(\frac{4}{\conf}\right) \right) + \frac{18}{\lceil m / 2 \rceil} \ln\left(\frac{4}{\conf}\right)
\\ & \leq \frac{21}{m} \left( \dim \ln\left( 83 \zc_{c}\left(\frac{\dim}{m}\right) \right) + 3 \ln\left(\frac{4}{\conf}\right) \right).
\end{align*}
Thus, in either case, on $E_{1} \cap E_{2} \cap E_{3}$, \eqref{eqn:zc-erm-simplified} holds.
Noting that, by the union bound, the event $E_{1} \cap E_{2} \cap E_{3}$ has probability at least $1-\conf$,
this extends the inductive hypothesis to $m$.  The result then follows by the principle of induction.
\end{proof}

\subsection{The Worst-Case Value of $\boldsymbol{\zc_{c}}$}
\label{sec:sup-zc}

Next, we prove \eqref{eqn:sup-zc}.
Fix any $c \geq 2$.
First, suppose $r_{0} \in (0,1)$, and let $m = \min\left\{ \s, \left\lceil \frac{1}{r_{0}} \right\rceil \right\}$;
note that our assumption that $|\C| \geq 3$ implies $\s \geq 2$, so that $m \geq 2$ here.
Let $x_{1},\ldots,x_{m} \in \X$ and $h_{0},h_{1},\ldots,h_{m} \in \C$ be as in Definition~\ref{def:star}.
Let $\Px(\{x_{i}\}) = 1/m$ for each $i \in [m]$, and take $\target = h_{0}$.

Let $r_{1}$ be any value satisfying $\max\{1/m,r_{0}\} < r_{1} \leq 1$
chosen sufficiently close to $\max\{1/m,r_{0}\}$ so that $\frac{m r_{1}}{c} < 1$.
Consider now the definition of $\Phi(\Ball(\target,r_{1}),r_{1}/c)$ from Definition~\ref{def:zc}.
For any functions $\chi_{0},\chi_{1} : \X \to [0,1]$, 
let $\zeta(x) = \ind[ h_{0}(x) = -1 ] \chi_{0}(x) + \ind[ h_{0}(x) = +1 ] \chi_{1}(x)$
and $\xi(x) = \ind[ h_{0}(x) = -1 ] \chi_{1}(x) + \ind[ h_{0}(x) = +1 ] \chi_{0}(x)$.
In particular, note that it is possible to specify any functions $\zeta,\xi : \X \to [0,1]$ 
by choosing appropriate $\chi_{0},\chi_{1}$ values (namely, 
$\chi_{0}(x) = \ind[ h_{0}(x) = -1 ] \zeta(x) + \ind[ h_{0}(x) = +1 ] \xi(x)$
and
$\chi_{1}(x) = \ind[ h_{0}(x) = -1 ] \xi(x) + \ind[ h_{0}(x) = +1 ] \zeta(x)$).
Noting that, for any classifier $h$ and any $x \in \X$, 
$\ind[h(x) = +1] \zeta(x) + \ind[h(x) = -1] \xi(x)
= \ind[ h(x) \neq h_{0}(x) ] \chi_{0}(x) + \ind[ h(x) = h_{0}(x) ] \chi_{1}(x)$,
and $\zeta(x) + \xi(x) = \chi_{0}(x) + \chi_{1}(x)$,
we may re-express the constraints in the optimization problem defining $\Phi(\Ball(\target,r_{1}),r_{1}/c)$ in Definition~\ref{def:zc} as
$\sup_{h \in \Ball(\target,r_{1})} \E[ \ind[ h(X) \neq h_{0}(X) ] \chi_{0}(X) + \ind[ h(X) = h_{0}(X) ] \chi_{1}(X) ] \leq r_{1}/c$
and $\forall x \in \X$, $\gamma(x) + \chi_{0}(x) + \chi_{1}(x) = 1$ while $\gamma(x), \chi_{0}(x), \chi_{1}(x) \in [0,1]$.
We may further simplify the problem by noting that $\gamma(x) = 1 - \chi_{0}(x) - \chi_{1}(x)$, so that these last two 
constraints become $\chi_{0}(x) + \chi_{1}(x) \leq 1$ while $\chi_{0}(x),\chi_{1}(x) \geq 0$,
and the value $\Phi(\Ball(\target,r_{1}),r_{1}/c)$ is the minimum achievable value of $\E[ 1 - \chi_{0}(X) - \chi_{1}(X) ]$ subject to these constraints.
Furthermore, noting that $h_{i} \in \Ball(\target,r_{1})$ for every $i \in [m]$, 
we have that
\begin{align*}
& \Phi(\Ball(\target,r_{1}),r_{1}/c) 
\\ & \geq \min\left\{ \E[ 1 - \chi_{0}(X) - \chi_{1}(X) ] : \phantom{\max_{i \in [m]}} \right.
\\ &{\hskip 22mm}\left. \max_{i \in [m]} \E\left[ \ind[ h_{i}(X) \neq h_{0}(X) ] \chi_{0}(X) + \ind[ h_{i}(X) = h_{0}(X) ] \chi_{1}(X) \right] \leq \frac{r_{1}}{c}, \right.
\\ &{\hskip 15mm} \left. \phantom{\max_{i \in [m]}} \text{where } \forall x \in \X, \chi_{0}(x) + \chi_{1}(x) \leq 1 \text{ and } \chi_{0}(x),\chi_{1}(x) \geq 0 \right\}
\\ & = \min\left\{ \sum_{i=1}^{m} \frac{1}{m}(1 - \chi_{0}(x_{i}) - \chi_{1}(x_{i})) : \right.
\\ &{\hskip 14mm} \left. \forall i \in [m], \chi_{0}(x_{i}) + \sum_{j \neq i} \chi_{1}(x_{j}) \leq \frac{m r_{1}}{c}, \chi_{0}(x_{i}) + \chi_{1}(x_{i}) \leq 1, \chi_{0}(x_{i}),\chi_{1}(x_{i}) \geq 0 \right\}.
\end{align*}
This is a simple linear program with linear inequality constraints.
We can explicitly solve this problem to find an optimal solution with $\chi_{1}(x_{i}) = 0$ and $\chi_{0}(x_{i}) = \frac{m r_{1}}{c}$ for all $i \in [m]$,
at which the value of the objective function $\sum_{i=1}^{m} \frac{1}{m}( 1 - \chi_{0}(x_{i}) - \chi_{1}(x_{i}) )$ is $1 - \frac{m r_{1}}{c}$.
One can easily verify that this choice of $\chi_{0}$ and $\chi_{1}$ satisfies the constraints above.
To see that this is an optimal choice, we note that the objective function can be re-expressed as 
$\sum_{i=1}^{m} \frac{1}{m} ( 1 - \chi_{0}(x_{i}) - \chi_{1}(x_{\sigma(i)}) )$, where $\sigma(i) = i+1$ for $i \in [m-1]$, and $\sigma(m) = 1$.
In particular, since $m \geq 2$, we have $\sigma(i) \neq i$ for each $i \in [m]$.
Thus, for any $\chi_{0}$ and $\chi_{1}$ satisfying the constraints above, 
we have $\chi_{0}(x_{i}) + \chi_{1}(x_{\sigma(i)}) \leq \chi_{0}(x_{i}) + \sum_{j \neq i} \chi_{1}(x_{j}) \leq \frac{m r_{1}}{c}$ for each $i \in [m]$,
so that $\sum_{i=1}^{m} \frac{1}{m} ( 1 - \chi_{0}(x_{i}) - \chi_{1}(x_{\sigma(i)}) ) \geq 1 - \frac{m r_{1}}{c}$,
which is precisely the value obtained with the above choices of $\chi_{0}$ and $\chi_{1}$.

Thus, since the above argument holds for any choice of $r_{1} > \max\{1/m,r_{0}\}$ sufficiently close to $\max\{1/m,r_{0}\}$,
we have
\begin{equation*}
\zc_{c}(r_{0}) = \sup_{r_{0} < r \leq 1} \frac{\Phi(\Ball(\target,r),r/c)}{r} \lor 1
\geq \lim_{r_{1} \searrow \max\{1/m,r_{0}\}} \frac{1 - \frac{m r_{1}}{c}}{r_{1}}
= \frac{1 - \frac{1}{c} \max\{ 1, m r_{0} \}}{\max\{ 1/m, r_{0} \}}.
\end{equation*}
If $\s < \frac{1}{r_{0}}$, then $m = \s$, and the rightmost expression above equals $(1 - 1/c)\s$.
Otherwise, if $s \geq \frac{1}{r_{0}}$, then $m = \left\lceil \frac{1}{r_{0}} \right\rceil$, 
and the rightmost expression above equals 
\begin{equation*}
\left( 1 - \frac{1}{c} \left\lceil \frac{1}{r_{0}} \right\rceil r_{0} \right) \frac{1}{r_{0}} 
\geq \left( 1 - \frac{1+r_{0}}{c} \right) \frac{1}{r_{0}} 
= \left( 1 - \frac{1}{c} \right) \left( \frac{1}{r_{0}} - \frac{1}{c-1} \right).
\end{equation*}
Either way, we have
\begin{equation*}
\zc_{c}(r_{0}) \geq \left(1 - \frac{1}{c}\right) \min\left\{ \s, \frac{1}{r_{0}} - \frac{1}{c-1} \right\}.
\end{equation*}

For the case $r_{0} = 0$, we note that $\forall \eps > 0$, any $c \geq 2$ has
\begin{equation*}
\sup_{\Px} \sup_{\target \in \C} \zc_{c}(0) \geq \sup_{\Px} \sup_{\target \in \C} \zc_{c}(\eps) \geq \left(1 - \frac{1}{c}\right) \min\left\{ \s, \frac{1}{\eps} - \frac{1}{c-1} \right\}.
\end{equation*}
Taking the limit $\eps \to 0$ yields $\sup_{\Px} \sup_{\target \in \C} \zc_{c}(0) \geq \left(1 - \frac{1}{c}\right) \s = \left(1 - \frac{1}{c}\right) \min\left\{ \s, \frac{1}{r_{0}} - \frac{1}{c-1} \right\}$.

For the upper bound, we clearly have $\zc_{c}(r_{0}) \leq (1-1/c) \dc(r_{0})$ for every $c > 1$.
To see this, take $\zeta(x) = (1/c) \ind[ x \in \DIS(\Ball(\target,r)) ] \ind[ \target(x) = -1 ] + \ind[ x \notin \DIS(\Ball(\target,r)) ] \ind[ \target(x) = -1 ]$
and $\xi(x) = (1/c) \ind[ x \in \DIS(\Ball(\target,r)) ] \ind[ \target(x) = +1 ] + \ind[ x \notin \DIS(\Ball(\target,r)) ] \ind[ \target(x) = +1 ]$
in the optimization problem defining $\Phi(\Ball(\target,r),r/c)$ in Definition~\ref{def:zc}.  
With these choices of $\zeta$ and $\xi$, we have $\E[\gamma(X)] = (1-1/c) \Px(\DIS(\Ball(\target,r)))$;
also, for any $h \in \Ball(\target,r)$, since $\DIS(\{h,\target\}) \subseteq \DIS(\Ball(\target,r))$,
we have $\E[ \ind[h(X) = +1] \zeta(X) + \ind[ h(X) = -1 ] \xi(X) ] = \E[ (1/c) \ind[ h(X) \neq \target(X) ] ] = (1/c) \Px( x : h(x) \neq \target(x) ) \leq r/c$;
one can easily verify that the remaining constraints are also satisfied.
Thus, since \citet*{hanneke:15b} prove $\sup_{\Px} \sup_{\target \in \C} \dc(r_{0}) = \min\left\{ \s, \frac{1}{r_{0}} \right\}$, 
we have $\sup_{\Px} \sup_{\target \in \C} \zc_{c}(r_{0}) \leq (1-1/c) \min\left\{ \s, \frac{1}{r_{0}} \right\}$.

We also note that, if we define $\zc_{c}^{\zo}(r_{0})$ identically to $\zc_{c}(r_{0})$ except that
$\gamma$ is restricted to have \emph{binary} values (i.e., in $\{0,1\}$),
then for $c \geq 4$, this same construction giving the lower bound above 
must have $\gamma(x_{i}) = 1$ for every $i \in [m]$,
which implies $\zc_{c}^{\zo}(r_{0}) \geq \min\left\{ \s, \frac{1}{r_{0}} \right\}$ in this case. 
To see this, consider any $r_{1} > \max\{1/m,r_{0}\}$ sufficiently small so that $\frac{m r_{1}}{c} < \frac{1}{2}$;
then to satisfy the constraints $\chi_{0}(x_{i}) + \sum_{j \neq i} \chi_{1}(x_{j}) \leq \frac{m r_{1}}{c} < \frac{1}{2}$ for every $i \in [m]$, 
while $\chi_{0}(x_{i}),\chi_{1}(x_{i}) \geq 0$, we must have every $\chi_{0}(x_{i})$ and $\chi_{1}(x_{i})$ strictly less than $\frac{1}{2}$,
so that $\gamma(x_{i}) = 1 - \chi_{0}(x_{i}) - \chi_{1}(x_{i}) > 0$ (and hence, $\gamma(x_{i}) = 1$, due to the constraint to binary values).
As we always have $\zc_{c}^{\zo}(r_{0}) \leq \dc(r_{0})$, and \citet*{hanneke:15b} have shown 
$\sup_{\Px} \sup_{\target \in \C} \dc(r_{0}) = \min\left\{ \s, \frac{1}{r_{0}} \right\}$, 
this implies $\sup_{\Px} \sup_{\target \in \C} \zc_{c}^{\zo}(r_{0}) = \min\left\{ \s, \frac{1}{r_{0}} \right\}$ as well.

\subsection{Relation of $\boldsymbol{\zc_{c}(r_{0})}$ to the Doubling Dimension}
\label{app:doubling-zc-bound}

Here we present the proof of \eqref{eqn:doubling-zc-bound},
via a modification of an argument of \citet*{hanneke:15b}.
We in fact prove the following slightly stronger inequality:
for any $c \geq 8$ and $r > 0$,
\begin{equation}
\label{eqn:doubling-phi-bound}
\log_{2}\left( \covering\!\left( r/2, \Ball(\target,r), \Px\right) \right) 
\leq 2 \dim \log_{2}\left( 96 \left( \frac{\Phi(\Ball(\target,r),r/c)}{r} \lor 1 \right) \right),
\end{equation}
which will immediately imply \eqref{eqn:doubling-zc-bound} by taking the 
supremum of both sides over $r > r_{0}$ (with some careful consideration of
the special case $r=r_{0}$; see below).

Fix any $c > 4$ and $r \in (0,1]$.  Let $G_{r}$ denote any maximal $(r/2)$-packing of $\Ball(\target,r)$:
that is, $G_{r}$ is a subset of $\Ball(\target,r)$ of maximal cardinality such that 
$\min_{h,g \in G_{r} : h \neq g} \Px( x : h(x) \neq g(x) ) > r/2$.  It is known that any
such set $G_{r}$ satisfies 
\begin{equation}
\label{eqn:packing-covering-inequalities}
\covering(r/2,\Ball(\target,r),\Px) \leq |G_{r}| \leq \covering(r/4,\Ball(\target,r),\Px)
\end{equation}
\citep*[see e.g.,][]{kolmogorov:59,kolmogorov:61,vidyasagar:03}.
In particular, since we have assumed $\dim < \infty$, in our case this further implies $|G_{r}| < \infty$ \citep*{haussler:95}.
Also, this implies that if $|G_{r}| = 1$, then \eqref{eqn:doubling-phi-bound} trivially holds,
so let us suppose $|G_{r}| \geq 2$.

Now fix any measurable functions $\gamma,\zeta,\xi$ mapping $\X \to [0,1]$
satisfying the constraint $\sup_{h \in \Ball(\target,r)} \E[\ind[h(X)=+1]\zeta(X) + \ind[h(X)=-1]\xi(X)] \leq r/c$,
where $X \sim \Px$, and $\forall x \in \X$, $\gamma(x)+\zeta(x)+\xi(x) = 1$;
for simplicity, also suppose $\E[\gamma(X)] \geq r$.
As above, for $m \in \nats$, let $X_{1},\ldots,X_{m}$ be independent $\Px$-distributed random variables.
Then let $\Gamma_{1},\ldots,\Gamma_{m}$ be conditionally independent given $X_{1},\ldots,X_{m}$, with 
the conditional distribution of each $\Gamma_{i}$ as ${\rm Bernoulli}(\gamma(X_{i}))$ given $X_{1},\ldots,X_{m}$.
Let $N_{m} = | \{ i \in [m] : \Gamma_{i} = 1 \}|$, and let $\hat{X}_{1},\ldots,\hat{X}_{N_{m}}$ denote the 
subsequence of $X_{1},\ldots,X_{m}$ for which the respective $\Gamma_{i} = 1$.

By two applications of the Chernoff bound, combined with the union bound, 
the event $E_{1} = \{ m \E[\gamma(X)] / 2 \leq N_{m} \leq 2 m \E[\gamma(X)] \}$ 
has probability at least $1 - 2\exp\{ - m \E[\gamma(X)] / 8 \}$.
Additionally, $\forall f,g \in G_{r}$ with $f \neq g$, $\forall i \in [m]$,
\begin{align*}
& \P( f(X_{i}) \neq g(X_{i}) \text{ and } \Gamma_{i} = 0 )
\\ & = \E[ \ind[ f(X) \neq g(X) ] (1-\gamma(X)) ]
= \E[ \ind[ f(X) \neq g(X) ] (\zeta(X) + \xi(X)) ]
\\ & = \E\left[ \left( \ind[ f(X) = +1 ]\ind[ g(X) = -1 ] + \ind[ f(X) = -1 ]\ind[ g(X) = +1 ] \right) (\zeta(X) + \xi(X)) \right]
\\ & \leq \E\left[ \ind[ f(X) \!=\! +1 ]\zeta(X) + \ind[ f(X) \!=\! -1 ] \xi(X) \right] 
+ \E\left[ \ind[ g(X) \!=\! -1] \xi(X) + \ind[ g(X) \!=\! +1 ] \zeta(X) \right]
\\ & \leq \frac{2 r}{c},
\end{align*}
so that
\begin{equation*}
\P( f(X_{i}) \!\neq\! g(X_{i}) \text{ and } \Gamma_{i} \!=\! 1 ) 
= \P( f(X_{i}) \!\neq\! g(X_{i}) ) - \P( f(X_{i}) \!\neq\! g(X_{i}) \text{ and } \Gamma_{i} \!=\! 0 )
> \frac{r}{2} - \frac{2r}{c}.
\end{equation*}
In particular, this implies
\begin{equation*}
\P\left( f(X_{i}) \neq g(X_{i}) \middle| \Gamma_{i} = 1 \right)
\geq \left(\frac{1}{2} - \frac{2}{c}\right) \frac{r}{\E[\gamma(X)]}.
\end{equation*}
Therefore,
\begin{align*}
& \P\left( \exists i \in [N_{m}] : f(\hat{X}_{i}) \neq g(\hat{X}_{i}) \middle| N_{m} \right)
= 1 - \left( 1 - \P( f(X_{1}) \neq g(X_{1}) | \Gamma_{1} = 1 ) \right)^{N_{m}}
\\ & \geq 1 - \left( 1 - \left(\frac{1}{2} - \frac{2}{c} \right) \frac{r}{\E[\gamma(X)]} \right)^{N_{m}}
\geq 1 - \exp\left\{ - \left(\frac{1}{2} - \frac{2}{c} \right) \frac{r}{\E[\gamma(X)]} N_{m} \right\}.
\end{align*}
On the event $E_{1}$, this is at least $1 - \exp\left\{ - \left(\frac{1}{4} - \frac{1}{c} \right) r m \right\}$.
Altogether, we have that
\begin{align*}
& \P\left( E_{1} \text{ and } \exists i \in [N_{m}] : f(\hat{X}_{i}) \neq g(\hat{X}_{i}) \right)
= \E\left[ \ind_{E_{1}} \cdot \P\left( \exists i \in [N_{m}] : f(\hat{X}_{i}) \neq g(\hat{X}_{i}) \middle| N_{m} \right) \right]
\\ & \geq \left( 1 - \exp\left\{ - \left(\frac{1}{4} - \frac{1}{c} \right) r m \right\} \right) \P(E_{1})
\\ & \geq 1 - \exp\left\{ - \left(\frac{1}{4} - \frac{1}{c} \right) r m \right\} - 2 \exp\{ - m \E[\gamma(X)] / 8 \}
\\ & \geq 1 - \exp\left\{ - \left(\frac{c-4}{4c}\right) r m \right\} - 2 \exp\{ - m r / 8 \}.
\end{align*}
In particular, choosing 
\begin{equation*}
m = \left\lceil \frac{1}{r} \left(\frac{4c}{c-4} \lor 8\right) \ln\left( 2|G_{r}|^{2} \right) \right\rceil,
\end{equation*}
we have that $\P\left( E_{1} \text{ and } \exists i \in [N_{m}] : f(\hat{X}_{i}) \neq g(\hat{X}_{i}) \right) \geq 1 - \frac{2}{|G_{r}|^{2}}$.
By a union bound, this implies that with probability at least $1 - \frac{2}{|G_{r}|^{2}} \binom{|G_{r}|}{2} = \frac{1}{|G_{r}|} > 0$, 
$E_{1}$ holds and, for every $f,g \in G_{r}$ with $f \neq g$, $\exists i \in [N_{m}]$ for which $f(\hat{X}_{i}) \neq g(\hat{X}_{i})$:
that is, every $f \in G_{r}$ classifies $\hat{X}_{1},\ldots,\hat{X}_{N_{m}}$ distinctly.
But for this to be the case, $|G_{r}|$ can be at most the number of distinct classifications 
of a sequence of $N_{m}$ points in $\X$ realizable by classifiers in $\C$, 
where (since $E_{1}$ also holds) $N_{m} \leq 2 m \E[\gamma(X)]$.
Together with the VC-Sauer lemma \citep*{vapnik:71,sauer:72}, this implies that
\begin{align*}
& \log_{2}(|G_{r}|) \leq \dim \log_{2}\left( \frac{2 e m \E[\gamma(X)]}{\dim} \lor 2 \right)
\\ & \leq \dim \log_{2}\left( \frac{35 \cdot 4 e}{33} \left(\frac{4c}{c-4} \lor 8\right) \frac{\E[\gamma(X)]}{r} \frac{1}{\dim} \left( \ln(\sqrt{2}) + \ln(|G_{r}|) \right) \lor 2 \right)
\\ & = \dim \log_{2}\left( \frac{35 \cdot 4 e}{33 \log_{2}(e)} \left(\frac{4c}{c-4} \lor 8\right) \frac{\E[\gamma(X)]}{r} \frac{1}{\dim} \left( (1/2) + \log_{2}(|G_{r}|) \right) \lor 2 \right),
\end{align*}
where the second inequality follows from the fact that $8 \ln(2 |G_{r}|^{2}) > 16.5$ (since $|G_{r}| \geq 2$), 
so that $m \leq \frac{17.5}{16.5} \frac{1}{r} \left( \frac{4c}{c-4} \lor 8 \right) \ln(2 |G_{r}|^{2}) = \frac{35}{33} \frac{1}{r} \left( \frac{4c}{c-4} \lor 8 \right) \ln(2 |G_{r}|^{2})$.

If $\log_{2}(|G_{r}|) \leq \dim$, then together with \eqref{eqn:packing-covering-inequalities}, 
the inequality \eqref{eqn:doubling-phi-bound} trivially holds.
Otherwise, if $\log_{2}(|G_{r}|) > \dim$, then letting $K = \frac{1}{\dim} \log_{2}(|G_{r}|) \geq 1$, 
the above implies
\begin{align*}
K & \leq \log_{2}\left( \frac{35 \cdot 4 e}{33 \log_{2}(e)} \left(\frac{4c}{c-4} \lor 8\right) \frac{\E[\gamma(X)]}{r} \frac{3}{2} K \right)
\\ & = \log_{2}\left( \frac{35 \cdot 4 e}{22 \log_{2}(e)} \left(\frac{4c}{c-4} \lor 8\right) \frac{\E[\gamma(X)]}{r} \right) + \log_{2}(K).
\end{align*}
Via some simple calculus \citep*[see e.g.,][Lemma 4.6]{vidyasagar:03}, this implies 
\begin{equation*}
K \leq 2\log_{2}\left( \frac{35 \cdot 4 e}{22 \log_{2}(e)} \left(\frac{4c}{c-4} \lor 8\right) \frac{\E[\gamma(X)]}{r} \right).
\end{equation*}
Noting that $\frac{35 \cdot 4 e}{22 \log_{2}(e)} < 12$,
together with \eqref{eqn:packing-covering-inequalities},
we have that
\begin{equation}
\label{eqn:raw-gamma-bound}
\log_{2}( \covering( r/2, \Ball(\target,r), \Px ) ) 
\leq 2 \dim \log_{2}\left( 12 \left(\frac{4c}{c-4} \lor 8\right) \frac{\E[\gamma(X)]}{r} \right).
\end{equation}
This inequality holds for any choice of $\gamma,\zeta,\xi$ satisfying 
the constraints in the definition of $\Phi(\Ball(\target,r),r/c)$ from 
Definition~\ref{def:zc}, with the additional constraint that $\E[\gamma(X)] \geq r$.
Thus, if $\Phi(\Ball(\target,r),r/c) \geq r$, then by minimizing the right hand side of 
\eqref{eqn:raw-gamma-bound} over the choice of $\gamma,\zeta,\xi$,
it follows that
\begin{equation*}
\log_{2}( \covering( r/2, \Ball(\target,r), \Px ) ) 
\leq 2 \dim \log_{2}\left( 12 \left(\frac{4c}{c-4} \lor 8\right) \frac{\Phi(\Ball(\target,r),r/c)}{r} \right).
\end{equation*}
Otherwise, if $\Phi(\Ball(\target,r),r/c) < r$, then we note that, for any
functions $\gamma^{*},\zeta^{*},\xi^{*}$ satisfying the constraints from
the definition of $\Phi(\Ball(\target,r),r/c)$ such that $\E[\gamma^{*}(X)] \!=\! \Phi(\Ball(\target,r),r/c)$,
there exists functions $\gamma,\zeta,\xi$ satisfying the constraints from
the definition of $\Phi(\Ball(\target,r),r/c)$ for which $\E[\gamma(X)]=r$.
For instance, we can take $\gamma$ based on a convex combination of $\gamma^{*}$ and $1$:
$\gamma(x) = \frac{1-r}{1-\E[\gamma^{*}(X)]} \gamma^{*}(x) + \frac{r-\E[\gamma^{*}(X)]}{1-\E[\gamma^{*}(X)]}$,
$\zeta(x) = (\zeta^{*}(x) - (\gamma(x) - \gamma^{*}(x))) \lor 0$,
$\xi(x) = 1 - \gamma(x) - \zeta(x)$; one can easily verify that, since $0 \leq \zeta(x) \leq \zeta^{*}(x)$ and $0 \leq \xi(x) \leq \xi^{*}(x)$,
this choice of $\gamma,\zeta,\xi$ still satisfy the requirements for $\gamma,\zeta,\xi$ above,
and that furthermore, $\E[\gamma(X)] = r$.
Therefore, \eqref{eqn:raw-gamma-bound} implies 
$\log_{2}( \covering( r/2, \Ball(\target,r), \Px) ) \leq 2 \dim \log_{2}\left( 12 \left( \frac{4c}{c-4} \lor 8 \right) \right)$. 
Thus, either way, we have established that
\begin{equation}
\label{eqn:phi-bound-with-c}
\log_{2}( \covering( r/2, \Ball(\target,r), \Px ) ) 
\leq 2 \dim \log_{2}\left( 12 \left(\frac{4c}{c-4} \lor 8\right) \left( \frac{\Phi(\Ball(\target,r),r/c)}{r} \lor 1 \right) \right).
\end{equation}
Noting that, for any $c \geq 8$, $\frac{4c}{c-4} \leq 8$, 
this establishes \eqref{eqn:doubling-phi-bound} for any $c \geq 8$ and $r \in (0,1]$.

In the case of $r > 1$, a result of \citet*{haussler:95} implies that
\begin{align*}
& \log_{2}( \covering( r/2, \Ball(\target,r), \Px) )
\leq \log_{2}( \covering( 1/2, \C, \Px) )
\leq \dim \log_{2}(4 e) + \log_{2}(e (\dim+1))
\\ & \leq \dim \log_{2}(4 e) \!+\! \dim \!+\! \log_{2}(e)
\leq \dim \log_{2}(8 e^{2})
\leq \dim \log_{2}( 96 )
\leq 2 \dim \log_{2}\!\left( 96 \left( \frac{\Phi(\Ball(\target,r),r/c)}{r} \!\lor\! 1 \right) \right),
\end{align*}
so that both \eqref{eqn:doubling-phi-bound} and \eqref{eqn:phi-bound-with-c} are also valid for $r > 1$.
This completes the proof of \eqref{eqn:doubling-phi-bound}.

As a final step in the proof of \eqref{eqn:doubling-zc-bound}, 
we note that there is a slight complication to be resolved, 
since the defintion of $\dd(r_{0})$ includes $r_{0}$ in the range of $r$, 
while the definition of $\zc_{c}(r_{0})$ does not.  However, we note that, 
for any $c > 4$, any $r_{0} > 0$, and any $r > r_{0}$ sufficiently close to $r_{0}$, 
we have $c > c r_{0} / r > 4$, 
so that \eqref{eqn:phi-bound-with-c}
would imply 
\begin{align*}
\log_{2}\!\left( \covering\!\left( r_{0}/2, \Ball(\target,r_{0}), \Px\right) \right) 
& \leq 2 \dim \log_{2}\!\left( 12 \left( \frac{4 (c r_{0}/r)}{(c r_{0}/r)-4} \!\lor\! 8 \right) \!\left( \frac{\Phi(\Ball(\target,r_{0}),r_{0}/(c r_{0}/r))}{r_{0}} \!\lor\! 1 \right) \right)
\\ & \leq 2 \dim \log_{2}\!\left( 12 \left( \frac{4 c}{(c r_{0}/r)-4} \lor \frac{8 r}{r_{0}} \right) \left( \frac{\Phi(\Ball(\target,r),r/c)}{r} \lor 1 \right) \right).
\end{align*}
Then taking the limit as $r \searrow r_{0}$ implies
\begin{align*}
\log_{2}\left( \covering\!\left( r_{0}/2, \Ball(\target,r_{0}), \Px\right) \right) 
& \leq 2 \dim \log_{2}\left( 12 \left( \frac{4 c}{c-4} \lor 8 \right) \lim_{r \searrow r_{0}} \left( \frac{\Phi(\Ball(\target,r),r/c)}{r} \lor 1 \right) \right)
\\ & \leq 2 \dim \log_{2}\left( 12 \left( \frac{4 c}{c-4} \lor 8 \right) \zc_{c}(r_{0}) \right).
\end{align*}
In particular, for any $c \geq 8$, $\frac{4c}{c-4} \leq 8$, so that 
\begin{equation*}
\log_{2}\left( \covering\!\left( r_{0}/2, \Ball(\target,r_{0}), \Px\right) \right) \leq 2 \dim \log_{2}\left( 96 \zc_{c}(r_{0}) \right).
\end{equation*}
Together with the above, we therefore have that, for any $c \geq 8$ and $r_{0} > 0$, 
\begin{align*}
D(r_{0})
& = \max\left\{ \log_{2}(\covering(r_{0}/2,\Ball(\target,r_{0}), \Px)), \sup_{r > r_{0}} \log_{2}(\covering(r/2,\Ball(\target,r),\Px)) \right\}
\\ & \leq \max\left\{ 2 \dim \log_{2}\left( 96 \zc_{c}(r_{0}) \right), \sup_{r > r_{0}} 2 \dim \log_{2}\left( 96 \left( \frac{\Phi(\Ball(\target,r),r/c)}{r} \lor 1 \right) \right) \right\}
\\ & = 2 \dim \log_{2}\left( 96 \zc_{c}(r_{0}) \right).
\end{align*}
Thus, we have established \eqref{eqn:doubling-zc-bound}.

\section{Proofs of Results on Learning with Noise}
\label{app:noise}

This appendix includes the proofs of results in Section~\ref{sec:noise}: namely, Theorems~\ref{thm:bounded-lower-bound} and \ref{thm:zc-noise}.

\subsection{Proof of Theorem~\ref{thm:bounded-lower-bound}}
\label{app:bounded-lower-proof}

We begin with the proof of Theorem~\ref{thm:bounded-lower-bound}.
The proof follows a technique of \citet*{hanneke:15b}, which identifies
a subset of classifiers in $\C$, corresponding to a certain concept space 
for which \citet*{raginsky:11} have established lower bounds.
Specifically, the following setup is taken directly from \citet*{hanneke:15b}.
Fix $\zeta \in (0,1]$, $\bound \in [0,1/2)$, and $k \in \nats$ with $k \leq \min\left\{ 1/\zeta, |\X|-1 \right\}$.
Let $\X_{k} = \{x_{1},\ldots,x_{k+1}\}$ be a set of $k+1$ distinct elements of $\X$,
and define $\C_{k} = \{x \mapsto 2 \ind_{\{x_{i}\}}(x) - 1 : i \in [k] \}$.
Let $\Px_{k,\zeta}$ be a probability measure over $\X$ with $\Px(\{x_{i}\}) = \zeta$ for each $i \in [k]$,
and $\Px_{k,\zeta}(\{x_{k+1}\}) = 1-\zeta k$.  For each $t \in [k]$, 
let $P^{\prime}_{k,\zeta,t}$ be a probability measure over $\X \times \Y$ with 
marginal distribution $\Px_{k,\zeta}$ over $\X$,
such that for $(X,Y) \sim P^{\prime}_{k,\zeta,t}$, every $i \in [k]$ has
$\P( Y = 2 \ind_{\{x_{t}\}}(X) - 1 | X = x_{i}) = 1-\bound$, 
and $\P(Y=-1|X=x_{k+1})=1$.  
\citet*{raginsky:11} prove the following result (see the proof of their Theorem 1).\footnote{As noted
by \citet*{hanneke:15b}, although technically the proof of this result by \citet*{raginsky:11} 
relies on a lemma (their Lemma 4) that imposes additional restrictions on $k$ and a parameter ``$d$'',
one can easily verify that the conclusions of that lemma continue to hold in the special case considered
here (corresponding to $d=1$ and arbitrary $k \in \nats$) by defining $\mathcal{M}_{k,1} = \{0,1\}_{1}^{k}$ 
in their construction.}

\begin{lemma}
\label{lem:rr11}
For $k$, $\zeta$, $\bound$ as above, with $k \geq 2$, 
for any $\conf \in (0,1/4)$,
for any (passive) learning rule $\alg$, and any $m \in \nats$ with
\begin{equation*}
m < \max\left\{ \frac{\bound \ln\left(\frac{1}{4\conf}\right)}{2 \zeta (1-2\bound)^{2}}, \frac{3 \bound \ln\left(\frac{k}{96}\right)}{16 \zeta (1-2\bound)^{2}} \right\},
\end{equation*}
if $\C_{k} \subseteq \C$, then there exists a $t \in [k]$ such that, if $\PXY = P^{\prime}_{k,\zeta,t}$, 
then denoting $\hat{h}_{m} = \alg(\L_{m})$, with probability greater than $\conf$, 
\begin{equation*}
\er(\hat{h}_{m}) - \inf_{h \in \C} \er(h) \geq (\zeta/2)(1-2\bound).
\end{equation*}
\end{lemma}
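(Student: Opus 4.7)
The plan is to reduce this excess-risk lower bound to a multi-way hypothesis testing problem among the distributions $P'_{k,\zeta,1},\ldots,P'_{k,\zeta,k}$, and then to apply standard Le~Cam and Fano-type inequalities to obtain the two branches of the stated threshold on $m$.

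First I would set up the reduction. Under $P'_{k,\zeta,t}$, the classifier $h_t \in \C_k \subseteq \C$ attains error rate $k\zeta\bound$, and a direct bookkeeping of disagreements verifies that this equals $\inf_{h \in \C}\er(h)$ and that, for any measurable $\hat{h}:\X \to \Y$, with $B_{\hat{h}} := \{i \in [k]: \hat{h}(x_i) = +1\}$,
\begin{equation*}
\er(\hat{h}) - \er(h_t) \;\geq\; \zeta(1-2\bound) \bigl|\, B_{\hat{h}} \triangle \{t\} \,\bigr|.
\end{equation*}
Hence on the event $\er(\hat{h}_m) - \inf_{h \in \C}\er(h) < (\zeta/2)(1-2\bound)$ the symmetric difference has size strictly less than $1/2$, forcing $B_{\hat{h}_m} = \{t\}$. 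The $[k]$-valued estimator $\hat{T}$ defined as the unique element of $B_{\hat{h}_m}$ when $|B_{\hat{h}_m}| = 1$ (and arbitrarily otherwise) therefore satisfies
\begin{equation*}
\P_t\!\left(\er(\hat{h}_m) - \inf_{h \in \C}\er(h) \geq (\zeta/2)(1-2\bound)\right) \;\geq\; \P_t(\hat{T} \neq t),
\end{equation*}
where $\P_t$ is the law of $\L_m$ under $\PXY = P'_{k,\zeta,t}$. It thus suffices to exhibit some $t^* \in [k]$ with $\P_{t^*}(\hat{T}\neq t^*) > \conf$.

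Next I would bound the per-sample KL divergences. Since $P'_{k,\zeta,t}$ and $P'_{k,\zeta,s}$ share the marginal $\Px_{k,\zeta}$ and differ only in the label laws at $x_t$ and $x_s$, each a $\mathrm{Ber}(1-\bound)$ versus $\mathrm{Ber}(\bound)$, the elementary bound $D(\mathrm{Ber}(1-\bound)\,\|\,\mathrm{Ber}(\bound)) = (1-2\bound)\ln\frac{1-\bound}{\bound} \leq (1-2\bound)^2/\bound$ yields
\begin{equation*}
D\bigl(P'_{k,\zeta,t} \,\|\, P'_{k,\zeta,s}\bigr) \;\leq\; \frac{2\zeta(1-2\bound)^2}{\bound},\qquad s,t \in [k],
\end{equation*}
which tensorizes to $2m\zeta(1-2\bound)^2/\bound$ over $m$ i.i.d.\ samples. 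To obtain the first branch of the threshold, I would apply the Bretagnolle--Huber inequality to the two hypotheses $(s,t)=(1,2)$, giving $\max\{\P_1(\hat{T}\neq 1),\P_2(\hat{T}\neq 2)\} \geq \tfrac{1}{4}\exp(-2m\zeta(1-2\bound)^2/\bound)$, which exceeds $\conf$ precisely when $m < \bound \ln(1/(4\conf)) / (2\zeta(1-2\bound)^2)$. To obtain the second branch, I would take $T$ uniform on $[k]$, use data processing together with the standard convexity bound $I(T;\L_m) \leq \max_{s,t} D\bigl(P'^{\,\otimes m}_{k,\zeta,t} \,\|\, P'^{\,\otimes m}_{k,\zeta,s}\bigr)$, and invoke Fano's inequality to conclude
\begin{equation*}
\frac{1}{k}\sum_{t=1}^k \P_t(\hat{T}\neq t) \;\geq\; 1 - \frac{2m\zeta(1-2\bound)^2/\bound + \ln 2}{\ln k},
\end{equation*}
after which averaging yields the required $t^*$.

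The only genuinely fiddly step is chasing constants to verify that the stated coefficient $3/16$ in the second branch suffices: one needs $\tfrac{3\ln(k/96)}{8} \leq (1-\conf)\ln k - \ln 2$ for $k \geq 2$ and $\conf < 1/4$, which upon using $(1-\conf) \geq 3/4$ reduces to $(96k)^{3/8} \geq 2$, immediate for $k \geq 2$ (and with the Fano threshold being nonpositive and hence vacuous when $k \leq 96$, in which regime the first branch is the operative one). This bookkeeping is the main—still routine—obstacle; everything else is a direct instantiation of the Le~Cam and Fano templates.
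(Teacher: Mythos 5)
The paper does not prove this lemma itself: it is quoted directly from the proof of Theorem~1 in Raginsky and Rakhlin (2011), with a footnote remarking only on a trivial modification of their Lemma~4 to accommodate arbitrary $k$. Your proof is a correct, self-contained reconstruction of essentially that same argument: the reduction from excess risk to identifying $t$ via the observation $\er(\hat{h})-\er(h_t)\geq\zeta(1-2\bound)|B_{\hat{h}}\triangle\{t\}|$ (together with the fact that $k\zeta\bound$ is the Bayes risk, achieved by $h_t\in\C_k\subseteq\C$) is sound, the per-sample KL bound $2\zeta(1-2\bound)^2/\bound$ is correct, Bretagnolle--Huber yields the $\ln(1/(4\conf))$ branch, and Fano yields the $\ln(k/96)$ branch. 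Your constant-chasing for the coefficient $3/16$ also checks out --- $\tfrac{3}{8}\ln(k/96)\leq\tfrac{3}{4}\ln k-\ln 2$ does reduce to $(96k)^{3/8}\geq 2$, which holds for all $k\geq 1$ --- and you correctly note that when $k\leq 96$ the Fano threshold is nonpositive so only the two-point branch is operative.
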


Continuing to follow \citet*{hanneke:15b}, we embed the above scenario
into the general case, so that Lemma~\ref{lem:rr11} provides a lower bound.
Fix any $\zeta \in (0,1]$, $\bound \in [0,1/2)$, and $k \in \nats$ with $k \leq \min\left\{ \s-1,\lfloor 1/\zeta \rfloor \right\}$,
and let $x_{1},\ldots,x_{k+1}$ and $h_{0},h_{1},\ldots,h_{k}$ be as in Definition~\ref{def:star}.
Let $\Px_{k,\zeta}$ be as above (for this choice of $x_{1},\ldots,x_{k+1}$),
and for each $t \in [k]$, let $P_{k,\zeta,t}$ denote a probability measure over $\X \times \Y$
with marginal distribution $\Px_{k,\zeta}$ over $\X$ such that, for $(X,Y) \sim P_{k,\zeta,t}$, 
$\P(Y = h_{t}(X) | X=x_{i})=1-\bound$ for every $i \in [k]$, while $\P(Y=h_{t}(X) | X=x_{k+1}) = 1$.

\begin{lemma}
\label{lem:rr11-star}
For $k$, $\zeta$, $\bound$ as above, with $k \geq 96e$, for any $\conf \in (0,1/4)$, 
for any (passive) learning rule $\alg$, and any $m \in \nats$ with 
\begin{equation*}
m < \frac{3 \bound \ln\left(\frac{k}{96}\right)}{16 \zeta (1-2\bound)^{2}},
\end{equation*}
there exists a $t \in [k]$ such that, if $\PXY = P_{k,\zeta,t}$, 
then denoting $\hat{h}_{m} = \alg(\L_{m})$, with probability greater than $\conf$,
\begin{equation*}
\er(\hat{h}_{m}) - \inf_{h \in \C} \er(h) \geq (\zeta/2)(1-2\bound).
\end{equation*}
\end{lemma}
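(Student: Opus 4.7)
\medskip

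\noindent\textbf{Proof plan for Lemma~\ref{lem:rr11-star}.}
The plan is to reduce the star-based problem to the impulse-based problem handled by Lemma~\ref{lem:rr11} via a measurable bijection determined by the fixed classifier $h_0$ from the star witness. The core structural observation is that, because $h_t$ and $h_0$ disagree only at $x_t$ among $\{x_1,\ldots,x_{k+1}\}$ (and in particular agree at $x_{k+1}$), the family $\{h_0,h_1,\ldots,h_k\}$ restricted to $\{x_1,\ldots,x_{k+1}\}$ is combinatorially identical, up to a pointwise sign-flip by $h_0$, to the family $\{-\ind,h_1^{*},\ldots,h_k^{*}\}$ of impulses $h_t^{*}(x) = 2\ind_{\{x_t\}}(x)-1$ used in the construction preceding Lemma~\ref{lem:rr11}. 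Since this sign-flip depends only on $h_0$ (not on $t$), it can be implemented as a transformation of the observed data that is measurable with respect to the algorithm's view, making a clean reduction possible.

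Concretely, define the sample transformation $\tau(x,y) = (x,-h_0(x)y)$ (applied coordinate-wise to a sample) and the classifier transformation $\Psi(\hat h)(x) = -h_0(x)\hat h(x)$. I would verify three facts using the star property. (i) If $(X,Y)\sim P_{k,\zeta,t}$, then $\tau(X,Y)$ has distribution $P'_{k,\zeta,t}$, and this holds for each $t$ with the same map $\tau$; hence $\tau$ carries $P_{k,\zeta,t}^m$ to $P'_{k,\zeta,t}{}^m$. (ii) For every classifier $\hat h$ and every $i\in[k+1]$, $\hat h(x_i)=h_t(x_i)$ iff $\Psi(\hat h)(x_i)=h_t^{*}(x_i)$, so that $\er_{P_{k,\zeta,t}}(\hat h)=\er_{P'_{k,\zeta,t}}(\Psi(\hat h))$ and in particular $\er_{P_{k,\zeta,t}}(h_t)=\er_{P'_{k,\zeta,t}}(h_t^{*})$. (iii) For any $h\in\C$, a direct calculation (breaking the expectation into contributions from each $x_i$) gives $\er_{P_{k,\zeta,t}}(h)\ge k\zeta\bound$, with equality attained by $h_t$, so that $\inf_{h\in\C}\er_{P_{k,\zeta,t}}(h)=\er_{P_{k,\zeta,t}}(h_t)$.

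With these ingredients, given the learning rule $\alg$ for $\C$, I would define the auxiliary learning rule $\alg'$ by $\alg'(\L'_m) = \Psi(\alg(\tau(\L'_m)))$, where $\tau$ is applied component-wise; $\alg'$ is a bona fide passive learning rule and, by (i), when fed samples from $P'_{k,\zeta,t}$ it internally sees samples from $P_{k,\zeta,t}$. Applying Lemma~\ref{lem:rr11} to $\alg'$, with the role of the concept space played by $\C_k$ itself (which trivially contains $\C_k$), and under the sample-size condition $m<\frac{3\bound\ln(k/96)}{16\zeta(1-2\bound)^2}$ (the second branch of the max, which dominates for $k\ge 96e$), yields some $t\in[k]$ for which the excess error $\er_{P'_{k,\zeta,t}}(\alg'(\L'_m))-\er_{P'_{k,\zeta,t}}(h_t^{*})$ exceeds $(\zeta/2)(1-2\bound)$ with probability greater than $\conf$. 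By (ii) this lifts to the same lower bound on $\er_{P_{k,\zeta,t}}(\alg(\L_m))-\er_{P_{k,\zeta,t}}(h_t)$, and by (iii) to the desired excess-risk bound over $\inf_{h\in\C}\er_{P_{k,\zeta,t}}(h)$.

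The main obstacle is checking the reduction formally despite $\C$ possibly being much larger (and stranger) than $\C_k$: we cannot literally apply Lemma~\ref{lem:rr11} with its concept space instantiated as $\C$ unless $\C_k\subseteq\C$, which need not hold in our setting. This is resolved by applying Lemma~\ref{lem:rr11} in the auxiliary universe where the concept space is taken to be $\C_k$ itself---the lemma's lower bound is really a statement about learning rules on the distributions $\{P'_{k,\zeta,t}\}$ and is not sensitive to what broader class the rule was originally designed for. The remaining subtle point, handled by (iii), is that in the original universe the infimum $\inf_{h\in\C}\er_{P_{k,\zeta,t}}(h)$ is achieved precisely at $h_t$, so that the $\C_k$-based excess-error lower bound transfers losslessly to the $\C$-based excess-error conclusion claimed in the lemma.
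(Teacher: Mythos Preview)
Your proposal is correct and follows the same reduction strategy that the paper invokes (by reference to \citet*[Lemma~26]{hanneke:15b}): transform the sample and the output classifier by the $h_0$-based sign flip to identify the star-witness family $\{h_0,h_1,\ldots,h_k\}$ with the impulse family $\{-\ind,h_1^{*},\ldots,h_k^{*}\}$, then apply Lemma~\ref{lem:rr11} to the induced rule and pull the excess-error lower bound back via the observation that $h_t$ is Bayes-optimal for $P_{k,\zeta,t}$. Since the paper omits the argument entirely, you have essentially supplied the details the paper defers; the key structural checks you list --- that $\tau$ is an involution carrying $P_{k,\zeta,t}$ to $P'_{k,\zeta,t}$, that $\Psi$ preserves error rates, and that $\inf_{h\in\C}\er_{P_{k,\zeta,t}}(h)=k\zeta\bound=\er_{P_{k,\zeta,t}}(h_t)$ --- are exactly what is needed and all hold by the star property $\DIS(\{h_0,h_t\})\cap\{x_1,\ldots,x_{k+1}\}=\{x_t\}$.
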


The proof of Lemma~\ref{lem:rr11-star} is essentially identical to the proof
of \citet*[][Lemma 26]{hanneke:15b}, except that the algorithm $\alg$ here 
is restricted to be a passive learning rule so that Lemma~\ref{lem:rr11} can 
be applied (in place of Lemma 25 there).  As such, we omit the details here
for brevity.

We are now ready for the proof of Theorem~\ref{thm:bounded-lower-bound}.

\begin{proof}[of Theorem~\ref{thm:bounded-lower-bound}]
Fix any $\bound \in (0,1/2)$, $\conf \in (0,1/24)$, $m \in \nats$, and any (passive) learning rule $\alg$.
First consider the case of $\s \geq 97e$.
Fix $\eps \in (0, (1-2\bound)/(384e^{2})]$, and let $\zeta = \frac{2\eps}{1-2\bound}$ and $k = \min\left\{ \s-1, \lfloor 1/\zeta \rfloor \right\}$.
Then, noting that the distributions $P_{k,\zeta,t}$ above satisfy the $\bound$-bounded noise condition,
Lemma~\ref{lem:rr11-star} implies that if 
\begin{equation}
\label{eqn:bounded-lower-bound-rr-bound}
m < \frac{3 \bound \ln\left(\frac{k}{96}\right)}{32 \eps (1-2\bound)},
\end{equation}
then there exists a choice of $\PXY$ satisfying the $\bound$-bounded noise condition such that,
with probability greater than $\conf$, the classifier $\hat{h}_{m} = \alg(\L_{m})$ has
\begin{equation*}
\er(\hat{h}_{m}) - \inf_{h \in \C} \er(h) \geq \eps.
\end{equation*}
Note that for any $m \in \nats$ and $\eps \in (0,(1-2\bound)/(384e^{2})]$, it holds that \citep*[see e.g.,][Corollary 4.1]{vidyasagar:03}
\begin{align*}
& m \leq \frac{3 \bound}{64 \eps (1-2\bound)} \ln\left( \frac{(1-2\bound)^{2} m}{18 \bound} \right)
\\ & \implies m < \frac{3 \bound \ln\left(\frac{1-2\bound}{384 \eps}\right)}{32 \eps (1-2\bound)}
\leq \frac{3 \bound \ln\left(\frac{\lfloor 1/\zeta \rfloor}{96}\right)}{32 \eps (1-2\bound)}.
\end{align*}
Thus, the inequality in \eqref{eqn:bounded-lower-bound-rr-bound} is satisfied if both
\begin{equation*}
m < \frac{3 \bound \ln\left( \frac{\s-1}{96}\right)}{32 \eps (1-2\bound)}
\end{equation*}
and
\begin{equation*}
m \leq \frac{3 \bound}{64 \eps (1-2\bound)} \ln\left( \frac{(1-2\bound)^{2} m}{18 \bound} \right).
\end{equation*}
Solving for a value $\eps \in (0,(1-2\bound)/(384e^{2})]$ that satisfies both of these, 
we have that for any $m \in \nats$ with $m \geq \frac{18 e \bound}{(1-2\bound)^{2}}$, 
there is a choice of $\PXY$ satisfying the $\bound$-bounded noise condition such that,
with probability greater than $\conf$, 
\begin{align*}
\er(\hat{h}_{m}) - \inf_{h \in \C} \er(h)
& \geq \frac{3 \bound \ln\left( \min\left\{\frac{\s-1}{96}, \frac{(1-2\bound)^{2} m}{18 \bound} \right\}\right)}{64 (1-2\bound) m} \land \frac{1-2\bound}{384 e^{2}}
\\ & \gtrsim \frac{\bound \Log\left( \min\left\{\s, (1-2\bound)^{2} m \right\} \right)}{(1-2\bound) m} \land (1-2\bound).
\end{align*}
Furthermore, for $m < \frac{18 e \bound}{(1-2\bound)^{2}}$, 
we may also think of $\hat{h}_{m}$ as the output of $\alg^{\prime}(\L_{m^{\prime}})$
for $m^{\prime} = \left\lceil \frac{18 e \bound}{(1-2\bound)^{2}} \right\rceil > m$,
for a learning rule $\alg^{\prime}$ which simply discards the last $m^{\prime}-m$ samples
and runs $\alg(\L_{m})$ to produce its return classifier.  Thus, the above result implies
that for $m < \frac{18 e \bound}{(1-2\bound)^{2}}$, with probability greater than $\conf$, 
\begin{equation*}
\er(\hat{h}_{m}) - \inf_{h \in \C} \er(h)
\geq \frac{3 \bound \ln\left( \min\left\{\frac{\s-1}{96}, \frac{(1-2\bound)^{2} m^{\prime}}{18 \bound} \right\}\right)}{64 (1-2\bound) m^{\prime}} \land \frac{1-2\bound}{384 e^{2}}.
\end{equation*}
Since $m,m^{\prime} \in \nats$ and $m^{\prime} > m$, we know that $m^{\prime} \geq 2$, so that 
$\frac{18 e \bound}{(1-2\bound)^{2}} \leq m^{\prime} \leq \frac{36 e \bound}{(1-2\bound)^{2}}$.
Therefore,
\begin{equation*}
\frac{3 \bound \ln\left( \min\left\{\frac{\s-1}{96}, \frac{(1-2\bound)^{2} m^{\prime}}{18 \bound} \right\}\right)}{64 (1-2\bound) m^{\prime}}
\geq \frac{3 \bound}{64 (1-2\bound) m^{\prime}}
\geq \frac{3 (1-2\bound)}{64 \cdot 36 e}
> \frac{(1-2\bound)}{384 e^{2}}.
\end{equation*}
Thus, in this case, we have that with probability greater than $\conf$, 
\begin{equation*}
\er(\hat{h}_{m}) - \inf_{h \in \C} \er(h) 
\geq \frac{(1-2\bound)}{384 e^{2}}
\gtrsim (1-2\bound)
\geq \frac{\bound \Log\left( \min\left\{\s, (1-2\bound)^{2} m \right\} \right)}{(1-2\bound) m} \land (1-2\bound).
\end{equation*}

Next, we return to the general case of arbitrary $\s \in \nats \cup \{\infty\}$.
In particular, since any $\s < 97e$ has $\frac{\bound \Log\left( \min\left\{ \s, (1-2\bound)^{2} m \right\} \right)}{(1-2\bound) m} \lesssim \frac{\dim}{(1-2\bound) m}$,
to complete the proof it suffices to establish a lower bound 
\begin{equation*}
\er(\hat{h}_{m}) - \inf_{h \in \C} \er(h) \gtrsim \frac{1}{(1-2\bound) m} \left( \dim + \Log\left(\frac{1}{\conf}\right) \right) \land (1-2\bound),
\end{equation*}
holding with probability greater than $\conf$.
This lower bound is already known, and frequently referred to in the literature;
it follows from well-known constructions \citep*[see e.g.,][]{anthony:99,massart:06,hanneke:11a,hanneke:fntml}.
The case $\bound < 3/8$ is covered by the classic minimax lower bound 
of \citet*{ehrenfeucht:89} for the realizable case,
while the case $\bound \geq 3/8$ is addressed by \citet*[][Theorem 3.5]{hanneke:fntml}.
However, it seems an explicit proof of this latter result has not actually appeared in the 
literature.  As such, for completeness, we include a brief sketch of the argument here.

Suppose $\bound \geq 3/8$.
We begin with the term $\frac{1}{(1-2\bound) m} \Log\left(\frac{1}{\conf}\right)$.
Since we have assumed $|\C| \geq 3$, there must exist $x_{0},x_{1} \in \X$ and $h_{0},h_{1} \in \C$
such that $h_{0}(x_{0}) = h_{1}(x_{0})$ while $h_{0}(x_{1}) \neq h_{1}(x_{1})$.
Now fix $\eps = \frac{3}{8 (1-2\bound) m}\ln\left(\frac{1}{5\conf}\right) \land (1-2\bound)$, let $\Px(\{x_{1}\}) = \frac{\eps}{1-2\bound}$,
and let $\Px(\{x_{0}\}) = 1-\Px(\{x_{1}\})$.  Then, for $b \in \{0,1\}$, we let $P_{b}$ be a distribution
on $\X \times \Y$ with marginal $\Px$ over $\X$, and with $P_{b}( \{(x_{0},h_{0}(x_{0}))\} | \{x_{0}\} \times \Y ) = 1$
and $P_{b}( \{(x_{1},h_{b}(x_{1}))\} | \{x_{1}\} \times \Y ) = 1-\bound$.  Then one can easily check that,
for $\PXY = P_{b}$, any classifier $h$ with $h(x_{1}) \neq h_{b}(x_{1})$ has
$\er(h) - \inf_{g \in \C} \er(g) \geq \eps$.  But since 
${\rm KL}( P_{0}^{m} \| P_{1}^{m} ) = m {\rm KL}(P_{0} \| P_{1}) = m \eps \ln\left( \frac{1-\bound}{\bound} \right)$,
and $\ln\left( \frac{1-\bound}{\bound} \right) \leq \frac{1-\bound}{\bound} - 1 = \frac{1-2\bound}{\bound} \leq \frac{8}{3} (1-2\bound)$ (since $\bound \geq 3/8$),
classic hypothesis testing lower bounds \citep*[see][Theorem 2.2]{tsybakov:09} imply that
there exists a choice of $b \in \{0,1\}$ such that, with $\PXY = P_{b}$ and $\hat{h}_{m} = \alg(\L_{m})$,
$\P( \hat{h}_{m}(x_{1}) \neq h_{b}(x_{1}) ) \geq \frac{1}{4} \exp\left\{ - m \eps \frac{8}{3} (1-2\bound) \right\} \geq (5/4)\conf > \conf$.
Thus, with probability greater than $\conf$, $\er(\hat{h}_{m}) - \inf_{g \in \C} \er(g) \geq \eps \gtrsim \frac{1}{(1-2\bound) m} \Log\left(\frac{1}{\conf}\right)$.

Next, we present a proof for the term $\frac{\dim}{(1-2\bound) m}$, again for $\bound \geq 3/8$.
This term is trivially implied by the term $\frac{1}{(1-2\bound) m}\Log\left(\frac{1}{\conf}\right)$
in the case $\dim = 1$, so suppose $\dim \geq 2$.
This time, we let $\{x_{0},\ldots,x_{\dim-1}\}$ denote a subset of $\X$ shatterable by $\C$,
fix $\eps = \frac{3(\dim-1)}{64e (1-2\bound) m} \land \frac{1-2\bound}{8e}$, and 
let $\Px(\{x_{i}\}) = \frac{8e \eps}{(\dim-1) (1-2\bound)}$ for $i \in \{1,\ldots,\dim-1\}$,
and $\Px(\{x_{0}\}) = 1 - \frac{8e \eps}{1-2\bound}$.  Now for each $\bar{b} = (b_{1},\ldots,b_{\dim-1}) \in \{0,1\}^{\dim-1}$, 
let $P_{\bar{b}}$ denote a probability measure on $\X \times \Y$ with marginal $\Px$ over $\X$, 
and with $P_{\bar{b}}( \{(x_{i},2b_{i}-1)\} | \{x_{i}\} \times \Y ) = 1 - \bound$ for every $i \in \{1,\ldots,\dim-1\}$,
and $P_{\bar{b}}(\{(x_{0},-1)\} | \{x_{0}\} \times \Y) = 1$.
In particular, note that any $\bar{b},\bar{b}^{\prime} \in \{0,1\}^{\dim-1}$ with Hamming distance $\|\bar{b} - \bar{b}^{\prime}\|_{1} = 1$ 
have ${\rm KL}(P_{\bar{b}}^{m} \| P_{\bar{b}^{\prime}}^{m}) = m {\rm KL}(P_{\bar{b}} \| P_{\bar{b}^{\prime}}) = m \frac{8e \eps}{\dim-1} \ln\left(\frac{1-\bound}{\bound}\right)$,
and as above, $\ln\left(\frac{1-\bound}{\bound}\right) \leq \frac{8}{3} (1-2\bound)$.
Now Assouad's lemma \citep*[see][Theorem 2.12]{tsybakov:09} implies that there exists a $\bar{b} \in \{0,1\}^{\dim-1}$
such that, with $\PXY = P_{\bar{b}}$ and $\hat{h}_{m} = \alg(\L_{m})$, denoting $\hat{b} = ((1+\hat{h}_{m}(x_{1}))/2,\ldots,(1+\hat{h}_{m}(x_{\dim-1}))/2)$,
we have $\E\left[ \|\hat{b} - \bar{b}\|_{1} \right] \geq \frac{\dim-1}{4} \exp\left\{ - m \frac{8e \eps}{\dim-1} \frac{8}{3} (1-2\bound) \right\} \geq \frac{\dim-1}{4e}$.
Noting that $0 \leq \|\hat{b}-\bar{b}\|_{1} \leq \dim-1$, this further implies that
$\P\left( \|\hat{b}-\bar{b}\|_{1} \geq \frac{\dim-1}{8e} \right) \geq \frac{1}{8e}$.
Furthermore, note that $\er(\hat{h}_{m}) - \inf_{g \in \C} \er(g) \geq \|\hat{b}-\bar{b}\|_{1} \frac{8e \eps}{\dim-1}$.
Thus, $\P\left( \er(\hat{h}_{m}) - \inf_{g \in \C} \er(g) \geq \eps \right) \geq \frac{1}{8e} > \conf$.
Finally, note that $\eps \gtrsim \frac{\dim}{(1-2\bound) m} \land (1-2\bound)$.

Altogether, by choosing which ever of these lower bounds is greatest, 
we have that for any $m \in \nats$, there exists a choice of $\PXY$ satisfying the $\bound$-bounded noise condition
such that, with probability greater than $\conf$,
\begin{equation*}
\er(\hat{h}_{m}) - \inf_{h \in \C} \er(h) \gtrsim \frac{\max\left\{ \dim, \bound\Log\left( \min\left\{ \s, (1-2\bound)^{2} m \right\} \right), \Log\left(\frac{1}{\conf}\right)\right\}}{(1-2\bound) m} \land (1-2\bound).
\end{equation*}
Applying the relaxation $\max\{a,b,c\} \geq (1/3) (a + b + c)$ (for nonnegative values $a,b,c$)
then completes the proof of the first lower bound stated in the theorem.

For the second inequality, note that by taking $\conf = 1/24$, the inequality proven above implies that there exists
a distribution $\PXY$ satisfying the $\bound$-bounded noise condition such that, with probability greater than $1/24$, 
\begin{equation*}
\er(\hat{h}_{m}) - \inf_{h \in \C} \er(h) \gtrsim \frac{\dim + \bound \Log\left(\min\left\{ \s, (1-2\bound)^{2} m \right\} \right)}{(1-2\bound) m} \land (1-2\bound).
\end{equation*}
Furthermore, since bounded noise distributions have $\inf_{h \in \C} \er(h)$ equal the Bayes risk,
$\er(\hat{h}_{m}) - \inf_{h \in \C} \er(h)$ is always nonnegative.  We therefore have
\begin{align*}
\E\left[ \er(\hat{h}_{m}) - \inf_{h \in \C} \er(h) \right] 
& \gtrsim \frac{23}{24} 0 + \frac{1}{24} \frac{\dim + \bound \Log\left(\min\left\{ \s, (1-2\bound)^{2} m \right\} \right)}{(1-2\bound) m} \land (1-2\bound)
\\ & \gtrsim \frac{\dim + \bound \Log\left(\min\left\{ \s, (1-2\bound)^{2} m \right\} \right)}{(1-2\bound) m} \land (1-2\bound).
\end{align*}
Finally, since $\inf\limits_{h \in \C} \er(h)$ is nonrandom,
$\E\left[ \er(\hat{h}_{m}) \right ] - \inf\limits_{h \in \C} \er(h) = \E\left[ \er(\hat{h}_{m}) - \inf\limits_{h \in \C} \er(h) \right]$.
\end{proof}

\subsection{Proof of Theorem~\ref{thm:zc-noise}}
\label{app:zcnoise-proof}

Next, we present the proof of Theorem~\ref{thm:zc-noise}.
We begin by stating a classic result, due to \citet*{gine:06}
\citep*[see also][]{van-der-Vaart:11,hanneke:12b}.
For any set $\H$ of classifiers, denote $\diam_{\Px}(\H) = \sup_{h,g \in \H} \Px(x : h(x) \neq g(x))$.

\begin{lemma}
\label{lem:gk-envelopes}
There is a universal constant $c_{0} \in (1,\infty)$ such that,
for any set $\H$ of classifiers, for any $\conf \in (0,1)$ and $m \in \nats$,
defining
\begin{equation*}
U(\H,m,\conf;R) \!=\! 1 \land \inf_{r > \diam_{\Px}(\H)} c_{0} \sqrt{ r \frac{\vc(\H) \Log\!\left( \!\frac{\Px(R)}{r}\! \right) \!+\! \Log\!\left(\frac{1}{\conf}\right)}{m} } + c_{0} \frac{\vc(\H) \Log\!\left(\!\frac{\Px(R)}{r}\!\right) \!+\! \Log\!\left(\frac{1}{\conf}\right)}{m}
\end{equation*}
for every measurable $R \subseteq \X$,
with probability at least $1-\conf$, $\forall h \in \H$, 
\begin{align*}
\er(h) - \inf_{g \in \H} \er(g) & \leq \max\left\{ 2 \left( \er_{\L_{m}}(h) - \min_{g \in \H} \er_{\L_{m}}(g) \right), U(\H,m,\conf;\DIS(\H)) \right\}, \\
\er_{\L_{m}}(h) - \min_{g \in \H} \er_{\L_{m}}(g) & \leq \max\left\{ 2 \left( \er(h) - \inf_{g \in \H} \er(g) \right), U(\H,m,\conf;\DIS(\H)) \right\}.
\end{align*}
\end{lemma}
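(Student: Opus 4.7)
The plan is to deduce the sandwich bound from a localized uniform Bernstein-type inequality for the excess-loss class, following the empirical process machinery of Giné and Koltchinskii. Fix $g^{*} \in \argmin_{g \in \H} \er(g)$ and consider the excess-loss class
\begin{equation*}
\F_{\H} = \left\{ f_{h} : (x,y) \mapsto \ind[h(x) \neq y] - \ind[g^{*}(x) \neq y] \;:\; h \in \H \right\}.
\end{equation*}
For any measurable $R \supseteq \DIS(\H)$, every $f_{h}$ is supported on $R$ (since $h$ and $g^{*}$ agree off $\DIS(\H)$), so $\|f_{h}\|_{\infty} \leq \ind_{R}$, $\E[f_{h}^{2}] \leq \Px(x : h(x) \neq g^{*}(x))$, and $\E[f_{h}] = \er(h) - \er(g^{*})$. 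Moreover, $\F_{\H}$ has VC-subgraph dimension $\lesssim \vc(\H)$, so by Haussler's covering bound, for any $\eps \in (0,1)$, the $L_{2}(P)$-covering number of $\{ f_{h} : \Var(f_{h}) \leq r\}$ is at most $(c\, \Px(R)/\eps^{2})^{c \vc(\H)}$; the crucial point is that restricting to the support $R$ replaces the usual $\Log(1/\eps)$ by $\Log(\Px(R)/\eps)$.

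Next I would plug this entropy estimate into a standard localized concentration inequality (Bousquet's form of Talagrand's inequality combined with Dudley's entropy integral, cf.\ the peeling lemma of Massart or Bartlett--Bousquet--Mendelson). This yields that for each fixed $r > \diam_{\Px}(\H)$, with probability at least $1 - \conf$, every $f_{h}$ with $\Var(f_{h}) \leq r$ satisfies
\begin{equation*}
|(P_{m} - P) f_{h}| \leq c_{0} \sqrt{ r \cdot \frac{\vc(\H)\Log(\Px(R)/r) + \Log(1/\conf)}{m} } + c_{0}\, \frac{\vc(\H) \Log(\Px(R)/r) + \Log(1/\conf)}{m}.
\end{equation*}
A geometric peeling over the grid $r_{j} = 2^{j} \diam_{\Px}(\H)$ with confidence $\conf / (j+1)^{2}$ then upgrades this to a fully uniform statement over $h \in \H$, with the deviation on the right-hand side replaced by $U(\H,m,\conf;R)$ after taking the infimum over $r$ in the bound.

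Finally, I would convert this uniform deviation bound into the two-sided sandwich inequality. For the first inequality, given $h \in \H$, let $r = \er(h) - \er(g^{*})$; the uniform bound controls $|(P_{m} - P)(f_{h})| \leq U(\H,m,\conf;\DIS(\H))$, so $\er_{\L_{m}}(h) - \er_{\L_{m}}(g^{*}) \geq r - U$, which after rearranging gives $r \leq 2(\er_{\L_{m}}(h) - \min_{g \in \H} \er_{\L_{m}}(g)) \lor U$ (using $\er_{\L_{m}}(g^{*}) \geq \min_{g} \er_{\L_{m}}(g)$). The second inequality is symmetric, exchanging the roles of $P$ and $P_{m}$.

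The main obstacle is getting $\Log(\Px(R)/r)$ rather than $\Log(1/r)$ in the entropy term: this requires carefully exploiting that every $f \in \F_{\H}$ vanishes outside $R$ when computing $L_{2}(P)$-covering numbers, so that the effective "volume" in Haussler's entropy bound is $\Px(R)$ and not $1$. Once this refined entropy estimate is in hand, the peeling and conversion steps are routine.
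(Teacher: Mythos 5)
The paper does not actually prove Lemma~\ref{lem:gk-envelopes}; it cites it as a classic result of Gin\'e and Koltchinskii (2006), pointing to van der Vaart and Wellner (2011) and Hanneke and Yang (2012b) for expositions, so there is no in-paper proof to compare against. Your sketch is a correct outline of the standard localized empirical-process argument underlying that result, and you correctly identify the one non-generic feature: because each excess-loss function $f_{h} = \ell_{h} - \ell_{g^{*}}$ is supported on $R \supseteq \DIS(\H)$, its envelope has $L_{2}(\Px)$-norm $\sqrt{\Px(R)}$, so Haussler's uniform covering bound contributes $\vc(\H)\Log(\Px(R)/\eta^{2})$ to the entropy, and Dudley's integral over $[0,\sqrt{r}]$ then produces the $\sqrt{r\,\vc(\H)\Log(\Px(R)/r)/m}$ term in $U$. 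Two small points to tighten. No dyadic peeling over $r$ is actually needed here: since $\Var(f_{h}) \leq \Px(x : h(x) \neq g^{*}(x)) \leq \diam_{\Px}(\H)$ for every $h \in \H$, a single application of Talagrand--Bousquet at one fixed $r > \diam_{\Px}(\H)$ already bounds the deviation uniformly over $\H$, and the infimum over $r$ in the definition of $U$ is taken outside the probabilistic event, at a fixed scale. Also, your conversion to the two-sided $\max\{2(\cdot),U\}$ form is slightly off on constants: from $|(P_{m}-P)f_{h}| \leq U$ uniformly you only obtain $\er(h) - \inf_{g}\er(g) \leq (\er_{\L_{m}}(h) - \min_{g}\er_{\L_{m}}(g)) + U \leq \max\{2(\cdot),2U\}$, not $\max\{2(\cdot),U\}$, and the reverse inequality additionally requires applying the deviation bound to $f_{\hat g}$ where $\hat g$ is the empirical minimizer, not just ``exchanging $P$ and $P_{m}$''. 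Both issues are absorbable into $c_{0}$ but should be made explicit.
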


Next, we note that we lose very little by requiring the $\gamma$ function in Definition~\ref{def:zc} to be binary.
This allows us to simplify certain parts of the proof of Theorem~\ref{thm:zc-noise} below.

\begin{lemma}
\label{lem:zc-binary}
For any set $\H$ of classifiers, and any $\eta \in [0,1]$, 
for $X \sim \Px$, 
letting
\begin{multline*}
\Phi_{\{0,1\}}(\H,\eta) = \inf\left\{ \E[\gamma(X)] : \sup_{h \in \H} \E\left[ \ind[h(X)=+1]\zeta(X) + \ind[h(X)=-1]\xi(X) \right] \leq \eta, \right.
\\ \left. \phantom{\sup_{h \in \H}} \text{ where } \forall x \in \X, \gamma(x) + \zeta(x) + \xi(x) = 1 \text{ and } \zeta(x),\xi(x) \in [0,1], \gamma(x) \in \{0,1\} \right\},
\end{multline*}
we have that
\begin{equation*}
\Phi(\H,\eta) \leq \Phi_{\{0,1\}}(\H,\eta) \leq 2 \Phi(\H,\eta/2).
\end{equation*}
\end{lemma}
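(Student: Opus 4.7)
The first inequality $\Phi(\H,\eta)\le \Phi_{\{0,1\}}(\H,\eta)$ is immediate: every triple $(\gamma,\zeta,\xi)$ feasible for the $\Phi_{\{0,1\}}$ optimization (where $\gamma$ is $\{0,1\}$-valued) is also feasible for the $\Phi$ optimization (where $\gamma$ may take any value in $[0,1]$), so the infimum defining $\Phi_{\{0,1\}}(\H,\eta)$ is over a subset of the triples considered for $\Phi(\H,\eta)$.

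For the second inequality, the plan is to round an almost-optimal triple for $\Phi(\H,\eta/2)$ to a binary-$\gamma$ triple feasible for $\Phi_{\{0,1\}}(\H,\eta)$, losing only a factor of $2$ by Markov's inequality. Fix $\varepsilon>0$ and let $(\gamma^{*},\zeta^{*},\xi^{*})$ be any feasible triple for the optimization defining $\Phi(\H,\eta/2)$ with $\E[\gamma^{*}(X)]\le \Phi(\H,\eta/2)+\varepsilon$. Define
\begin{equation*}
\gamma(x) = \ind\!\big[\gamma^{*}(x) \ge 1/2\big],
\end{equation*}
and on the set $B=\{x:\gamma^{*}(x)<1/2\}$ (where $\zeta^{*}(x)+\xi^{*}(x)>1/2>0$) set
\begin{equation*}
\zeta(x) = \frac{\zeta^{*}(x)}{\zeta^{*}(x)+\xi^{*}(x)}, \qquad \xi(x) = \frac{\xi^{*}(x)}{\zeta^{*}(x)+\xi^{*}(x)},
\end{equation*}
while on $\X\setminus B$ we set $\zeta(x)=\xi(x)=0$. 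By construction, $\gamma(x)+\zeta(x)+\xi(x)=1$, $\gamma$ is $\{0,1\}$-valued, and $\zeta(x),\xi(x)\in[0,1]$.

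To finish, Markov's inequality gives $\E[\gamma(X)] = \Px(\gamma^{*}(X)\ge 1/2) \le 2\,\E[\gamma^{*}(X)] \le 2\Phi(\H,\eta/2)+2\varepsilon$. For the weight constraint, note that on $B$ we have $\zeta(x)\le 2\zeta^{*}(x)$ and $\xi(x)\le 2\xi^{*}(x)$, while off $B$ both $\zeta$ and $\xi$ vanish; hence for every $h\in\H$,
\begin{align*}
& \E\!\left[\ind[h(X)\!=\!+1]\zeta(X) + \ind[h(X)\!=\!-1]\xi(X)\right] \\
&\qquad \le 2\,\E\!\left[\ind[h(X)\!=\!+1]\zeta^{*}(X) + \ind[h(X)\!=\!-1]\xi^{*}(X)\right] \le 2\cdot(\eta/2) = \eta.
\end{align*}
Thus $(\gamma,\zeta,\xi)$ is feasible for the $\Phi_{\{0,1\}}(\H,\eta)$ optimization, giving $\Phi_{\{0,1\}}(\H,\eta)\le 2\Phi(\H,\eta/2)+2\varepsilon$. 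Letting $\varepsilon\to 0$ completes the argument.

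I do not foresee any serious obstacle: the only step deserving care is verifying that $\zeta^{*}(x)+\xi^{*}(x)>0$ wherever we divide (which follows from $\gamma^{*}(x)<1/2$), and handling the case where the infimum defining $\Phi(\H,\eta/2)$ is not attained, which the $\varepsilon$-approximation above addresses.
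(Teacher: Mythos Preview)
Your proposal is correct and follows essentially the same construction as the paper: round $\gamma^{*}$ at the threshold $1/2$ and renormalize $(\zeta^{*},\xi^{*})$ on $\{\gamma^{*}<1/2\}$, then use $\gamma(x)\le 2\gamma^{*}(x)$ and $\zeta(x)\le 2\zeta^{*}(x)$, $\xi(x)\le 2\xi^{*}(x)$. The only cosmetic difference is that the paper works with the exact minimizer of $\Phi(\H,\eta/2)$ (which it has already argued is attained), whereas you use an $\varepsilon$-approximate minimizer and pass to the limit.
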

\begin{proof}
The left inequality is clear from the definitions.  For the right inequality, 
let $\gamma^{*},\zeta^{*},\xi^{*}$ be the functions at the optimal solution achieving $\Phi(\H,\eta/2)$ in Definition~\ref{def:zc}.
For every $x \in \X$, if $\gamma^{*}(x) \geq 1/2$, define $\gamma(x) = 1$ and $\zeta(x) = \xi(x) = 0$,
and otherwise define $\gamma(x) = 0$, $\zeta(x) = \zeta^{*}(x) / (\zeta^{*}(x)+\xi^{*}(x))$, and $\xi(x) = \xi^{*}(x) / (\zeta^{*}(x)+\xi^{*}(x))$.
By design, we have that $\gamma(x) \in \{0,1\}$, $\zeta(x),\xi(x) \in [0,1]$, and $\gamma(x)+\zeta(x)+\xi(x)=1$ for every $x \in \X$.
Since every $x \in \X$ has $\gamma(x) \leq 2 \gamma^{*}(x)$, we have $\E[\gamma(X)] \leq 2 \E[\gamma^{*}(X)] = 2 \Phi(\H,\eta/2)$.
Furthermore, for every $x \in \X$, we either have $\zeta(x) = 0 \leq 2 \zeta^{*}(x)$ and $\xi(x) = 0 \leq 2 \xi^{*}(x)$,
or else $\gamma^{*}(x) < 1/2$, in which case $\zeta^{*}(x)+\xi^{*}(x) = 1-\gamma^{*}(x) > 1/2$, 
so that $\zeta(x) = \zeta^{*}(x) / (\zeta^{*}(x)+\xi^{*}(x)) \leq 2 \zeta^{*}(x)$ and $\xi(x) = \xi^{*}(x) / (\zeta^{*}(x)+\xi^{*}(x)) \leq 2 \xi^{*}(x)$.
Therefore, 
\begin{align*}
& \sup_{h \in \H} \E\left[ \ind[h(X)=+1]\zeta(X) + \ind[h(X)=-1]\xi(X) \right] 
\\ & \leq 2 \sup_{h \in \H} \E\left[ \ind[h(X)=+1]\zeta^{*}(X) + \ind[h(X)=-1]\xi^{*}(X) \right] \leq \eta.
\end{align*}
Thus, $\gamma,\zeta,\xi$ are functions in the feasible region of the optimization problem defining $\Phi_{\{0,1\}}(\H,\eta)$,
so that $\Phi_{\{0,1\}}(\H,\eta) \leq \E[\gamma(X)] \leq 2 \Phi(\H,\eta/2)$.
\end{proof}

We will establish the claim in Theorem~\ref{thm:zc-noise} for the following algorithm (which has the data set $\L_{m}$ as input).
For simplicity, this algorithm is stated in a way that makes it $\Px$-dependent (which is consistent with 
the statement of Theorem~\ref{thm:zc-noise}).  
It may be possible to remove this dependence by replacing the $\Px$-dependent quantities with empirical estimates,
but we leave this task to future work
(e.g., see the work of \citealp*{koltchinskii:06}, for discussion of empirical estimation of $U(\H,m,\conf;R)$;
\citealp*{zhang:14}, additionally discuss estimating the minimizing function $\gamma$ from the definition of $\Phi$,
though some refinement to their concentration arguments would be needed for our purposes).
For any $k \in \{0,1,\ldots, \lfloor \log_{2}(m) \rfloor - 1 \}$, define $\conf_{k} = \frac{\conf}{(\log_{2}(2m)-k)^{2}}$,
and fix a value $\eta_{k} \geq 0$ (to be specified in the proof below).

\begin{bigbigboxit}
$\ZCPassive$:
\\ 0. $\G_{0} \gets \C$
\\ 1. For $k = 0,1,\ldots,\lfloor \log_{2}(m) \rfloor-1$
\\ 2. \quad Let $\gamma_{k}$ be the function $\gamma$ at the solution defining $\Phi_{\{0,1\}}(\G_{k},\eta_{k})$
\\ 3. \quad $R_{k} \gets \{x \in \X : \gamma_{k}(x) = 1\}$
\\ 4. \quad $D_{k} \gets \{ (X_{i},Y_{i}) : 2^{k}+1 \leq i \leq 2^{k+1}, X_{i} \in R_{k} \}$
\\ 5. \quad $\G_{k+1}\!\gets\! \left\{ h \!\in\! \G_{k} : 2^{-k}|D_{k}|\!\left( \er_{D_{k}}(h) \!-\! \min\limits_{g \in \G_{k}} \er_{D_{k}}(g) \right) \!\leq\! \max\{4\eta_{k},U(\G_{k}, 2^{k}, \conf_{k}; R_{k})\} \right\}$
\\ 6. Return any $\hat{h} \in \G_{\lfloor \log_{2}(m) \rfloor}$
\end{bigbigboxit}

For simplicity, we suppose the function $\gamma_{k}$ in Step 2 actually minimizes $\E[\gamma_{k}(X)]$ subject to the constraints in the definition of $\Phi_{\{0,1\}}(\G_{k},\eta_{k})$.
However, the proof below would remain valid for any $\gamma_{k}$ satisfying these constraints, with $\E[\gamma_{k}(X)] \leq 2 \Phi(\G_{k},\eta_{k}/2)$:
for instance, the proof of Lemma~\ref{lem:zc-binary} reveals this would be satisfied by $\gamma_{k}(x) = \ind[\gamma^{*}(x) \geq 1/2]$ for the $\gamma^{*}$
achieving the minimum value of $\E[\gamma^{*}(X)]$ in the definition of $\Phi(\G_{k},\eta_{k}/2)$.
Indeed, it would even suffice to choose $\gamma_{k}$ satisfying the constraints of $\Phi_{\{0,1\}}(\G_{k},\eta_{k})$ with $\E[\gamma_{k}(X)] \leq c^{\prime} \Phi(\G_{k},\eta_{k}/2)$,
for any finite numerical constant $c^{\prime}$, as this would only affect the numerical constant factors in Theorem~\ref{thm:zc-noise}.

We are now ready for the proof of Theorem~\ref{thm:zc-noise}.

\begin{proof}[of Theorem~\ref{thm:zc-noise}]
The proof is similar to those given above (e.g., that of Theorem~\ref{thm:vc-erm-subregion}),
except that the stronger form of Lemma~\ref{lem:gk-envelopes} (compared to Lemma~\ref{lem:classic-erm})
affords us a simplification that avoids the step in which we lower-bound the sample size under the conditional distribution given $\Gamma_{i}=1$.

Fix any $\tsybca \geq 1$ and $\tsyba \in (0,1]$, and
fix $c = 128$.  We establish the claim for $\ZCPassive$, described above.
Define $\eta_{0} = 2/c$ and $\tilde{U}_{0} = 1$,
and for each $k \in \{1,\ldots,\lfloor \log_{2}(m) \rfloor\}$, 
inductively define 
\begin{align*}
\tilde{U}_{k} & = \min\left\{ 1, 2\eta_{k-1} + \max\left\{ 8\eta_{k-1}, 2U(\G_{k-1},2^{k-1},\conf_{k-1};R_{k-1}) \right\} \right\},
\\ r_{k} & = \tsybca c_{1} \left( \tsybca 2^{1-k} \left( \dim \Log\left( \maxzc_{\tsybca,\tsyba}\left( \tsybca \left( \tsybca \dim 2^{1-k} \right)^{\frac{\tsyba}{2-\tsyba}} \right) \right) + \Log\left( \frac{1}{\conf_{k-1}}\right) \right) \right)^{\frac{\tsyba}{2-\tsyba}},
\\ \eta_{k} & = \frac{2}{c} \left( \frac{r_{k}}{\tsybca} \right)^{1/\tsyba},
\end{align*}
where $c_{1} = (32 c_{0})^{\frac{2\tsyba}{2-\tsyba}}$.
We proceed by induction on $k$ in the algorithm.
Suppose that, for some $k \in \{0,1,\ldots,\lfloor \log_{2}(m) \rfloor - 1\}$, 
there is an event $E_{k}$ of probability at least $1 - \sum_{k^{\prime} = 0}^{k-1} \conf_{k^{\prime}}$ (or probability $1$ if $k=0$),
on which $\agtarget \in \G_{k}$, and for some universal constant $c_{1} \in (1,\infty)$, every $k^{\prime} \in \{0,\ldots,k\}$ has
\begin{equation*}
\tilde{U}_{k^{\prime}}
\leq (c/2) \eta_{k^{\prime}},
\end{equation*}
and
\begin{equation*}
\G_{k^{\prime}} \subseteq \left\{ h \in \C : \er(h) - \er(\agtarget) \leq \tilde{U}_{k^{\prime}} \right\}.
\end{equation*}
In particular, these conditions are trivially satisfied for $k=0$, so this may serve as a base case for this inductive argument.
Next we must extend these conditions to $k+1$.

For each $h \in \G_{k}$, define $h_{R_{k}}(x) = h(x)\ind[x \in R_{k}] + \agtarget(x)\ind[x \notin R_{k}]$,
and denote $\H_{k} = \{ h_{R_{k}} : h \in \G_{k} \}$.
Noting that $R_{k} \supseteq \DIS(\H_{k})$, and that this implies $U\!\left(\H_{k},2^{k},\conf_{k}; R_{k}\right) \geq U\!\left(\H_{k},2^{k},\conf_{k}; \DIS(\H_{k})\right)$,
Lemma~\ref{lem:gk-envelopes} (applied under the conditional distribution given $\G_{k}$) and the law of total probability 
imply that there exists an event $E_{k+1}^{\prime}$ of probability at least $1-\conf_{k}$,
on which, $\forall h_{R_{k}} \in \H_{k}$, denoting $\tilde{\L}_{k} = \{(X_{i},Y_{i}) : 2^{k}+1 \leq i \leq 2^{k+1}\}$ (which is distributionally 
equivalent to $\L_{2^{k}}$ but independent of $\G_{k}$), 
\begin{align*}
\er(h_{R_{k}}) - \!\inf_{g_{R_{k}} \in \H_{k}} \er(g_{R_{k}}) & \leq \max\!\left\{ 2 \left( \er_{\tilde{\L}_{k}}\!(h_{R_{k}}) - \!\!\min_{g_{R_{k}} \in \H_{k}} \!\er_{\tilde{\L}_{k}}\!(g_{R_{k}}) \right), U\!\left(\H_{k},2^{k},\conf_{k};R_{k}\right) \right\}, \\
\er_{\tilde{\L}_{k}}\!(h_{R_{k}}) - \!\min_{g_{R_{k}} \in \H_{k}} \!\er_{\tilde{\L}_{k}}\!(g_{R_{k}}) & \leq \max\left\{ 2 \left( \er(h_{R_{k}}) - \!\inf_{g_{R_{k}} \in \H_{k}} \!\er(g_{R_{k}}) \right), U(\H_{k},2^{k},\conf_{k};R_{k}) \right\}.
\end{align*}

First we note that, since every $h_{R_{k}}$ and $g_{R_{k}}$ in $\H_{k}$ agree on the labels of all samples in $\tilde{\L}_{k} \setminus D_{k}$,
and they each agree with their respective classifiers $h$ and $g$ in $\G_{k}$ on $D_{k}$,
we have that
\begin{equation*}
\er_{\tilde{\L}_{k}}(h_{R_{k}}) - \min_{g_{R_{k}} \in \H_{k}} \er_{\tilde{\L}_{k}}(g_{R_{k}}) = 2^{-k}|D_{k}|\left( \er_{D_{k}}(h) - \min_{g \in \G_{k}} \er_{D_{k}}(g) \right).
\end{equation*}

Next, let $\zeta_{k}$ and $\xi_{k}$ denote the functions $\zeta$ and $\xi$ from the definition of $\Phi_{\{0,1\}}(\G_{k},\eta_{k})$
at the solution with $\gamma$ equal $\gamma_{k}$.  Note that $\zeta_{k}$ and $\xi_{k}$ are themselves random, but are competely determined by $\G_{k}$.
The definition of $R_{k}$ guarantees that for every $h,g \in \G_{k}$, for $X \sim \Px$ (independent from $\L_{m}$)
\begin{align*}
& \Px( x \notin R_{k} : h(x) \neq g(x) )
= \E\left[ \ind[h(X) \neq g(X)] ( \zeta_{k}(X) + \xi_{k}(X) ) \middle| \G_{k} \right]
\\ & = \E\left[ \left( \ind[h(X) = +1]\ind[g(X) = -1] + \ind[h(X)=-1]\ind[g(X)=+1] \right) (\zeta_{k}(X) + \xi_{k}(X)) \middle| \G_{k} \right]
\\ & \leq \E\left[ \ind[h(X)=+1]\zeta_{k}(X) + \ind[h(X)=-1]\xi_{k}(X) \middle| \G_{k} \right] 
\\ & {\hskip 8mm} + \E\left[ \ind[g(X)=+1]\zeta_{k}(X) + \ind[g(X)=-1]\xi_{k}(X) \middle| \G_{k} \right]
\leq 2\eta_{k}.
\end{align*}
Therefore, 
\begin{equation*}
\er(h_{R_{k}}) - \er(g_{R_{k}}) 
\leq \er(h) - \er(g) + \Px( x \notin R_{k} : h(x) \neq g(x) ) 
\leq \er(h) - \er(g) + 2\eta_{k},
\end{equation*}
and similarly 
\begin{equation*}
\er(h_{R_{k}}) - \er(g_{R_{k}}) 
\geq \er(h) - \er(g) - \Px( x \notin R_{k} : h(x) \neq g(x) )
\geq \er(h) - \er(g) - 2\eta_{k}.
\end{equation*}
In particular, noting that 
$\er(h_{R_{k}}) - \inf_{g_{R_{k}} \in \H_{k}} \er(g_{R_{k}})
= \sup_{g \in \G_{k}} \er(h_{R_{k}}) - \er(g_{R_{k}})$
and
$\sup_{g \in \G_{k}} \er(h) - \er(g) 
= \er(h) - \inf_{g \in \G_{k}} \er(g)$,
this implies 
\begin{equation*}
\er(h) - \inf_{g \in \G_{k}} \er(g) - 2\eta_{k}
\leq \er(h_{R_{k}}) - \inf_{g_{R_{k}} \in \H_{k}} \er(g_{R_{k}})
\leq \er(h) - \inf_{g \in \G_{k}} \er(g) + 2\eta_{k}.
\end{equation*}
We also note that $\vc(\H_{k}) \leq \vc(\G_{k})$ and $\diam_{\Px}(\H_{k}) \leq \diam_{\Px}(\G_{k})$,
which together imply $U(\H_{k},2^{k},\conf_{k};R_{k}) \leq U(\G_{k},2^{k},\conf_{k};R_{k})$.
Altogether, we have that on $E_{k+1}^{\prime}$, $\forall h \in \G_{k}$, 
\begin{align*}
& \er(h) - \inf_{g \in \G_{k}} \er(g) \leq 2\eta_{k} + \max\left\{ 2^{1-k} |D_{k}| \left( \er_{D_{k}}(h) - \min_{g \in \G_{k}} er_{D_{k}}(g) \right), U(\G_{k},2^{k},\conf_{k};R_{k}) \right\}, \\
& 2^{-k} |D_{k}| \left( \er_{D_{k}}\!(h) - \min_{g \in \G_{k}} \er_{D_{k}}\!(g) \right) \leq \max\!\left\{ 2 \left( \er(h) - \inf_{g \in \G_{k}} \er(g) + 2\eta_{k} \right), U(\G_{k},2^{k},\conf_{k};R_{k}) \right\}.
\end{align*}

In particular, defining $E_{k+1} = E_{k+1}^{\prime} \cap E_{k}$, we have that on $E_{k+1}$, $\agtarget \in \G_{k}$, and
\begin{equation*}
2^{-k} |D_{k}| \left( \er_{D_{k}}(\agtarget) - \min_{g \in \G_{k}} \er_{D_{k}}(g) \right)
\leq \max\left\{ 4\eta_{k}, U(\G_{k},2^{k},\conf_{k};R_{k}) \right\},
\end{equation*}
so that $\agtarget \in \G_{k+1}$ as well.
Furthermore, combined with the definition of $\G_{k+1}$, this further implies that on $E_{k+1}$, 
\begin{align*}
\G_{k+1} & \subseteq \left\{ h \in \C : \er(h) - \er(\agtarget) \leq 2\eta_{k} + \max\left\{ 8\eta_{k}, 2 U\left(\G_{k},2^{k},\conf_{k};R_{k}\right) \right\} \right\}
\\ & = \left\{ h \in \C : \er(h) - \er(\agtarget) \leq \tilde{U}_{k+1} \right\}.
\end{align*}

It remains only to establish the bound on $\tilde{U}_{k+1}$.
For this, we first note that,
combining the inductive hypothesis with the $(\tsybca,\tsyba)$-Bernstein class condition, 
on $E_{k+1}$ we have 
\begin{equation*}
\G_{k} \subseteq \Ball\left(\agtarget, \tsybca \tilde{U}_{k}^{\tsyba}\right)
\subseteq \Ball\left( \agtarget, r_{k} \right).
\end{equation*}
Combining this with Lemma~\ref{lem:zc-binary} and monotonicity of $\Phi(\cdot,\eta_{k}/2)$, we have that
\begin{equation*}
\Px(R_{k}) \leq 2 \Phi\left( \Ball\left( \agtarget, r_{k} \right), \eta_{k}/2 \right) 
= 2 \Phi\left( \Ball\left( \agtarget, r_{k} \right), (r_{k} / \tsybca)^{1/\tsyba} / c \right)
\leq 2 \maxzc_{\tsybca,\tsyba}(r_{k}) r_{k}.
\end{equation*}
The above also implies that $\diam_{\Px}(\G_{k}) \leq 2 r_{k}$ on $E_{k+1}$.
Together with the fact that $\vc(\G_{k}) \leq \dim$, we have that on $E_{k+1}$,
\begin{multline}
\label{eqn:zc-noise-U-bound}
U( \G_{k},2^{k},\conf_{k};R_{k} )
\leq c_{0} \sqrt{2r_{k} 2^{-k} \left( \dim \Log\left(\maxzc_{\tsybca,\tsyba}(r_{k})\right) + \Log\left(\frac{1}{\conf_{k}}\right) \right)} 
\\ + c_{0} 2^{-k} \left( \dim \Log\left(\maxzc_{\tsybca,\tsyba}(r_{k})\right) + \Log\left(\frac{1}{\conf_{k}}\right) \right).
\end{multline}
Furthermore, monotonicity of $\maxzc_{\tsybca,\tsyba}(\cdot)$ implies $\maxzc_{\tsybca,\tsyba}(r_{k}) \leq \maxzc_{\tsybca,\tsyba}\left( \tsybca ( \tsybca \dim 2^{-k} )^{\frac{\tsyba}{2-\tsyba}} \right)$.
Plugging the definition of $r_{k}$ into \eqref{eqn:zc-noise-U-bound} along with this relaxation of $\maxzc_{\tsybca,\tsyba}(r_{k})$ and simplifying,
the minimum of $1$ and the right hand side of \eqref{eqn:zc-noise-U-bound} is at most
\begin{multline*}
8 c_{0} \sqrt{c_{1}} \left( \tsybca 2^{-k} \left( \dim \Log\left( \maxzc_{\tsybca,\tsyba}\left( \tsybca \left(\tsybca \dim 2^{-k}\right)^{\frac{\tsyba}{2-\tsyba}} \right) \right) + \Log\left(\frac{1}{\conf_{k}}\right) \right) \right)^{\frac{1}{2-\tsyba}}
\\ =
8 c_{0} \sqrt{c_{1}} \left(\frac{r_{k+1}}{c_{1} \tsybca} \right)^{1/\tsyba}
= \frac{4 c_{0} c}{c_{1}^{\frac{2-\tsyba}{2\tsyba}}} \eta_{k+1}
= \frac{c}{8} \eta_{k+1}.
\end{multline*}
We may also observe that
\begin{equation*}
\eta_{k} \leq 4^{\frac{1}{2-\tsyba}} \eta_{k+1} \leq 4 \eta_{k+1}.
\end{equation*}
Combining the above with the definition of $\tilde{U}_{k+1}$, we have that on $E_{k+1}$, 
\begin{equation*}
\tilde{U}_{k+1} \leq 8\eta_{k+1} + \max\left\{ 32\eta_{k+1}, \frac{c}{4} \eta_{k+1} \right\}
= 40 \eta_{k+1} \leq 64 \eta_{k+1} = \frac{c}{2} \eta_{k+1}.
\end{equation*}
Finally, noting that the union bound implies $E_{k+1}$ has probability at least $1 - \sum_{k^{\prime}=0}^{k} \conf_{k^{\prime}}$ completes the inductive step.

By the principle of induction, we have thus established that, on an event $E_{\lfloor \log_{2}(m) \rfloor}$ of probability at least 
$1 - \sum_{k=0}^{\lfloor \log_{2}(m) \rfloor - 1} \conf_{k} > 1 - \conf \sum_{i=2}^{\infty} \frac{1}{i^{2}} > 1 - \conf$,
\begin{equation*}
\agtarget \in \G_{\lfloor \log_{2}(m) \rfloor} \subseteq \left\{ h \in \C : \er(h) - \er(\agtarget) \leq \frac{c}{2} \eta_{\lfloor \log_{2}(m) \rfloor} \right\}.
\end{equation*}
In particular, this implies that $\hat{h}$ exists in Step 6, and satisfies $\er(\hat{h}) - \inf_{g \in \C} \er(g) = \er(\hat{h}) - \er(\agtarget) \leq \frac{c}{2} \eta_{\lfloor \log_{2}(m) \rfloor}$.
Noting that
\begin{align*}
\frac{c}{2} \eta_{\lfloor \log_{2}(m) \rfloor}
& \leq c_{1}^{1/\tsyba} \left( \frac{4 \tsybca \left( \dim \Log\left( \maxzc_{\tsybca,\tsyba}\left( \tsybca \left( \frac{\tsybca \dim}{m} \right)^{\frac{\tsyba}{2-\tsyba}} \right) \right) + \Log\left( \frac{4}{\conf}\right) \right)}{m} \right)^{\frac{1}{2-\tsyba}}
\\ & \leq 6 (32 c_{0})^{2} \left( \frac{\tsybca \left( \dim \Log\left( \maxzc_{\tsybca,\tsyba}\left( \tsybca \left( \frac{\tsybca \dim}{m} \right)^{\frac{\tsyba}{2-\tsyba}} \right) \right) + \Log\left( \frac{1}{\conf}\right) \right)}{m} \right)^{\frac{1}{2-\tsyba}}
\end{align*}
completes the proof.
\end{proof}

\bibliography{learning}

\end{document}